%% file: main.tex
\documentclass[twoside,11pt]{article}

\usepackage[preprint]{jmlr2e}

\input{preamble}
\input{macros}

\usepackage{lastpage}
\jmlrheading{0}{2026}{1-\pageref{LastPage}}{7/26}{}{}{Vishal Rajput}

\ShortHeadings{Matching Principle: Covering Deployment Shift}{Rajput}
\firstpageno{1}

\begin{document}

\title{The Matching Principle:\\
When Does a Training Penalty Cover Deployment Shift?}

\author{\name Vishal Rajput \email vishal.stark42@gmail.com \\
       \addr Department of Computer Science \\
       KU Leuven \\
       Celestijnenlaan 200A, 3001 Leuven, Belgium}

\editor{}

\maketitle

\begin{abstract}%
\input{sections/00_abstract_jmlr}
\end{abstract}

\begin{keywords}
  domain adaptation, robustness, representation learning, Jacobian regularization,
  label-preserving nuisance
\end{keywords}

\input{sections/01_introduction}
\input{sections/05b_related_work}
\input{sections/02_setup}
\input{sections/04_core_matching}
\input{sections/05_estimators}
\input{sections/06_recipe}
\input{sections/08_empirical}
\input{sections/09_discussion}
\input{sections/10_conclusion}

\acks{The author thanks colleagues at KU Leuven for feedback on early drafts of this framework.
\textbf{Disclosure of funding:} No third-party funding supported the writing of this manuscript.
\textbf{Competing interests:} None declared.
This preprint is a single-PDF merge of the JMLR manuscript and its Online Appendix~1
(protocols and supporting proofs). A journal version is under submission at JMLR.
Companion note: arXiv:2604.21395.}

\appendix
\input{appendix/proofs_main}
\input{appendix/conditional_licenses}

\input{appendix/task_supplement}

\input{appendix/proofs}

\bibliography{bibliography}

\end{document}

%% file: preamble.tex
\def\ManuscriptRoot{.}%
\usepackage[utf8]{inputenc}
\usepackage{cmap}
\usepackage{times}
\usepackage{amsmath,amssymb,amsfonts}
\usepackage{booktabs}
\usepackage{array}
\usepackage{tabularx}
\usepackage{graphicx}
\graphicspath{{./}{figures/main/}{figures/tasks/}{figures/tasks/real/}}
\usepackage{placeins}
\usepackage{float}
\usepackage[protrusion=true,expansion=true]{microtype}
\usepackage{enumitem}
\usepackage{xcolor}
\usepackage{subcaption}
\usepackage{multirow}
\usepackage{pgfplots}
\pgfplotsset{compat=1.18}
\usepackage{tikz}
\usetikzlibrary{arrows.meta,positioning,fit,backgrounds,decorations.pathreplacing,calc}
\usepackage{tcolorbox}
\tcbuselibrary{breakable,skins}
\usepackage{listings}
\lstdefinestyle{pythonsketch}{
  language=Python,
  basicstyle=\ttfamily\small,
  keywordstyle=\bfseries\color{blue!60!black},
  stringstyle=\color{red!55!black},
  commentstyle=\itshape\color{black!55},
  showstringspaces=false,
  columns=fullflexible,
  keepspaces=true,
  breaklines=true,
  xleftmargin=0.4em,
  aboveskip=2pt, belowskip=2pt,
}
\input{pmh_style}

\tcbset{
  recipebox/.style={
    colback=blue!3, colframe=blue!55!black, boxrule=0.6pt,
    arc=2pt, left=8pt, right=8pt, top=6pt, bottom=6pt,
    title={#1}, fonttitle=\bfseries,
    coltitle=white, colbacktitle=blue!55!black,
    sharp corners=northeast, sharp corners=northwest,
  },
  keytake/.style={
    colback=orange!4, colframe=orange!60!black, boxrule=0.5pt,
    arc=2pt, left=8pt, right=8pt, top=5pt, bottom=5pt,
    fonttitle=\bfseries,
  },
  codebox/.style={
    colback=blue!2, colframe=blue!55!black, boxrule=0.5pt, arc=2pt,
    left=6pt, right=6pt, top=4pt, bottom=4pt,
    fonttitle=\bfseries, coltitle=white, colbacktitle=blue!55!black,
  },
}

\newcommand{\SubmissionFig}[2][width=\linewidth]{%
  \centering
  \includegraphics[#1,keepaspectratio,interpolate=false]{#2}%
}
\newcommand{\SubmissionFigCompact}[2][width=\linewidth]{%
  \centering
  \includegraphics[#1,keepaspectratio,interpolate=false]{#2}%
}

\usepackage{xurl}
\hypersetup{hidelinks}

%% file: pmh_style.tex
\definecolor{pmhMatched}{RGB}{46,125,50}
\definecolor{pmhIso}{RGB}{25,118,210}
\definecolor{pmhWrong}{RGB}{198,40,40}
\definecolor{pmhSignal}{RGB}{245,124,0}
\definecolor{pmhGray}{RGB}{120,120,120}
\definecolor{pmhSpectrum}{RGB}{155,182,198}
\definecolor{pmhTheoryOpt}{RGB}{122,62,182}
\definecolor{pmhSignalBand}{RGB}{255,240,230}

\pgfplotsset{
  pmhbar/.style={
    ybar,
    bar width=8pt,
    enlarge x limits=0.15,
    ymin=0,
    legend style={font=\small,at={(0.5,1.05)},anchor=south,legend columns=-1},
    tick label style={font=\small},
    label style={font=\small},
  }
}

%% file: macros.tex
\newcommand{\Sigmatask}{\Sigma_{\mathrm{task}}}

\newcommand{\PMH}{\textnormal{\textsc{pmh}}}
\newcommand{\MatchingPmhRepoUrl}{https://github.com/vishalstark512/matching-pmh}
\newcommand{\MatchingPmhRepo}{\url{\MatchingPmhRepoUrl}}
\newcommand{\MatchingPmhPackage}{The \texttt{matching-pmh} package}
\newcommand{\MatchingPmhInstall}{; \texttt{pip install matching-pmh}}
\newcommand{\JsonPath}[1]{\path{#1}}

\newcommand{\TDI}{\textnormal{\textsc{tdi}}}

\newcommand{\rangeop}{\mathrm{range}}
\newcommand{\Tr}{\mathrm{Tr}}
\newcommand{\E}{\mathbb{E}}
\newcommand{\Cov}{\mathrm{Cov}}

\newcommand{\dd}{\,\mathrm{d}}

\newtheorem{assumption}[theorem]{Assumption}

%% file: sections/00_abstract_jmlr.tex
Ordinary training optimises the task loss and then stops.
It never pays for internal representation energy: Jacobians can stay large in
directions that never helped the label, so even small label-preserving noise
throws the model off---a design gap that classical noise-injection theory fixes
at second order, but only when applied as default regularisation, which current
practice does not do.
We make that precise with a Matching Principle: name deployment directions
($\Sigmatask$) and the training penalty $\Sigma'$, and ask whether the second
covers the first.
The no-thinking default is even-spread / isotropic penalty ($\Sigma'\propto I$)---classical
Gaussian / Tikhonov at second order: no axis estimate, no architecture change,
and---in a simple linear ridge model---strictly less deployment drift than
task-only training, with no coverage miss by construction.
When axes are known, matching is sharper; when they are missed, a residual floor
remains.
Across seven domains a named second-moment penalty beats unregularised training;
a controlled illustration recovers match $\succ$ even-spread $\succ$ wrong-axis
when axes are forced.
The ridge theorems are proved; deep nets remain experiments under a specified
perturbation.
Design rule: fix internal energy by default (even-spread); match when axes are
known; treat losses that control representation sensitivity as first-class design.

%% file: sections/01_introduction.tex
\section{Introduction}
\label{sec:intro}

Ordinary supervised training cares about one thing: the task loss.
Once that loss is small, training is done.
Nothing in that objective pays for \emph{internal} representation energy---how far the
embedding moves when the input is nudged---so Jacobians often stay large in many directions
and mild label-preserving noise can move the head with them.
Models look ``broken'' under small shifts not because deep nets cannot be robust, but because
the loss never asked for it.
Robustness methods then bolt on an attack, an augmentation, or a domain moment along
\emph{some} directions, without asking whether those are the ones that will move at
deployment.

This paper names that gap---classical noise injection already fixes it at second order
\citep{bishop1995training}, but practice does not treat that fix as default
regularisation---and
starts from the easiest application of it.
Bishop / Tikhonov give the second-order expansion; what we treat as first-class is the
\emph{cover} question, the default when axes are unknown, and the residual floor when they
are missed.
Name deployment motion ($\Sigmatask$) and the training penalty ($\Sigma'$); ask whether the
second \emph{covers} the first (the Matching Principle).
Even-spread ($\Sigma'\propto I$)---isotropic training penalty; classical Gaussian / Tikhonov
at second order---needs no
axis estimate and no architecture change: in the linear--quadratic response it strictly
reduces deployment drift versus task-only training and drives the floor to zero for every
nonzero nuisance (Corollary~\ref{cor:even-spread-vs-erm}).
Matching known axes comes after; the larger invitation is to treat losses and architectures
that control representation energy as first-class design.

\paragraph{A picture.}
\label{sec:toy-2d}
Imagine points in the plane whose labels depend only on the vertical coordinate
(Figure~\ref{fig:pipeline}).
At deployment, noise is horizontal and the label stays fixed.
Task-only training never pays for horizontal sensitivity, so that axis stays live.
Even-spread shrinks every axis---including the horizontal one---and the residual floor goes
away.
Matched training spends the same budget on the true axis and shrinks it faster.
Wrong-axis training shrinks the vertical direction instead: no larger weight removes the
horizontal floor.
The same cartoon applies if the ``horizontal'' axis is a frequency band, a paired domain
shift, or an adversarial subspace found by an attack algorithm.

\begin{figure}[!ht]
\centering
\resizebox{0.96\linewidth}{!}{\input{figures/main/fig01_pipeline}}
\caption{\textbf{Task-only training leaves the nuisance live; even-spread / isotropic is the default fix.}
Same horizontal nuisance: no-penalty task-only training (left) stays sensitive along it; even-spread /
isotropic (centre) shrinks every axis including the nuisance; a wrong-axis penalty (right) shrinks an
orthogonal direction and leaves a floor.
Matched training (not drawn) spends the same budget on the true axis and is sharper when
those axes are known.}
\label{fig:pipeline}
\end{figure}

\paragraph{The object.}
Under \emph{label-preserving} shift we summarise the nuisance by
$\Sigmatask=\E[\Delta\Delta^\top]$, the average outer product of label-preserving input
changes $\Delta$.
Large eigenvalues of $\Sigmatask$ are directions where deployment moves a lot.
Domain adaptation, augmentation, virtual adversarial training, and adversarial training
all penalise related second-moment geometry
\citep{sun2016coral,madry2018towards,miyato2018virtual}
(Table~\ref{tab:methods-as-estimators})---typically along one law at a time.
The design question is whether that law covers $\rangeop(\Sigmatask)$
\citep{hendrycks2019benchmarking,tsipras2019robustness,koh2021wilds}; when it is unknown,
even-spread is the cover that does not require an answer.

\paragraph{Contributions.}
\begin{itemize}[leftmargin=1.4em,itemsep=2pt plus 0pt minus 0pt,parsep=0pt,topsep=2pt plus 0pt minus 0pt]
  \item \textbf{Internal energy / even-spread.}
    Task-only training never penalises representation energy; even-spread / isotropic does
    (Corollary~\ref{cor:even-spread-vs-erm}). Across seven domains \emph{some} named penalty
    beats none (\S\ref{sec:emp-tier1})---even-spread is the no-axis case, not every row.
  \item \textbf{Cover-or-floor.}
    Missing any deployment direction leaves a permanent floor as $\lambda\to\infty$;
    full-support penalties differ from matched only by a constant (\S\ref{sec:matching}).
  \item \textbf{When axes are known.}
    Oracle hard projection recovers match $\succ$ even-spread $\succ$ wrong-axis as a
    controlled illustration (\S\ref{sec:emp-tier2})---secondary to the default-regularisation
    claim.
  \item \textbf{Checklist.}
    Name the shift, the penalty, and the eigengap; report geometry beside the task score;
    default to even-spread when axes are unknown (\S\ref{sec:recipe}).
\end{itemize}
Coverage theorems are proved in a ridge/linear setting as $\lambda\to\infty$.
Deep-net blocks are experiments under a specified perturbation, not a completed nonlinear
proof
(item~(O) in Table~\ref{tab:open} remains open).
Even-spread does not wait on that proof; estimating real axes remains largely open
(\S\ref{sec:estimation-gap}).

\begin{tcolorbox}[keytake,title={Vocabulary used below},breakable]
\label{box:glossary}
\begin{description}[leftmargin=1.15em,style=nextline,itemsep=1pt plus 0pt minus 0pt,parsep=0pt,topsep=0pt,font=\small]
  \item[Task-only training]
    Optimise the labelled task loss alone---no second-moment / Jacobian regulariser on
    representation energy (the unregularised baseline B0 in experiments).
  \item[$\Sigmatask$ / $\Sigma'$ / cover]
    Deployment second moment vs.\ training penalty; cover means
    $\rangeop(\Sigma')\supseteq\rangeop(\Sigmatask)$.
  \item[Even-spread / isotropic / matched / wrong]
    \emph{Even-spread} / \emph{isotropic} $=$ training penalty $\Sigma'\propto cI$ (default
    when axes are unknown); \emph{matched} penalises the true axes; \emph{wrong} penalises an
    equal-dimension miss.
    (If deployment itself is isotropic, $\Sigmatask\propto I$, matched and even-spread
    coincide.)
  \item[Hard projection / law status]
    Oracle freeze of chosen axes (coverage test, not the trainer);
    \emph{identified} / \emph{algorithmic} / \emph{proxy-only} naming of $\Sigmatask$.
  \item[\PMH{}]
    Task loss plus $\lambda$ times Jacobian energy along $\Sigma'$ (\S\ref{sec:setup}).
\end{description}
\end{tcolorbox}

\paragraph{Roadmap.}
\S\ref{sec:related-work} situates the lineage;
\S\ref{sec:setup} defines the objects;
\S\ref{sec:matching} proves even-spread-vs-task-only and cover-or-floor;
\S\ref{sec:estimators} maps familiar methods;
\S\ref{sec:recipe} gives the checklist;
\S\ref{sec:empirical} reports default-regularisation breadth first, a short controlled
illustration, then mixed
field comparisons (not a single-metric contest).
Conditional licenses: Appendix~\ref{app:conditional-licenses}.

%% file: figures/main/fig01_pipeline.tex
\begin{tikzpicture}[
  every node/.style={font=\small},
  axisstyle/.style={-{Stealth[length=1.5mm]}, pmhGray, thin},
  perturb/.style={-{Stealth[length=1.8mm]}, pmhWrong, thick, line width=0.9pt},
  shadow/.style={dashed, pmhGray!80, thin},
  panelbox/.style={draw=pmhGray!35, rounded corners=1.5pt, line width=0.45pt}
]

  \foreach \xoff/\ea/\eb/\rot/\px/\py/\title/\col in {
    0/0.88/0.88/0/2.15/0/{Task-only (no penalty)}/pmhGray,
    5.1/0.42/0.42/0/1.55/0/{Even-spread / iso}/pmhIso,
    10.2/0.98/0.72/42/1.65/0/{Wrong-axis}/pmhWrong%
  }{
    \begin{scope}[xshift=\xoff cm]
      \node[panelbox, minimum width=4.6cm, minimum height=3.2cm, anchor=south west] at (-2.15,-1.75) {};
      \draw[axisstyle] (-1.7,0) -- (2.0,0) node[right, font=\scriptsize, pmhGray]{nuisance};
      \draw[axisstyle] (0,-1.35) -- (0,1.35) node[above, font=\scriptsize, pmhGray]{signal};
      \begin{scope}[rotate=\rot]
        \draw[draw=\col, fill=\col, thick, fill opacity=0.14, line width=0.9pt] (0,0) ellipse ({\ea} and {\eb});
      \end{scope}
      \filldraw[black] (0,0) circle (1.3pt);
      \draw[perturb] (0.05,0.12) -- (1.95,0.12);
      \filldraw[pmhWrong] (\px,\py) circle (1.3pt);
      \draw[shadow] (0,0) -- (\px,\py);
      \node[anchor=north, font=\footnotesize\bfseries] at (0.15,-1.95) {{\color{\col}\title}};
    \end{scope}
  }

  \node[font=\scriptsize, pmhGray, anchor=north] at (5.1,-2.35)
    {Same deployment perturbation; Jacobian sensitivity ellipses show where $\|Jv\|$ is large.};

\end{tikzpicture}

%% file: sections/05b_related_work.tex
\section{Related work}
\label{sec:related-work}
\label{sec:prior-unifications}
\label{sec:prior-chargers}
\label{sec:prior-methods}
\label{sec:classical-chargers}

Robustness and adaptation already attack the same geometric idea: match feature moments
\citep{sun2016coral,tzeng2014ddc}, confuse a domain discriminator~\citep{ganin2015domain,ben2010theory},
or shrink local sensitivity by augmentation, VAT, or adversarial training
\citep{hendrycks2019benchmarking,miyato2018virtual,madry2018towards,koh2021wilds}.
Surveys catalogue the recipes~\citep{csurka2017domain,wilson2020survey}; they rarely ask
whether the penalised object covers the directions that move at deployment
\citep{tsipras2019robustness}.
Table~\ref{tab:methods-as-estimators} is a diagnostic map of that question.

\paragraph{Classical second-order lineage.}
That these recipes are related at second order is not new---and the easiest penalty is already
classical: additive Gaussian noise is Tikhonov / ridge
\citep{bishop1995training,an1996effects,hoerl1970ridge}, i.e.\ even-spread / isotropic
penalty at second order,
with no need to name deployment axes.
Wager et al.\ give the dropout form~\citep{wager2013dropout}.
TangentProp~\citep{simard1991tangent} already penalises sensitivity along chosen
label-preserving tangents.
Later unifications connect adversarial gradient penalties and data augmentation to the same
second-order picture
\citep{lyu2015unified,simongabriel2019first,finlay2020scaleable,dao2019kernel}.
Dao et al.\ link that expansion to invariant kernels, tangent propagation, and robust
optimisation.
Across noise, dropout, known invariances, augmentation, and adversarial training, the field
already regularises related second-order objects---typically along one chosen law.

\paragraph{What remains first-class here.}
Prior work establishes the second-order expansion and regularising a known law; we do not
re-derive Tikhonov.
What we treat as first-class is the cover question, the no-axis default
(even-spread / isotropic; Corollary~\ref{cor:even-spread-vs-erm}), and cover-or-floor when
$\Sigma'\neq\Sigmatask$ (Theorem~\ref{thm:game}, Proposition~\ref{prop:local-obstruction},
Corollary~\ref{cor:nfl-noise}).
Default-regularisation breadth comes first; controlled illustrations and field head-to-heads
are secondary
(\S\ref{sec:empirical}).
Label / spurious change~\citep{arjovsky2019irm,sagawa2020groupdro} and preference / style
objectives~\citep{rafailov2023dpo} are out of scope.

%% file: sections/02_setup.tex
\section{Setup and definitions}
\label{sec:setup}

We study shifts where the \emph{label stays fixed} while the \emph{input moves}---a stain
change that should not alter a pathology call, or sensor noise that should not alter an
activity class.
Two matrices matter.
$\Sigmatask$ summarises the directions in which deployment actually moves the input.
$\Sigma'$ is the matrix training uses when it penalises the encoder's sensitivity
(the penalty matrix from \S\ref{sec:intro}).
The rest of this section names the predictor, defines those two matrices, and writes the
loss that connects them.

\paragraph{Predictor.}
An input $x\in\mathcal{X}=\mathbb{R}^{d_x}$ is mapped by an encoder
$\phi_\theta:\mathcal{X}\to\mathcal{Z}=\mathbb{R}^{d_\phi}$ (differentiable almost everywhere)
to a representation, then by a head $h_\theta:\mathcal{Z}\to\mathcal{Y}'$ to a prediction.
Write $f_\theta=h_\theta\circ\phi_\theta$ and let $\mathcal{L}_{\mathrm{task}}$ be the usual
task loss.
For the theory we take $h_\theta$ to be $L$-Lipschitz.%
\footnote{Rescaling $\phi\mapsto c\phi$ and $h\mapsto h/c$ leaves $f$ unchanged while scaling
Jacobian energy by $c^2$; we pin that gauge in Proposition~\ref{prop:gauge}.}

\paragraph{Deployment object.}
In the Figure~\ref{fig:pipeline} cartoon, deployment moves horizontally while the label
stays fixed: that horizontal axis is what $\Sigmatask$ will record.
At deployment the input can change while the label does not.
Formally, a coupling $(X,Y,X^+,Y^+)$ is \emph{structurally label-preserving} when
$Y^+=Y$ almost surely
(Appendix~\ref{app:proofs}, Definition~\ref{def:label-preserving}).
Write $\Delta:=X^+-X$ for that input change.
In the additive, input-independent specialisation used by the matrix theory,
$\Delta\perp X$ and
\begin{equation}
\Sigmatask := \E[\Delta\Delta^\top]
\;\in\;\mathbb{S}^{d_x}_{\geq 0}
\end{equation}
is finite (equals $\Cov(\Delta)$ when $\E[\Delta]=0$).
Large eigenvalues of $\Sigmatask$ are directions where deployment moves a lot; the
\emph{range} of $\Sigmatask$ is the span of those directions---the set that training must
cover.
Shifts that change the label with positive probability are out of scope.

\paragraph{Four roles (do not collapse them).}
This paper does not give a general recipe for discovering $\Sigmatask$ from data.
Beside each experiment we only report law status (glossary): \emph{identified},
\emph{algorithmic}, or \emph{proxy-only}.
Keep four objects distinct:
\begin{itemize}[leftmargin=1.4em,itemsep=1pt plus 0pt minus 0pt,parsep=0pt,topsep=2pt plus 0pt minus 0pt]
  \item $\Sigmatask$ --- where deployment truly moves;
  \item $\hat\Sigmatask$ --- whatever estimate or proxy an experiment uses;
  \item $\Sigma'$ --- what training actually penalises;
  \item $D_Q$ --- how much the representation moves under a named nudge (next paragraph).
\end{itemize}
Training is \emph{matched} only when the stated coupling identifies $\Sigmatask$ and
$\Sigma'=\hat\Sigmatask$.

\paragraph{Drift.}
Drift asks a simple question: if we nudge the input the way deployment does, how far does
the representation move?
\begin{equation}
D_Q(\phi) = \E_{x\sim P_X,\,n\sim Q_n}\!\left[\|\phi(x+n)-\phi(x)\|_2^2\right].
\end{equation}
For theory we use the first-order version
\begin{equation}
\tilde D_Q(\phi)
= \E_x\!\left[\Tr\!\left(J_\phi(x)^\top J_\phi(x)\,\Sigmatask\right)\right],
\label{eq:drift-linear}
\end{equation}
with remainder controlled by Lemma~\ref{lem:lindrift}.
The trace is Jacobian energy reweighted by $\Sigmatask$: directions with large deployment
mass cost more.
Experiments report $D_Q$ where stated; theorems use $\tilde D_Q$.

\paragraph{Why drift matters.}
If the representation moves under a label-preserving nudge, any Lipschitz head on top of
it can feel that move.
That is the reason we treat embedding drift as the object to control.

\begin{theorem}[Consumer-class reading of embedding drift]
\label{thm:consumer-class}
Fix an $L$-Lipschitz class of downstream maps $h:\mathcal{Z}\to\mathbb{R}$.
For any encoder $\phi$ and displacement law $Q_n$,
\[
\sup_{\mathrm{Lip}(h)\le L}
\E\bigl|h(\phi(x+n))-h(\phi(x))\bigr|^2
\;\le\;
L^2\,D_Q(\phi).
\]
\end{theorem}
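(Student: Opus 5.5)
The plan is a pointwise Lipschitz bound followed by integration and a supremum over the class. Fix any $h$ with $\mathrm{Lip}(h)\le L$, where the Lipschitz constant is taken with respect to the Euclidean norm on $\mathcal{Z}=\mathbb{R}^{d_\phi}$ (the same norm that defines $D_Q$). For every $x\in\mathcal{X}$ and every realisation $n$ of the displacement, the Lipschitz property gives
\[
\bigl|h(\phi(x+n))-h(\phi(x))\bigr| \le L\,\|\phi(x+n)-\phi(x)\|_2 .
\]
Both sides are nonnegative, so squaring preserves the inequality:
\[
\bigl|h(\phi(x+n))-h(\phi(x))\bigr|^2 \le L^2\|\phi(x+n)-\phi(x)\|_2^2 .
\]

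Next, take expectations over $x\sim P_X$ and $n\sim Q_n$, the same product law used in the definition of $D_Q(\phi)$. Monotonicity of the integral for nonnegative measurable functions then yields
\[
\E\bigl|h(\phi(x+n))-h(\phi(x))\bigr|^2 \le L^2 D_Q(\phi).
\]
Measurability of the composite needs one remark: $h$ is continuous because it is Lipschitz, and $\phi$ is taken measurable, as it must be for $D_Q$ to be defined. If $D_Q(\phi)=\infty$ the bound is trivial, so no integrability assumption is needed.

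Finally, the right-hand side does not depend on $h$. Taking the supremum over all $h$ with $\mathrm{Lip}(h)\le L$ therefore preserves the inequality, which proves the statement.

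The argument has no real obstacle. The only points needing care are bookkeeping. First, the expectation on the left must be over exactly the same joint law of $(x,n)$ as in $D_Q$. Second, the norm on $\mathcal{Z}$ in the Lipschitz condition must match the $\ell_2$ norm in $D_Q$; a different norm would introduce a norm-equivalence constant. I would add a remark that the bound is essentially tight at the level of the class. For scalar $\phi$ ($d_\phi=1$), $h(z)=Lz$ attains equality. In general, $h(z)=L\langle u,z\rangle$ for a fixed unit vector $u$ recovers the drift component along $u$. This does not attain equality for all of $D_Q$ in dimension $d_\phi>1$, but it shows the $L^2$ scaling cannot be improved.
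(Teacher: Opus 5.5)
Your proof is correct and is the paper's argument: apply the pointwise Lipschitz bound $|h(\phi(x+n))-h(\phi(x))|\le L\|\phi(x+n)-\phi(x)\|_2$, square it, integrate over the same law as $D_Q$, and take the supremum last, since the right-hand side does not depend on $h$. Your tightness remark via linear consumers $h(z)=L\langle u,z\rangle$ matches the paper's linearised lower bound $L^2\lambda_{\max}(\E[J_\phi\Sigmatask J_\phi^\top])$. The paper notes that this bound is attained with equality when that Gram has rank one, so your ``does not attain equality for $d_\phi>1$'' holds only generically.
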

\textit{Meaning.}
If the representation drifts, every Lipschitz head can feel it---so controlling $D_Q$ is
enough to control that consumer class.
(A matching lower bound of order $\lambda_{\max}(\E[J_\phi\Sigmatask J_\phi^\top])$ holds in
the linearised regime; see Appendix~\ref{app:proofs}.)

\paragraph{The matched Jacobian penalty.}
\label{par:pmh-loss}
We train with the task loss plus a penalty on Jacobian energy along a chosen matrix
$\Sigma'$---call this the matched Jacobian penalty when $\Sigma'=\Sigmatask$
(abbreviated \PMH{} in equations and tables):
\begin{equation}
\mathcal{L}_{\PMH(\Sigma')}(\theta)
= \mathcal{L}_{\mathrm{task}}(\theta)
+ \lambda\,\E_x\!\left[\Tr\!\left(J_\phi(x)^\top J_\phi(x)\,\Sigma'\right)\right],
\label{eq:pmh-family}
\end{equation}
for $\Sigma'\succeq 0$ and $\lambda>0$.
How the trace is estimated in code is deferred to \S\ref{sec:recipe}.
Coverage necessity is proved under a ridge/linear bridge
(Assumption~\ref{ass:bridge}, Appendix~\ref{app:conditional-licenses}); nonlinear
experiments are under a specified perturbation (named law status).
Proofs: Appendix~\ref{app:proofs}; estimator catalogue: Online Appendix~1.

\subsection{What ``nuisance law'' means in practice}

In each experiment we must say \emph{how the input is allowed to move} while the label
stays fixed---sensor noise, a paired domain change, an augmentation family, an adversarial
subspace, and so on.
That named displacement is the nuisance law: it fixes the nudge distribution $Q_n$ and
therefore the matrix $\Sigmatask=\E[\Delta\Delta^\top]$.
Online Appendix~1 spells out seven concrete families ($A_1$--$A_7$) under stated couplings.
In code we rarely know $\Sigmatask$ exactly, so we plug an estimate
$\hat\Sigma_{\mathrm{task}}^{(k)}$ into~\eqref{eq:pmh-family} as the penalty
$\Sigma'$---the matched template used across blocks.
If that estimate is wrong, or the spectrum has no gap so the top directions are not
identifiable (Office-31), calling the run ``matched'' does not guarantee a win: the failure
is in identifying $\Sigmatask$, not in the coverage claim itself.

%% file: sections/04_core_matching.tex
\section{Core matching principle}
\label{sec:matching}

This section turns the intro cartoon into claims in the design-rule order: matched penalty
tracks local drift; even-spread beats task-only training in the linear--quadratic response; a miss leaves
a floor; a healthy fixed probe does not prove coverage.
Proofs: Appendix~\ref{app:proofs}.
Licenses and the training-weight cap: Appendix~\ref{app:conditional-licenses}.

\subsection{Matched penalty $=$ local deployment drift}
\label{sec:deep-scope}

\textit{Intuition.}
Nudge the input by a small $\Delta$ (horizontal, in Figure~\ref{fig:pipeline}).
To leading order the representation moves by $J_\phi\Delta$.
Averaging the squared move recovers a weighted Jacobian energy---exactly the matched
Jacobian penalty~\eqref{eq:pmh-family} with $\Sigma'=\Sigmatask$.

\begin{theorem}[Deployment consistency of the matched Jacobian penalty]
\label{thm:deployment-consistency}
Let $\phi$ have $\beta$-Lipschitz Jacobian with $\sup_x\|J_\phi(x)\|_F\le M$, and let
$\Delta$ be a zero-mean deployment displacement with
$\Sigmatask=\E[\Delta\Delta^\top]$ and scale parameter $\sigma$ (e.g.\
$\Delta=\sigma Z$ with $\E\|Z\|_2=O(1)$ and $\E\|Z\|_2^3<\infty$), under the additive
specialisation of Lemma~\ref{lem:lindrift}.
Then
\[
\E\bigl\|\phi(x+\Delta)-\phi(x)\bigr\|_2^2
=\E_x\bigl[\Tr\!\bigl(J_\phi(x)^\top J_\phi(x)\,\Sigmatask\bigr)\bigr]
+O(\sigma^3),
\]
with the remainder controlled by that lemma ($O(\sigma^4)$ under its stronger symmetry
hypotheses).
Consequently, population matched~\eqref{eq:pmh-family} with
$\Sigma'=\Sigmatask$ is the second-order population shadow of \emph{deployment embedding
drift} under the true coupling~$Q$---not a proved expansion of task-only training risk
\citep{bishop1995training,an1996effects}.
\end{theorem}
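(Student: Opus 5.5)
The proof rests on a first-order Taylor expansion with an integral remainder, together with independence $\Delta\perp X$ from the additive specialisation. For fixed $x$ and displacement $\Delta$, write
\[
\phi(x+\Delta)-\phi(x)=J_\phi(x)\Delta+R(x,\Delta),\qquad
R(x,\Delta)=\int_0^1\bigl(J_\phi(x+t\Delta)-J_\phi(x)\bigr)\Delta\dd t .
\]
The $\beta$-Lipschitz Jacobian then gives the pointwise bound $\|R(x,\Delta)\|_2\le\tfrac{\beta}{2}\|\Delta\|_2^2$.

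Expanding the square gives three terms:
\[
\|\phi(x+\Delta)-\phi(x)\|_2^2=\|J_\phi(x)\Delta\|_2^2+2\langle J_\phi(x)\Delta,R\rangle+\|R\|_2^2 .
\]

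\textbf{Leading term.} Condition on $x$ and use $\Delta\perp X$:
\[
\E_\Delta\|J_\phi(x)\Delta\|_2^2=\E_\Delta\Tr\bigl(J_\phi^\top J_\phi\,\Delta\Delta^\top\bigr)=\Tr\bigl(J_\phi(x)^\top J_\phi(x)\Sigmatask\bigr),
\]
by linearity of trace and expectation. Averaging over $x$ yields exactly $\tilde D_Q$ in~\eqref{eq:drift-linear}. The interchange is justified because $\|J_\phi\|_F\le M$ and $\Sigmatask$ is finite.

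\textbf{Cross term.} Cauchy--Schwarz pointwise together with the Frobenius bound gives
\[
|2\langle J_\phi\Delta,R\rangle|\le 2M\|\Delta\|_2\cdot\tfrac{\beta}{2}\|\Delta\|_2^2=M\beta\|\Delta\|_2^3 .
\]
With $\Delta=\sigma Z$, its expectation is at most $M\beta\sigma^3\E\|Z\|_2^3=O(\sigma^3)$, using the assumed third moment.

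\textbf{Quadratic remainder.} The same bound gives $\|R\|_2^2\le\tfrac{\beta^2}{4}\sigma^4\|Z\|_2^4$. This needs a fourth moment, which the statement does not grant. I would avoid assuming one by using the alternative bound $\|R\|_2\le 2M\|\Delta\|_2$, which follows from $\|J_\phi\|\le M$ along the segment. This gives
\[
\|R\|_2^2\le \min\bigl(\tfrac{\beta}{2}\|\Delta\|^2,\,2M\|\Delta\|\bigr)^2\le \beta M\|\Delta\|_2^3 ,
\]
since $\min(a,b)^2\le ab$. So this term is also $O(\sigma^3)$ under only the third-moment hypothesis. This is the step I expect to need the most care. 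The obstacle is not deep, but the statement's moment assumptions are stated loosely, and the min-trick is what makes them suffice. Presumably Lemma~\ref{lem:lindrift} packages exactly this bound, and I would cite it there.

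\textbf{Sharper $O(\sigma^4)$ rate.} Under the symmetry hypotheses of Lemma~\ref{lem:lindrift} (e.g.\ $\Delta\overset{d}{=}-\Delta$ and a twice-differentiable $\phi$ with a bounded, Lipschitz second derivative), expand one order further. Write $R=\tfrac12 D^2\phi(x)[\Delta,\Delta]+O(\|\Delta\|^3)$. The cross term's leading part is $\E\langle J_\phi\Delta,\tfrac12 D^2\phi[\Delta,\Delta]\rangle$, a cubic odd function of $\Delta$, so it vanishes by symmetry. Everything left is $O(\sigma^4)$. I would defer the precise hypotheses to that lemma rather than reprove it.

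\textbf{Consequence.} Setting $\Sigma'=\Sigmatask$ in~\eqref{eq:pmh-family}, the penalty term equals $\tilde D_Q$ exactly, so it equals deployment drift $D_Q$ up to $O(\sigma^3)$. This is purely a statement about embedding drift under $Q$. No claim about task risk is made, because the expansion never involved $h_\theta$ or $\mathcal{L}_{\mathrm{task}}$. That remark closes the proof.
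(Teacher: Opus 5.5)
Your argument is essentially the paper's. The paper proves the theorem by invoking Lemma~\ref{lem:lindrift}, and that lemma's proof uses the same integral-remainder Taylor expansion and the same three-term split. It also has the same identity $\E_\Delta\|J_\phi\Delta\|_2^2=\Tr(J_\phi^\top J_\phi\Sigmatask)$ and the same cross-term bound $M\beta\,\E\|\Delta\|_2^3$, and it ends with the same observation that the matched penalty is exactly the leading quadratic form.

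The one real difference is the quadratic remainder, and here the lemma does not package your bound. The paper uses $\E\|r\|_2^2\le\tfrac14\beta^2\E\|\Delta\|_2^4$, which tacitly needs a fourth moment that $\E\|Z\|_2^3<\infty$ does not supply. Your bound $\|R\|_2^2\le\min\bigl(\tfrac{\beta}{2}\|\Delta\|_2^2,\,2M\|\Delta\|_2\bigr)^2\le\beta M\|\Delta\|_2^3$ gets the $O(\sigma^3)$ claim from the stated third moment alone. So your version is the sharper one, and citing the lemma for that step would give back what you gained.

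The trick does not extend to the symmetric $O(\sigma^4)$ refinement, where a fourth moment is genuinely needed. Take $\phi=\sin$ on $\mathbb{R}$, $x$ uniform on $[0,2\pi)$, and symmetric $Z$ with $P(|Z|>t)\asymp t^{-\alpha}$ for some $3<\alpha<4$. Every smoothness and symmetry condition on your list holds. Yet the remainder equals $\E[1-\cos(\sigma Z)]-\tfrac12\sigma^2\E[Z^2]$, whose magnitude is of exact order $\sigma^\alpha$. So when you defer that part to the lemma, state $\E\|Z\|_2^4<\infty$ explicitly; your listed hypotheses are not enough on their own.
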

\textit{Meaning.}
Penalising $\Tr(J^\top J\,\Sigmatask)$ is locally charging the true label-preserving
nudge---not an arbitrary Jacobian recipe (\S\ref{sec:related-work}).

\subsection{Cover the range, or keep a floor}
\label{sec:thmA}
\label{sec:thmG}

\textit{Intuition.}
Back to Figure~\ref{fig:pipeline}: if training never penalises the horizontal axis, no
amount of turning up the weight removes horizontal drift.
In symbols: turn $\lambda$ up without bound.
Directions that $\Sigma'$ penalises are crushed; directions in the kernel of $\Sigma'$ are
left alone.
If deployment moves along an untouched direction, drift along that direction never goes
away.
``Cover the range'' means every deployment direction is penalised.
Closed form below: linear encoder / ridge response (Jacobian energy along $\Sigma'$ shrinks
direction $v$); deep nets inherit cover-or-floor only under the bridge
(Appendix~\ref{app:conditional-licenses}).

\begin{theorem}[Coverage or a permanent floor]
\label{thm:game}
Let $\Sigma',\Sigmatask\succeq0$, $v\in\mathbb{R}^{d_x}$ with $\|v\|_2=1$, and $\lambda>0$.
Define the linear--quadratic response and its deployment drift by
\[
w_\lambda(\Sigma';v):=(I+2\lambda\Sigma')^{-1}v,\qquad
D_{\Sigmatask}(w):=w^\top\Sigmatask w.
\]
The designer chooses $\Sigma'\succeq0$ (optionally under $\Tr(\Sigma')\le B$);
nature chooses unit $v\in\rangeop(\Sigmatask)$; the payoff is
$\lim_{\lambda\to\infty}D_{\Sigmatask}\bigl(w_\lambda(\Sigma';v)\bigr)$.
The minimax value is $0$ if and only if
$\rangeop(\Sigma')\supseteq\rangeop(\Sigmatask)$; under that coverage,
$D_{\Sigmatask}\bigl(w_\lambda(\Sigma';v)\bigr)=O(\lambda^{-2})$ for each fixed unit $v$.
Equivalently (unit length only normalises the packaging),
\[
\lim_{\lambda\to\infty}w_\lambda(\Sigma';v)=\Pi_{\ker(\Sigma')}v,\qquad
\lim_{\lambda\to\infty}D_{\Sigmatask}\bigl(w_\lambda(\Sigma';v)\bigr)
=\bigl\|\Sigmatask^{1/2}\Pi_{\ker(\Sigma')}v\bigr\|_2^2,
\]
so the limit vanishes for every $v\in\rangeop(\Sigmatask)$ precisely under range coverage.
\end{theorem}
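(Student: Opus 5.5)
The plan is to diagonalise $\Sigma'$ and read everything off the spectral form of the resolvent. Write $\Sigma'=\sum_i \mu_i u_iu_i^\top$ with orthonormal eigenvectors $u_i$ and eigenvalues $\mu_i\ge0$. Then
\[
w_\lambda(\Sigma';v)=\sum_i (1+2\lambda\mu_i)^{-1}\langle u_i,v\rangle u_i .
\]
Each coefficient with $\mu_i>0$ is $O(\lambda^{-1})$ and tends to $0$. Each coefficient with $\mu_i=0$ equals $1$ for every $\lambda$. This gives $w_\lambda(\Sigma';v)\to\Pi_{\ker(\Sigma')}v$ directly, with error bounded in norm by $(2\lambda\mu_{\min}^+)^{-1}\|v\|_2$, where $\mu_{\min}^+$ is the smallest positive eigenvalue. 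Because $w\mapsto w^\top\Sigmatask w$ is continuous, the displayed limit $\bigl\|\Sigmatask^{1/2}\Pi_{\ker(\Sigma')}v\bigr\|_2^2$ follows at once.

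For the rate under coverage, I would first note the equivalence
\[
\rangeop(\Sigma')\supseteq\rangeop(\Sigmatask)\iff\ker(\Sigma')\subseteq\ker(\Sigmatask),
\]
which holds by taking orthogonal complements of symmetric ranges. Under coverage $\Sigmatask\Pi_{\ker(\Sigma')}=0$. Writing $w_\lambda=\Pi_{\ker}v+r_\lambda$ with $\|r_\lambda\|_2=O(\lambda^{-1})$, the drift collapses to $r_\lambda^\top\Sigmatask r_\lambda\le\|\Sigmatask\|\,\|r_\lambda\|_2^2=O(\lambda^{-2})$, since the cross terms vanish. This step needs no restriction on $v$.

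For the minimax characterisation, the key observation is that the payoff $g(\Sigma',v)=\|\Sigmatask^{1/2}\Pi_{\ker(\Sigma')}v\|_2^2$ is nonnegative, so the value is $0$ exactly when some admissible $\Sigma'$ makes $g$ vanish for all unit $v\in\rangeop(\Sigmatask)$.
\begin{itemize}
\item \emph{If coverage holds:} $g\equiv0$ by the previous paragraph. Under a trace budget $B>0$, a covering $\Sigma'$ is always available, for instance $\Sigma'=(B/d_x)I$ or a rescaling of $\Sigmatask$, since scaling does not change the range.
\item \emph{If coverage fails:} there is $k\in\ker(\Sigma')\setminus\ker(\Sigmatask)$. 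I would then pick unit $v\in\rangeop(\Sigmatask)$ with $g(\Sigma',v)>0$.
\end{itemize}

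The obstacle is this last step. The naive choice $v=k$ fails because $k$ need not lie in $\rangeop(\Sigmatask)$. Instead I would take $v\propto\Pi_{\rangeop(\Sigmatask)}k$, which is nonzero because $k\notin\ker(\Sigmatask)=\rangeop(\Sigmatask)^\perp$. It then suffices to show that $\Sigmatask\Pi_{\ker(\Sigma')}$ does not vanish on $\rangeop(\Sigmatask)$. Equivalently, the operator $\Pi_{\rangeop(\Sigmatask)}\Pi_{\ker(\Sigma')}\Pi_{\rangeop(\Sigmatask)}$ must be nonzero. This holds because its trace equals $\|\Pi_{\ker(\Sigma')}\Pi_{\rangeop(\Sigmatask)}\|_F^2$, and that norm is positive since $\langle k,\Pi_{\rangeop(\Sigmatask)}k\rangle=\|\Pi_{\rangeop(\Sigmatask)}k\|^2>0$ gives $\Pi_{\rangeop(\Sigmatask)}k\notin\ker(\Sigma')^\perp$.

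Nature then takes $v$ to be a top eigenvector of that compressed operator. This gives $\Pi_{\ker}v\ne0$ with a component in $\rangeop(\Sigmatask)$, hence $g>0$, which would complete the "only if" direction.
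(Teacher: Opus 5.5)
Your proposal is correct and follows essentially the same route as the paper: the spectral limit $(I+2\lambda\Sigma')^{-1}\to\Pi_{\ker(\Sigma')}$, the kernel/range duality with the $O(\lambda^{-2})$ rate obtained because $\Sigmatask$ annihilates the kernel component, and a witness built from $k\in\ker(\Sigma')\setminus\ker(\Sigmatask)$ via $\langle k,\Pi_{\rangeop(\Sigmatask)}k\rangle=\|\Pi_{\rangeop(\Sigmatask)}k\|_2^2>0$. The one difference is that your detour through the trace and top eigenvector of $\Pi_{\rangeop(\Sigmatask)}\Pi_{\ker(\Sigma')}\Pi_{\rangeop(\Sigmatask)}$ is unnecessary: the paper keeps your first choice $v\propto\Pi_{\rangeop(\Sigmatask)}k$ and notes that $p=\Pi_{\ker(\Sigma')}v\in\ker(\Sigmatask)$ would force $\|p\|_2^2=\langle v,p\rangle=0$, which contradicts $\langle v,k\rangle>0$.
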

\textit{Meaning.}
Miss any deployment direction and no larger $\lambda$ removes the floor.
Finite training weight is a different regime: practical soft training need not yet show
the asymptotic floor that oracle hard projection forces
(Proposition~\ref{prop:pmh-cap}).

\begin{corollary}[Even-spread / isotropic penalty beats task-only training]
\label{cor:even-spread-vs-erm}
In the linear--quadratic response of Theorem~\ref{thm:game}, let $\Sigma'=cI$ with $c>0$
(even-spread $=$ isotropic penalty) and fix any $\Sigmatask\succeq 0$ together with a unit
$v$ satisfying
$v^\top\Sigmatask v>0$.
Write $D_0:=v^\top\Sigmatask v$ for the unregularised ($\lambda=0$) drift.
Then for every $\lambda>0$,
\[
D_{\Sigmatask}\bigl(w_\lambda(cI;v)\bigr)
=\frac{D_0}{(1+2\lambda c)^2}
<D_0,
\]
and $\lim_{\lambda\to\infty}D_{\Sigmatask}\bigl(w_\lambda(cI;v)\bigr)=0$.
\end{corollary}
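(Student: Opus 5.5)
The plan is a direct computation: substitute $\Sigma'=cI$ into the closed-form response of Theorem~\ref{thm:game}, evaluate the drift, and read off the strict inequality and the limit. Nothing beyond the definitions is needed, though the limit statement can also be cross-checked against the general coverage formula of Theorem~\ref{thm:game}.

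\textbf{Step 1: the response.} With $\Sigma'=cI$ we have $I+2\lambda\Sigma'=(1+2\lambda c)I$. Since $c>0$ and $\lambda>0$, this is a positive multiple of the identity, hence invertible. Therefore
\[
w_\lambda(cI;v)=(1+2\lambda c)^{-1}v .
\]

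\textbf{Step 2: the drift.} The drift $D_{\Sigmatask}(w)=w^\top\Sigmatask w$ is a quadratic form, so it is homogeneous of degree two in $w$. Pulling out the scalar gives
\[
D_{\Sigmatask}\bigl(w_\lambda(cI;v)\bigr)=(1+2\lambda c)^{-2}\,v^\top\Sigmatask v=\frac{D_0}{(1+2\lambda c)^2}.
\]
The definition of $D_0$ is consistent with this formula: at $\lambda=0$ the response is $w_0=v$, so the unregularised drift is $v^\top\Sigmatask v$.

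\textbf{Step 3: strictness and the limit.} For $\lambda>0$ and $c>0$ we have $(1+2\lambda c)^2>1$. Together with the hypothesis $D_0>0$, this gives the strict inequality $D_0/(1+2\lambda c)^2<D_0$.

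For the limit, $(1+2\lambda c)^{-2}\to0$ as $\lambda\to\infty$, so the drift tends to $0$, at rate $O(\lambda^{-2})$. This agrees with Theorem~\ref{thm:game}: $\ker(cI)=\{0\}$, so $\Pi_{\ker(\Sigma')}v=0$, and range coverage $\rangeop(cI)=\mathbb{R}^{d_x}\supseteq\rangeop(\Sigmatask)$ holds trivially.

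\textbf{Where care is needed.} The only real obstacle is bookkeeping. The hypothesis $D_0>0$ is what makes the inequality strict; if $D_0=0$, both sides would be zero and we would only get equality. Positivity of $c$ is what guarantees that the scalar factor is strictly greater than one. No assumption on $\Sigmatask$ beyond $\Sigmatask\succeq0$ is used.
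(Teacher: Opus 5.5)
Your proposal is correct and follows the paper's proof essentially step for step. Both substitute $\Sigma'=cI$ to get the scalar resolvent $w_\lambda(cI;v)=(1+2\lambda c)^{-1}v$, pull the factor $(1+2\lambda c)^{-2}$ out of the quadratic form, use $D_0>0$ for strictness, and cross-check the limit against Theorem~\ref{thm:game} via $\rangeop(cI)=\mathbb{R}^{d_x}\supseteq\rangeop(\Sigmatask)$.
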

\textit{Meaning.}
Even-spread (isotropic $\Sigma'\propto I$) needs no estimate of the axes: in the
linear--quadratic response it strictly
beats task-only training at every $\lambda>0$ and drives the floor to zero for every nonzero
nuisance---the reason to treat it as default regularisation when $\Sigmatask$ is unknown.
Matching known axes improves the constant; it is not a prerequisite for getting a cover
(Figure~\ref{fig:ridge-floor-thm4}).
Under a lazy Jacobian the same identity holds up to $O(\varepsilon_{\mathrm{jac}})$
(Proposition~\ref{prop:lazy-even-spread}), still for additive $\Delta$
(Lemma~\ref{lem:lindrift}).

\begin{figure}[!htbp]
\centering
\SubmissionFig[width=0.72\linewidth]{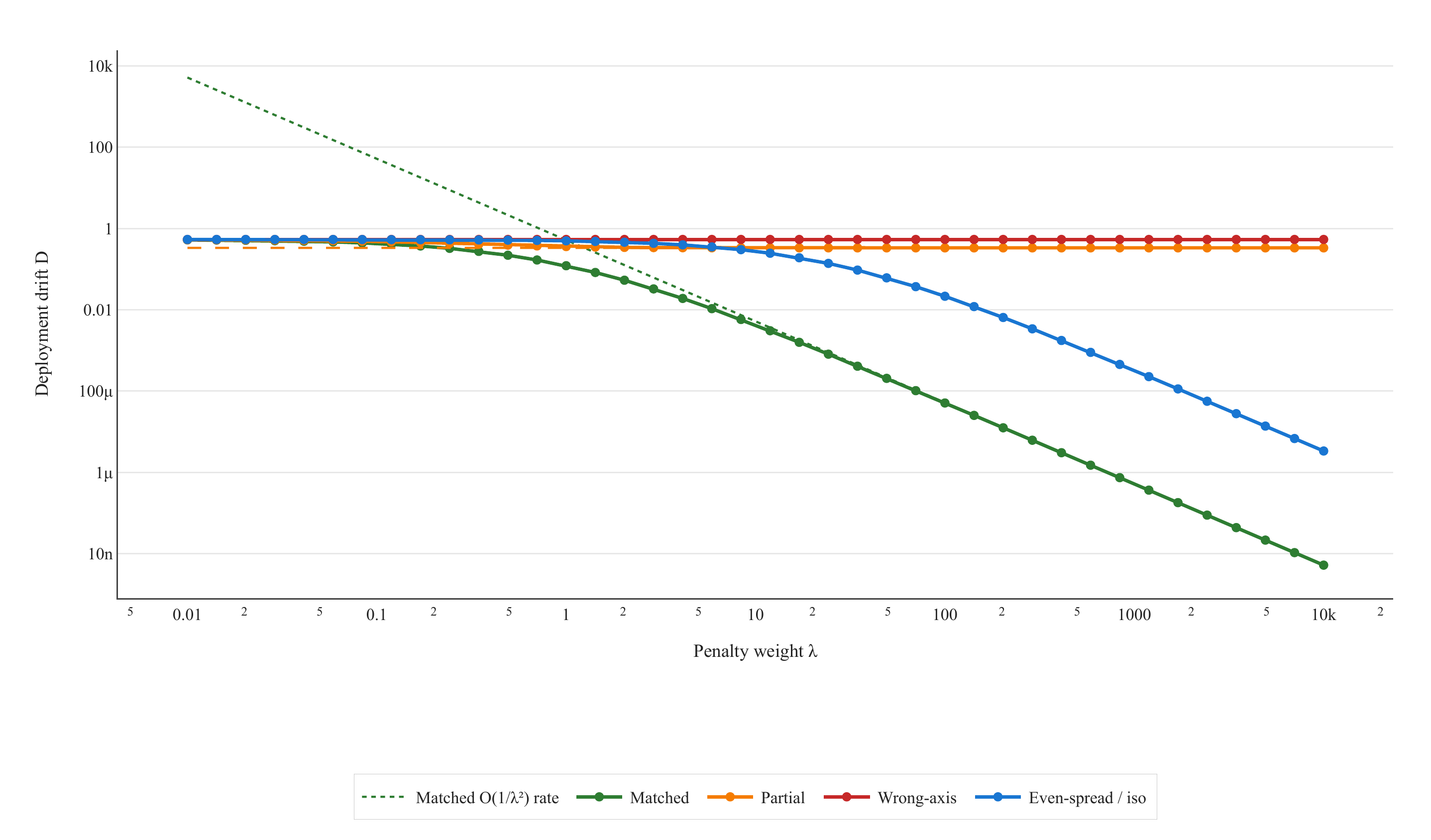}
\caption{\textbf{Even-spread / isotropic covers; a miss leaves a floor.}
Even-spread / isotropic and matched drive deployment drift down as $\lambda$ grows (matched faster);
partial or wrong-axis penalties leave a positive floor (dashed).}
\label{fig:ridge-floor-thm4}
\end{figure}

\begin{corollary}[Wrong probe $\neq$ coverage]
\label{cor:nfl-noise}
Let $Q$ be a structurally label-preserving deployment coupling with object
$\Sigmatask=\E_Q[\Delta\Delta^\top]$, and let $\Sigma'\succeq0$ be an effective response
matrix in the linear--quadratic game of Theorem~\ref{thm:game}
(under Assumption~\ref{ass:bridge}, or as the second-moment of a training noise /
augmentation / VAT law when that law induces such a response).
If $\rangeop(\Sigma')\not\supseteq\rangeop(\Sigmatask)$, then there exists
$v\in\rangeop(\Sigmatask)$ with a positive limiting drift floor that no penalty weight
removes.
Moreover, vanishing on finitely many fixed probes $v'$ need not imply vanishing on every
$v\in\rangeop(\Sigmatask)$: probes that are not the true deployment risk can look healthy
while a miss remains.
The floor is the LQ-game response; this does \emph{not} equate arbitrary VAT / aug / AT
training with the matched Jacobian penalty via the drift expansion alone.
\end{corollary}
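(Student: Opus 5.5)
The plan is to reduce both claims to the closed-form limit already established in Theorem~\ref{thm:game}, and then to build an explicit counterexample for the probe statement. Since $\Sigma'$ is assumed to be an effective response matrix in the linear--quadratic game (under Assumption~\ref{ass:bridge}, or as a noise/augmentation second moment inducing that response), Theorem~\ref{thm:game} applies verbatim. For any unit $v$ it gives
\[
\lim_{\lambda\to\infty}D_{\Sigmatask}\bigl(w_\lambda(\Sigma';v)\bigr)=\bigl\|\Sigmatask^{1/2}\Pi_{\ker(\Sigma')}v\bigr\|_2^2 .
\]
Both the floor and the probe failure then come from choosing vectors in the gap between $\rangeop(\Sigmatask)$ and $\rangeop(\Sigma')$.

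\textbf{Existence of a floor.} From $\rangeop(\Sigma')\not\supseteq\rangeop(\Sigmatask)$ and symmetry of both matrices we get, by orthogonal complements, $\ker(\Sigma')\not\subseteq\ker(\Sigmatask)$. So there is a unit $u\in\ker(\Sigma')$ with $\Sigmatask u\neq0$. Set $v:=\Sigmatask u/\|\Sigmatask u\|\in\rangeop(\Sigmatask)$. We then need $\Sigmatask^{1/2}\Pi_{\ker(\Sigma')}v\neq0$, and the cleanest route is through an inner product:
\[
\langle \Pi_{\ker(\Sigma')}v,\Sigmatask u\rangle=\langle v,\Sigmatask u\rangle=\|\Sigmatask u\|>0,
\]
using $\Pi u=u$ and self-adjointness of $\Pi$. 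Hence $\Pi_{\ker(\Sigma')}v\notin\ker(\Sigmatask)=\ker(\Sigmatask^{1/2})$, so the limit is strictly positive. Because this limit does not depend on $\lambda$, no penalty weight removes it. This is the "only if" direction of Theorem~\ref{thm:game}, made constructive.

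\textbf{Probes need not certify coverage.} Here I exhibit a concrete instance. Take $d_x=2$, $\Sigmatask=I$, $\Sigma'=e_1e_1^\top$, and the single probe $v'=e_1$. The formula above gives floor $0$ on $v'$, while $v=e_2\in\rangeop(\Sigmatask)$ has floor $1$. This extends to any finite probe set: take $\Sigma'$ to be the projector onto the span of the probes, and choose $\Sigmatask$ with strictly larger range. Each probe lies in $\rangeop(\Sigma')$, so its limit vanishes, while the first part supplies a $v$ with a positive floor. The final caveat sentence is interpretive rather than a mathematical claim. I would note that nothing beyond the LQ identification of $\Sigma'$ was used, so no equivalence with VAT/aug/AT via Theorem~\ref{thm:deployment-consistency} is asserted.

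\textbf{Main obstacle.} The only delicate step is the positivity argument, which shows that a vector in the range of $\Sigmatask$ has a kernel-projection not annihilated by $\Sigmatask^{1/2}$. A naive choice of $v$ with a nonzero kernel component could still project into $\ker(\Sigmatask)$. Choosing $v\propto\Sigmatask u$ and testing against $\Sigmatask u$ avoids this.
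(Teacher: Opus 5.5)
Your overall route is the paper's. The floor is the contrapositive of Theorem~\ref{thm:game}'s closed-form limit, and the probe statement follows from the fixed-$v$ criterion $\Pi_{\ker(\Sigma')}v\in\ker(\Sigmatask)$. Your $2\times2$ example is a correct and more explicit instance of that criterion.

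\textbf{The gap.} The problem is the step you yourself flag as delicate. The identity $\langle\Pi_{\ker(\Sigma')}v,\Sigmatask u\rangle=\langle v,\Sigmatask u\rangle$ does not follow from $\Pi u=u$ and self-adjointness. Self-adjointness gives $\langle\Pi v,\Sigmatask u\rangle=\langle v,\Pi\Sigmatask u\rangle$. Replacing $\Pi\Sigmatask u$ by $\Sigmatask u$ would require $\Sigmatask u\in\ker(\Sigma')$, which nothing guarantees. What $\Pi u=u$ plus self-adjointness actually yields is $\langle\Pi v,u\rangle=\langle v,u\rangle$, with $u$ rather than $\Sigmatask u$ in the second slot. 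For a concrete failure, take $\Sigma'=e_2e_2^\top$, $\Sigmatask=\mathbf{1}\mathbf{1}^\top$ with $\mathbf{1}=(1,1)^\top$, and $u=e_1$. Then $v=\mathbf{1}/\sqrt2$ and $\Pi v=e_1/\sqrt2$, so the left side equals $1/\sqrt2$, while your chain asserts $\|\Sigmatask u\|_2=\sqrt2$.

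\textbf{How to repair it.} The conclusion is still true, but it needs an argument that uses $\Sigmatask\succeq0$, which your chain never invokes. Any of the following works.
\begin{itemize}
\item Compute directly: $\langle\Pi v,\Sigmatask u\rangle=\|\Pi\Sigmatask u\|_2^2/\|\Sigmatask u\|_2$. This is positive because $\langle\Pi\Sigmatask u,u\rangle=\langle\Sigmatask u,u\rangle=\|\Sigmatask^{1/2}u\|_2^2>0$, and $\Sigmatask u\neq0$ forces $\Sigmatask^{1/2}u\neq0$.
\item Argue as in the paper's proof of Theorem~\ref{thm:game}. First, $\Pi v\neq0$ because $\langle\Pi v,u\rangle=\langle v,u\rangle>0$. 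Second, $\Pi v\in\ker(\Sigmatask)=\rangeop(\Sigmatask)^\perp$ is impossible, since $v\in\rangeop(\Sigmatask)$ would then give $0=\langle v,\Pi v\rangle=\|\Pi v\|_2^2$.
\item Do what the paper's proof of this corollary does: cite the uniform equivalence already stated in Theorem~\ref{thm:game} instead of re-deriving it.
\end{itemize}

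\textbf{A minor precision on the probe part.} The limiting floor $v\mapsto\|\Sigmatask^{1/2}\Pi_{\ker(\Sigma')}v\|_2^2$ is a positive semidefinite quadratic form, so its zero set is a subspace. Vanishing on the probes therefore propagates to their span. Your extension works for probe sets spanning a proper subspace of $\mathbb{R}^{d_x}$, not literally any finite set. Probes whose span contains $\rangeop(\Sigmatask)$ do certify coverage. The existential claim needs only your example, so this does not affect correctness.
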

\textit{Meaning.}
A healthy score on the probes you happened to check does not prove you covered deployment.
Only the true deployment risk does.

\begin{corollary}[Hit every direction vs.\ miss some]
\label{cor:rate-separation}
In a shared eigenbasis of $\Sigma'$ and $\Sigmatask$, write $a_i=\tau_i\tilde v_i^2$ for the
task-weighted components and let $f_\lambda(\mu)=\sum_i a_i/(1+2\lambda\mu_i)^2$ be the
ridge drift under penalty eigenvalues $\mu$ (Online Appendix~1, mismatch asymptotics).
\begin{enumerate}[label=(\roman*), leftmargin=*, itemsep=2pt]
\item \textbf{Full support.}
  If $\mu_i>0$ whenever $a_i>0$, then
  $f_\lambda(\mu)=\Theta(\lambda^{-2})$ as $\lambda\to\infty$, and for any two such penalties
  $\mu,\mu'$ the ratio $f_\lambda(\mu)/f_\lambda(\mu')$ converges to a finite positive
  constant.
  Matched, even-spread (isotropic penalty $\Sigma'\propto I$), and other full-support soft
  penalties therefore
  separate only by a bounded factor---never by a permanent floor.
  A wrong \emph{low-rank} subspace that misses support is the other case.
\item \textbf{Missing support.}
  If $\mu_j=0$ for some $a_j>0$, then $f_\lambda(\mu)\ge a_j=\Theta(1)$ for all $\lambda$:
  this is Theorem~\ref{thm:game}'s coverage floor.
\end{enumerate}
Even-spread soft penalties have full ambient support.
The recipe's matched / wrong-axis soft penalties at the nuisance rank (and oracle hard
projection) therefore miss support outside that subspace.
At finite recipe weight those floors need not yet appear; hard projection is what forces
them into view
(\S\ref{sec:soft-drift}; Online Appendix~1 soft $\lambda$ sweep).
\end{corollary}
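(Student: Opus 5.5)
The plan is to reduce everything to the explicit formula for $f_\lambda(\mu)$ and compare it term by term; no earlier machinery is needed beyond the diagonalisation already built into the statement. Because $\Sigma'$ and $\Sigmatask$ share an eigenbasis, $(I+2\lambda\Sigma')^{-1}$ is diagonal in that basis with entries $(1+2\lambda\mu_i)^{-1}$. Writing $\tilde v_i$ for the coordinates of $v$ in the basis, the drift $D_{\Sigmatask}(w_\lambda(\Sigma';v))$ of Theorem~\ref{thm:game} becomes
\[
\sum_i \frac{\tau_i\tilde v_i^2}{(1+2\lambda\mu_i)^2}=f_\lambda(\mu).
\]
This also recovers Corollary~\ref{cor:even-spread-vs-erm} as the case $\mu_i\equiv c$. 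I will first record this identity, then split the index set as $S=\{i:a_i>0\}$. Indices with $a_i=0$ contribute nothing, so only $S$ matters, and $S$ is nonempty whenever $D_0>0$.

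For part (i), the key step is the limit of the rescaled drift. For each $i\in S$ with $\mu_i>0$,
\[
\lambda^2 a_i(1+2\lambda\mu_i)^{-2}\to \frac{a_i}{4\mu_i^2}\qquad (\lambda\to\infty).
\]
Summing over the finite set $S$ gives
\[
\lambda^2 f_\lambda(\mu)\to C(\mu):=\sum_{i\in S}\frac{a_i}{4\mu_i^2}\in(0,\infty).
\]
This limit is positive because $S\neq\emptyset$ and finite because every $\mu_i>0$ on $S$. It yields $f_\lambda(\mu)=\Theta(\lambda^{-2})$ directly. Dividing the two limits gives
\[
\frac{f_\lambda(\mu)}{f_\lambda(\mu')}\to\frac{C(\mu)}{C(\mu')},
\]
which is a finite positive constant. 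Matched ($\mu=\tau$ on the support), even-spread ($\mu\equiv c$), and any other full-support penalty are instances, so none of them can differ from another by a floor.

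For part (ii), I drop all but one term. Each summand is nonnegative, and the term with $\mu_j=0$ equals $a_j$ for every $\lambda$, so $f_\lambda(\mu)\ge a_j$. This matches the floor $\|\Sigmatask^{1/2}\Pi_{\ker(\Sigma')}v\|_2^2$ of Theorem~\ref{thm:game}, since $\Pi_{\ker(\Sigma')}$ keeps the $j$-th component. The closing remarks then follow by inspection. Even-spread has $\mu_i=c>0$ for all $i$, hence full ambient support. A rank-$k$ matched or wrong-axis penalty has $\mu_i=0$ off its subspace, so case (ii) applies whenever $a_i>0$ for some such $i$. The finite-$\lambda$ caveat is not a claim requiring proof here; it is deferred to Proposition~\ref{prop:pmh-cap}.

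The main obstacle is interpretive rather than computational: the shared-eigenbasis hypothesis carries the whole argument. Without commutation, $f_\lambda$ is not a diagonal sum. In that case I would instead invoke the general limit in Theorem~\ref{thm:game}. For the $\Theta(\lambda^{-2})$ rate there, I would use the expansion
\[
(I+2\lambda\Sigma')^{-1}v=\Pi_{\ker}v+(2\lambda)^{-1}(\Sigma')^{+}v+O(\lambda^{-2})
\]
on the range of $\Sigma'$. Within the stated corollary, though, I would assume the eigenbasis as given and keep the proof to the diagonal computation.
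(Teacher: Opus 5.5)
Your proposal is correct and follows the paper's own proof. The paper likewise expands $(1+2\lambda\mu_i)^{-2}=(4\lambda^2\mu_i^2)^{-1}+O(\lambda^{-3})$ to get $f_\lambda(\mu)=\frac{1}{4\lambda^2}\sum_{a_i>0}a_i/\mu_i^2+O(\lambda^{-3})$, which is the same computation as your limit $\lambda^2 f_\lambda(\mu)\to C(\mu)$; you are also right that positivity needs some $a_i>0$. The paper then gets the floor by keeping only the $\mu_j=0$ summand, as you do.

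One caution applies to your non-commuting aside only. There the expansion gives just $O(\lambda^{-2})$, because the leading coefficient $\frac14\|\Sigmatask^{1/2}(\Sigma')^{+}v\|_2^2$ can vanish even when $v^\top\Sigmatask v>0$. For example, $\Sigmatask=e_1e_1^\top$, $\Sigma'=\bigl(\begin{smallmatrix}2&1\\1&1\end{smallmatrix}\bigr)$ and $v=(1,1)/\sqrt2$ give drift $\Theta(\lambda^{-4})$. So the shared eigenbasis is genuinely needed for both the $\Theta(\lambda^{-2})$ rate and the positive limiting ratio.
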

\textit{Meaning.}
Once every deployment direction is hit, arm choice (matched vs.\ even-spread / isotropic)
changes only
the constant in front of $\lambda^{-2}$---not whether a floor exists.
Wrong low-rank penalties are the dangerous alternative to even-spread
(Table~\ref{tab:methods-as-estimators}): they can miss support entirely.

\subsection{A miss has no first-order pull}
\label{sec:local-obstruction}

\textit{Intuition.}
A penalty that never looks at a direction cannot push the encoder to ignore that direction.
Its gradient along the missing direction is zero to first order.

\begin{proposition}[Local first-order obstruction along missing directions]
\label{prop:local-obstruction}
Let $\phi_\theta$ be $C^1$ near $\theta_0$ and let
$\mathcal{L}_{\PMH(\Sigma')}$ be as in~\eqref{eq:pmh-family}.
If $q\in\ker(\Sigma')\cap\rangeop(\Sigmatask)$ with $\|q\|_2=1$, then the matched
Jacobian penalty is stationary to first order under infinitesimal encoder updates that change
only $J_\phi(\cdot)q$ at $\theta_0$.
Any first-order force that drives $\E_x[\|J_\phi q\|_2^2]\to 0$ must come from
$\mathcal{L}_{\mathrm{task}}$, not from the missing-direction penalty.
\end{proposition}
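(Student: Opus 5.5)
The plan is to compute the first variation of the penalty term in~\eqref{eq:pmh-family} directly. Write $R(\theta):=\E_x[\Tr(J_\phi(x)^\top J_\phi(x)\,\Sigma')]$ and diagonalise $\Sigma'=\sum_k \mu_k u_k u_k^\top$ with $\mu_k\ge0$. Then
\[
R(\theta)=\sum_k \mu_k\,\E_x\bigl[\|J_\phi(x)u_k\|_2^2\bigr].
\]
Since $q\in\ker(\Sigma')$, we may choose the eigenbasis so that $q$ is one of the $u_k$ with $\mu_k=0$, and every $u_k$ with $\mu_k>0$ is orthogonal to $q$. The penalty therefore depends on $J_\phi$ only through the restriction $J_\phi\Pi_{q^\perp}$, where $\Pi_{q^\perp}=I-qq^\top$. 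More precisely, $\Sigma'=\Pi_{q^\perp}\Sigma'\Pi_{q^\perp}$, and so
\[
\Tr\bigl(J^\top J\Sigma'\bigr)=\Tr\bigl((J\Pi_{q^\perp})^\top (J\Pi_{q^\perp})\,\Sigma'\bigr).
\]

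Next I will formalise ``infinitesimal encoder updates that change only $J_\phi(\cdot)q$''. Take a $C^1$ curve $\theta(t)$ with $\theta(0)=\theta_0$ and write $\dot J(x):=\frac{d}{dt}J_{\phi_{\theta(t)}}(x)|_{t=0}$. The hypothesis says $\dot J(x)\Pi_{q^\perp}=0$, i.e.\ $\dot J(x)=\dot J(x)qq^\top$. Differentiating under the expectation (justified by $C^1$ dependence and an integrable bound on $J$, as in the standing assumptions of Theorem~\ref{thm:deployment-consistency}) gives
\[
\frac{d}{dt}R(\theta(t))\Big|_{t=0}=2\,\E_x\bigl[\Tr\bigl(J^\top\dot J\,\Sigma'\bigr)\bigr]=2\,\E_x\bigl[\Tr\bigl(J^\top\dot J qq^\top\Sigma'\bigr)\bigr]=0,
\]
because $q^\top\Sigma'=0$. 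Hence $\lambda R$ is first-order stationary along every such direction, and the total first variation of $\mathcal{L}_{\PMH(\Sigma')}$ along the curve reduces to that of $\mathcal{L}_{\mathrm{task}}$.

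For the second sentence, I note that $\E_x[\|J_\phi q\|_2^2]$ is a functional of $J_\phi q$ alone. Decompose any first-order update as $\dot J=\dot J\Pi_{q^\perp}+\dot Jqq^\top$. The derivative of $\E_x[\|J_\phi q\|^2]$ is $2\E_x[(J q)^\top\dot J q]$, which sees only the $qq^\top$ component. By the computation above, that component receives zero gradient from $R$. Components in $\Pi_{q^\perp}$ do not affect $\E_x[\|J_\phi q\|_2^2]$ to first order. Any first-order decrease of $\E_x[\|J_\phi q\|_2^2]$ must therefore come from the task-loss term. The condition $q\in\rangeop(\Sigmatask)$ only ensures that this direction actually carries deployment drift, by Theorem~\ref{thm:deployment-consistency}. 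It is not used in the stationarity computation itself.

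The main obstacle is the meaning of ``updates that change only $J_\phi q$''. In a parametrised network, a parameter perturbation generally moves all columns of $J_\phi$ together, so such updates may not exist in $\theta$-space. I would handle this by stating the result at the level of the Jacobian field (the functional derivative of $R$ in $J$ is $2J\Sigma'$, whose action on $q$ is zero). The parameter-space statement then holds for the subspace of $\dot\theta$ whose induced $\dot J$ satisfies $\dot J\Pi_{q^\perp}=0$, which may be trivial; in that case the claim holds vacuously. Equivalently, projecting the parameter gradient $2\E_x[J\Sigma' (\partial_\theta J)]$ onto that subspace gives zero.
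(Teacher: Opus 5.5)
Your proposal is correct and is essentially the paper's proof: differentiate $P(\theta(t))=\E_x[\Tr(J_\phi^\top J_\phi\Sigma')]$ along a $C^1$ path to get $2\,\E_x[\Tr(J_\phi^\top\dot J\,\Sigma')]$, then note that an update acting only on $q$ satisfies $\dot J=\dot J qq^\top$, so $\dot J\Sigma'=0$ because $\Sigma'q=0$. The eigenbasis rewriting is not needed, and your closing caveat goes beyond the paper's proof: such updates may not exist in a parametrised network, so the second sentence is really a statement about the Jacobian field, since in $\theta$-space shared weights can let the penalty shrink $J_\phi q$ indirectly.
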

\textit{Meaning.}
A coverage miss is structural, not an optimisation accident: the penalty exerts no
first-order pull on the missing direction.

\subsection{Pointers}
\label{sec:bridge-pointer}
\label{sec:secondary-calculus}
\label{sec:worked-identifications}
\label{sec:three-predictions}

Under the ridge/linear bridge (Assumption~\ref{ass:bridge}), the matched Jacobian penalty
is the coverage game; secondary calculus and estimator constructions live in Online
Appendix~1.

%% file: sections/05_estimators.tex
\section{Charged objects and related methods}
\label{sec:estimators}
\label{sec:lemmas-D}
\label{sec:background}
\label{sec:Ak-choice}

Familiar robustness methods already penalise related second-moment geometry
\citep{sun2016coral,tzeng2014ddc,hendrycks2020augmix,wang2020mart}.
Table~\ref{tab:methods-as-estimators} names the object each method effectively hits; it is
a diagnostic map, not a coverage proof.
Whether that object covers $\Sigmatask$ is usually left unchecked.
(Seven coupling families $A_1$--$A_7$, eigengap pre-flight, and a visual summary:
Online Appendix~1.)

Even the Bayes predictor can retain an unavoidable linearised drift when the nuisance is
correlated with the label (Theorem~\ref{thm:p1-1}, Appendix~\ref{app:conditional-licenses}).
That is a second, irreducible floor---not the coverage miss of Theorem~\ref{thm:game}, which
covering $\rangeop(\Sigmatask)$ does remove.

\begin{table}[t]
\centering
\caption{What familiar methods penalise (diagnostic map; equality with $\Sigmatask$ is
usually unverified).}
\label{tab:methods-as-estimators}
\small
\setlength{\tabcolsep}{4pt}
\renewcommand{\arraystretch}{1.15}
\begin{tabularx}{\linewidth}{@{}>{\raggedright\arraybackslash}p{0.22\linewidth}
                             >{\raggedright\arraybackslash}X
                             >{\raggedright\arraybackslash}X@{}}
\toprule
Method (block) & What it effectively penalises & Main mismatch risk \\
\midrule
Adversarial training / PGD (T7B)
  & PGD-delta moment; needs PGD law $=$ deployment law
  & Attack directions may differ from deployment \\
CORAL (T1, T4A)
  & Marginal covariance difference
  & Not paired-displacement covariance \\
IRM / GroupDRO
  & Environment risks; needs explicit environments
  & Different estimand (label / spurious change) \\
Data augmentation (T2A, T3B)
  & Augmentation-delta covariance ($A_3$)
  & Corruption outside the aug.\ family \\
Mahalanobis metric learning (T1)
  & Within-class scatter ($A_1$)
  & Signal contamination at low eigengap \\
Jacobian reg.\ / VAT (T7B, T6B)
  & Even-spread / isotropic penalty ($\Sigma'\propto I$) or a local rank-$r$ projector
  & Rank-$r$ projector is not full-support cover \\
Style / KL-anchored DPO (T7A)
  & Style-pair Gram ($A_7$)
  & Preference aligned with style, not deployment \\
\bottomrule
\end{tabularx}
\end{table}

%% file: sections/06_recipe.tex
\section{Design recipe and diagnostics}
\label{sec:recipe}
\label{sec:diagnostic}
\label{sec:tdi-trajectory}
\label{sec:tdi-layout}
\label{sec:tdi-reporting}
\label{sec:dissociation}
\label{sec:scope-limits}
\label{sec:when-pmh-pointless}

Task-only training does not shrink representation energy.
Table~\ref{tab:audit-card} is the checklist; the four steps below are the default path
(name the shift, name the penalty, measure task and geometry).

\subsection{Recipe}

\begin{enumerate}[leftmargin=*, itemsep=3pt plus 0pt minus 0pt]
\item \textbf{Default: train with even-spread (isotropic penalty).}
Unless deployment axes are identified, take $\Sigma'\propto I$
(Corollary~\ref{cor:even-spread-vs-erm}): no estimate required, full ambient cover, and
strictly less deployment drift than task-only training in the linear--quadratic response.
Cap the penalty/task-loss ratio so the regulariser cannot swamp the task
(Proposition~\ref{prop:pmh-cap}).
\textit{How to run it:}~\eqref{eq:pmh-family} with $\Sigma'\propto I$; recipe $\lambda$ /
layers in Online Appendix~1.
In practice the isotropic default is additive noise $\Delta\sim\mathcal{N}(0,\sigma^2 I)$
or a Frobenius Jacobian penalty $\E\|J_\phi\|_F^2$ (full ambient support).
In the linear / lazy regime those coincide with classical Tikhonov / weight decay on $W$
(Bishop; Assumption~\ref{ass:bridge}); outside that regime weight decay is a heuristic
cousin, not an equivalence.
Start there; match only when axes are identified \emph{and} the eigengap is clear
(Table~\ref{tab:audit-card}); keep $\lambda$ small enough that the task loss still owns
training.
\item \textbf{Name the intervention when you can.}
What changes in the input while the label stays fixed
(Definition~\ref{def:label-preserving})?
\item \textbf{If axes are identified, match them.}
Choose a family $A_k$ (Online Appendix~1), build $\hat\Sigma_{\mathrm{task}}$, report the
eigengap if low-rank, and train
$\mathcal{L}_{\mathrm{task}}+\lambda\,\E_x[\Tr(J_\phi^\top J_\phi\,\hat\Sigma_{\mathrm{task}})]$
with even-spread and wrong-axis as controls---not as replacements for the default when the
estimate is weak.
\item \textbf{Report task and geometry separately.}
Do not collapse both into one leaderboard number; name the displacement law before the
claim.
\end{enumerate}

\begin{table}[!htbp]
  \centering
  \caption{Coverage checklist (minimal fields).}
  \label{tab:audit-card}
  \small
  \setlength{\tabcolsep}{4pt}
  \begin{tabular}{@{}lp{0.62\linewidth}@{}}
    \toprule
    Field & Report \\
    \midrule
    Law status & identified / algorithmic / proxy-only (glossary) \\
    Deployment object & named $Q$ and $\Sigmatask$ (or stated proxy) \\
    Estimator $A_k$ & which second-moment object training penalised \\
    Eigengap $\gamma_r$ & pre-flight when rank-$r$; distrust subspace cover if $\approx 1$ \\
    Primary endpoint & deployment task metric \\
    Geometry probe & drift / overlap, on a separate axis \\
    Bridge note & state if a near-affine deep claim is made (Appendix licenses) \\
    \bottomrule
  \end{tabular}
\end{table}

\subsection{Geometry probes}

Task accuracy alone can hide a miss: a model may look fine on a registered probe while
staying sensitive along an uncovered axis.
We therefore measure representation sensitivity separately from the task score.
When the \emph{deployment} law is roughly isotropic ($\Sigmatask\approx\sigma^2 I$) we use a
normalized multi-layer drift probe (TDI);
when an estimated subspace is available we use a nuisance-to-signal drift ratio
$D_N/D_S$ (definitions: Online Appendix~1).
When geometry ranks and task ranks disagree, that is expected under incomplete coverage
(\S\ref{sec:emp-test3})---not a reason to discard either axis.

\subsection{When covering helps}

Coverage helps when the named directions move at deployment while labels stay fixed, and
when the training penalty actually hits those directions.
It does not help when the shift changes the label, when $\Sigmatask$ is badly estimated, or
when the penalty misses the support of $\Sigmatask$.

\paragraph{When even-spread / isotropic penalty can hurt.}
Too-large $\lambda$ can cost clean accuracy; non-label-preserving shifts can make
all-direction shrinkage hit signal; even-spread is default regularisation versus none, not a
free win on
field methods' registered axes (\S\ref{sec:emp-tier3}).

This recipe applies when the coupling is structurally label-preserving and the
displacement law is named.
It does not apply to label-changing or unidentified shifts.
Known failure modes: weak eigengaps; soft training that fails to separate matched from
wrong-axis at recipe $\lambda$; signal-aligned penalties that hurt the named nuisance;
opposite rankings across geometry vs.\ registered adversarial metrics.

%% file: sections/08_empirical.tex
\section{Experiments}
\label{sec:empirical}

\input{sections/fragments/t4b_multiseed_numbers}
\input{sections/fragments/t1_estimated_w_numbers}

The theorems above are linear--quadratic.
Deep nets are not: we treat the blocks below as \emph{experiments under a specified
perturbation}, not as a proof that the ridge bridge holds for every architecture.
Whether a full nonlinear cover theorem exists is open
(Table~\ref{tab:open}, item~(O)); even-spread remains usable without waiting for one.

Reading order matches the design rule, not an arms contest:
(1)~named penalty vs.\ none (Table~\ref{tab:tier1-primary});
(2)~one controlled illustration that cover beats miss (\S\ref{sec:emp-tier2});
(3)~mixed field comparisons to show a single task score is not the claim
(Table~\ref{tab:tier3-competitors}).
Hard projection is oracle surgery on a constructed spike, not the training
loss~\eqref{eq:pmh-family}.
Detail for every block: Online Appendix~1.

\subsection{Named penalty beats none (default-regularisation breadth)}
\label{sec:emp-tier1}

\label{tab:flagships}
\label{tab:evidence-map}
\label{tab:accounting-synthesis}

Task-only training optimises the label and stops; it does not shrink representation energy.
Each row of Table~\ref{tab:tier1-primary} uses whichever named penalty that block's coupling
made available---matched, even-spread, aniso, multiscale, soft match, or hard-proj; the shared claim is
only ``named beats none,'' not that every arm is even-spread or that every cell instantiates
Corollary~\ref{cor:even-spread-vs-erm}.
Even-spread rows (T2A/T2B/T5A; Arm${=}$iso) are the no-axis default.
Do not read the table as a single-metric contest: report geometry beside the task score
in the appendix when both exist.

\begin{table}[!htbp]
  \centering
  \caption{Primary metric vs.\ unregularised baseline (B0).
  Arm = which penalty was used (not always even-spread).
  All fifteen primary rows improve. Full protocols: Online Appendix~1.}
  \label{tab:tier1-primary}
  \scriptsize
  \setlength{\tabcolsep}{2.2pt}
  \renewcommand{\arraystretch}{1.05}
  \begin{tabular}{@{}llp{1.55cm}ccc@{}}
    \toprule
    Block & Primary metric & Arm & B0 & Reg.\ & $\Delta$ \\
    \midrule
    T1 classical & Fashion SVM acc.\ ($\uparrow$) & matched & 76.46 & 84.70 & $+8.24$ \\
    T1 deep hard & Fashion OOD acc.\ ($\uparrow$) & hard-proj & $39.9\pm3.7$ & $75.8\pm1.6$ & $+35.9$ \\
    T1 soft & Fashion OOD acc.\ ($\uparrow$) & soft match & $44.0\pm6.4$ & $64.6\pm2.6$ & $+20.6$ \\
    T2A & ImageNet-C sev.\ 3 ($\uparrow$) & iso & 82.9 & 87.2 & $+4.3$ \\
    T2B & Worst-shift acc.\ ($\uparrow$) & iso & 62.50 & 69.23 & $+6.73$ \\
    T3A & PCK clean ($\uparrow$) & aniso$^\dagger$ & 47.35 & 53.03 & $+5.68$ \\
    T3B & AbsRel combined-hard ($\downarrow$) & aniso$^\dagger$ & 0.242 & 0.215 & $-0.027$ \\
    T4A & DomainNet acc.\ ($\uparrow$) & multiscale & 38.84 & 42.15 & $+3.31$ \\
    T4B & Rare-5 mIoU ($\uparrow$) & multiscale & $21.3\pm1.6$ & $23.5\pm2.4$ & $+2.2$ \\
    T5A & MAE@0.20\,\AA\ ($\downarrow$) & iso & 52.13 & 41.54 & $-10.59$ \\
    T5B & Rename ratio ($\uparrow$) & matched & 0.830 & 0.938 & $+0.108$ \\
    T6A & WER ($\downarrow$) & matched & 23.26 & 14.63 & $-8.63$ \\
    T6B & Bal.\ acc.\ $L{=}3$ ($\uparrow$) & matched & $0.279\pm0.028$ & $0.410\pm0.029$ & $+0.131$ \\
    T7B & Clean acc.\ ($\uparrow$) & matched & $78.0\pm1.4$ & $81.0\pm1.2$ & $+3.0$ \\
    T7B & Acc.\ @$\sigma{=}0.1$ ($\uparrow$) & matched & 50.2 & 73.8 & $+23.6$ \\
    \bottomrule
  \end{tabular}\\[0.35em]
  \raggedright\scriptsize
  $\Delta=$ regularised $-$ B0 (negative better when $\downarrow$).
  Arms: iso${=}$even-spread${=}$isotropic penalty $\Sigma'\propto I$;
  matched${=}\hat\Sigma_{\mathrm{task}}$ (well-identified / constructed axes);
  aniso${=}$structured non-isotropic $\hat\Sigma_{\mathrm{task}}$;
  soft match${=}$soft $\lambda$ on matched axes;
  multiscale${=}$structured multi-scale recipe; hard-proj${=}$oracle (not a trainer).\\
  $^\dagger$T3A/T3B use an estimated photometric-orbit $\hat\Sigma_{\mathrm{task}}$ with a weak
  eigengap (T3A spectrum ratio ${\approx}1.08$---distrust subspace identification; Online
  Appendix~1). Not the same confidence level as oracle/constructed matched rows.
\end{table}

All fifteen primary metrics improve over baseline
(sign test $p=2^{-14}\approx6\times10^{-5}$ against $\mathrm{Binomial}(15,1/2)$).
That is a check of \emph{directional consistency}---some named penalty beats none across
heterogeneous arms---not fifteen independent isotropic wins and not a meta-analytic effect
size.
Breadth $\neq$ even-spread-only; the LQ default-regularisation claim is
Corollary~\ref{cor:even-spread-vs-erm}.

\subsection{Cover beats miss (one controlled illustration)}
\label{sec:emp-tier2}
\label{sec:emp-test1}
\label{sec:T1}
\label{sec:T1-deep}

When axes are \emph{known by construction}, forcing cover recovers the Figure~\ref{fig:pipeline}
ordering---an illustration of Theorem~\ref{thm:game}, not the main product.
Fashion-MNIST spike (rank-$16$ $W$, $\sigma_{\mathrm{test}}=1.5$, five seeds): oracle hard
projection gives matched OOD $75.8\pm1.6\%$ vs.\ even-spread $65.5\pm4.5\%$, wrong-axis
$44.1\pm6.2\%$, baseline $39.9\pm3.7\%$ (Figure~\ref{fig:t1-deep-oracle}).
SVHN ResNet: matched$-$even-spread $+38.3$\,pp under the same protocol
(Figure~\ref{fig:t1-deep-oracle-svhn}).
Soft finite-$\lambda$ arms, estimated-$\hat W$, and $\lambda$ sweeps: Online Appendix~1
(Soft$\star$ remains open).

\begin{figure}[htbp]
\centering
\SubmissionFig{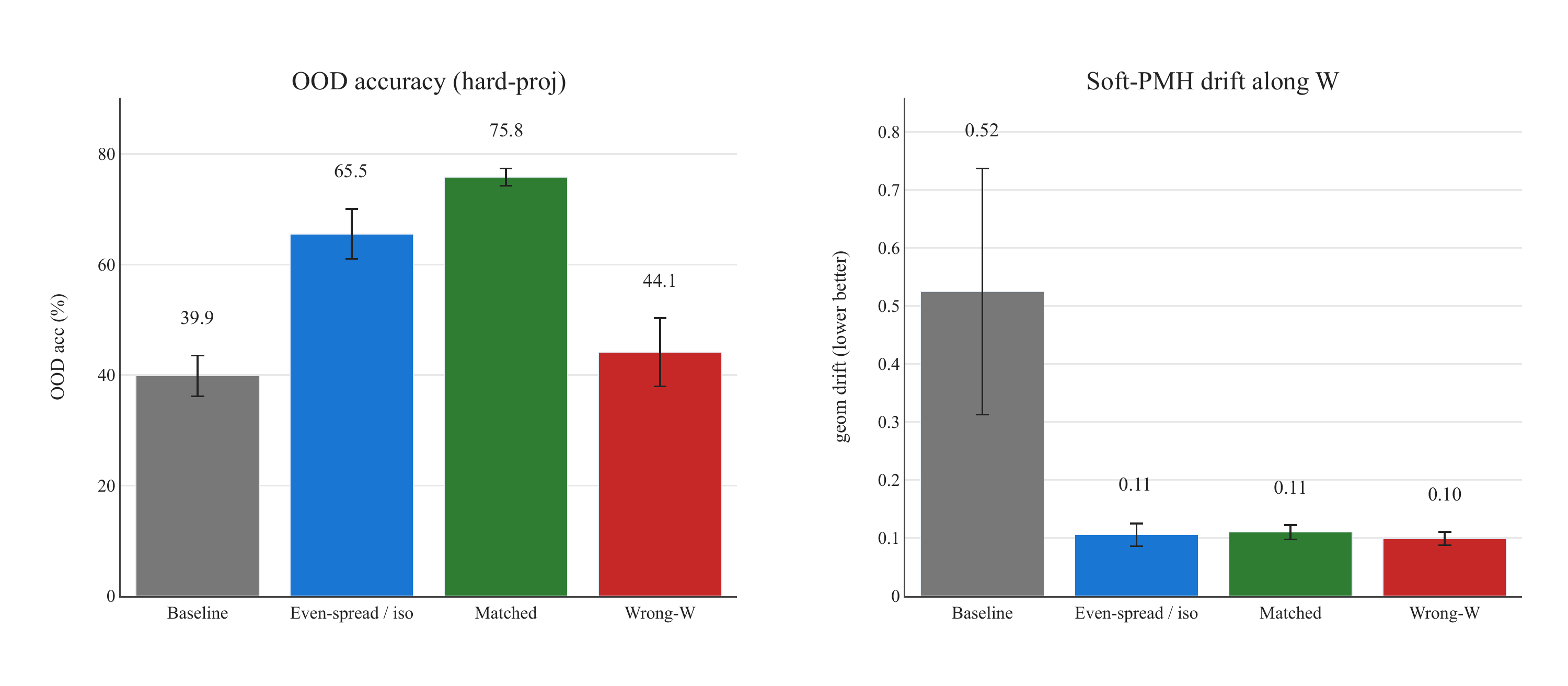}
\caption{\textbf{Oracle cartoon: cover beats miss.}
Hard-projection OOD on Fashion-MNIST: matched $\succ$ even-spread $\succ$ wrong
(soft finite-weight arms are weaker---Online Appendix~1).}
\label{fig:t1-deep-oracle}
\end{figure}

\begin{figure}[htbp]
\centering
\SubmissionFig{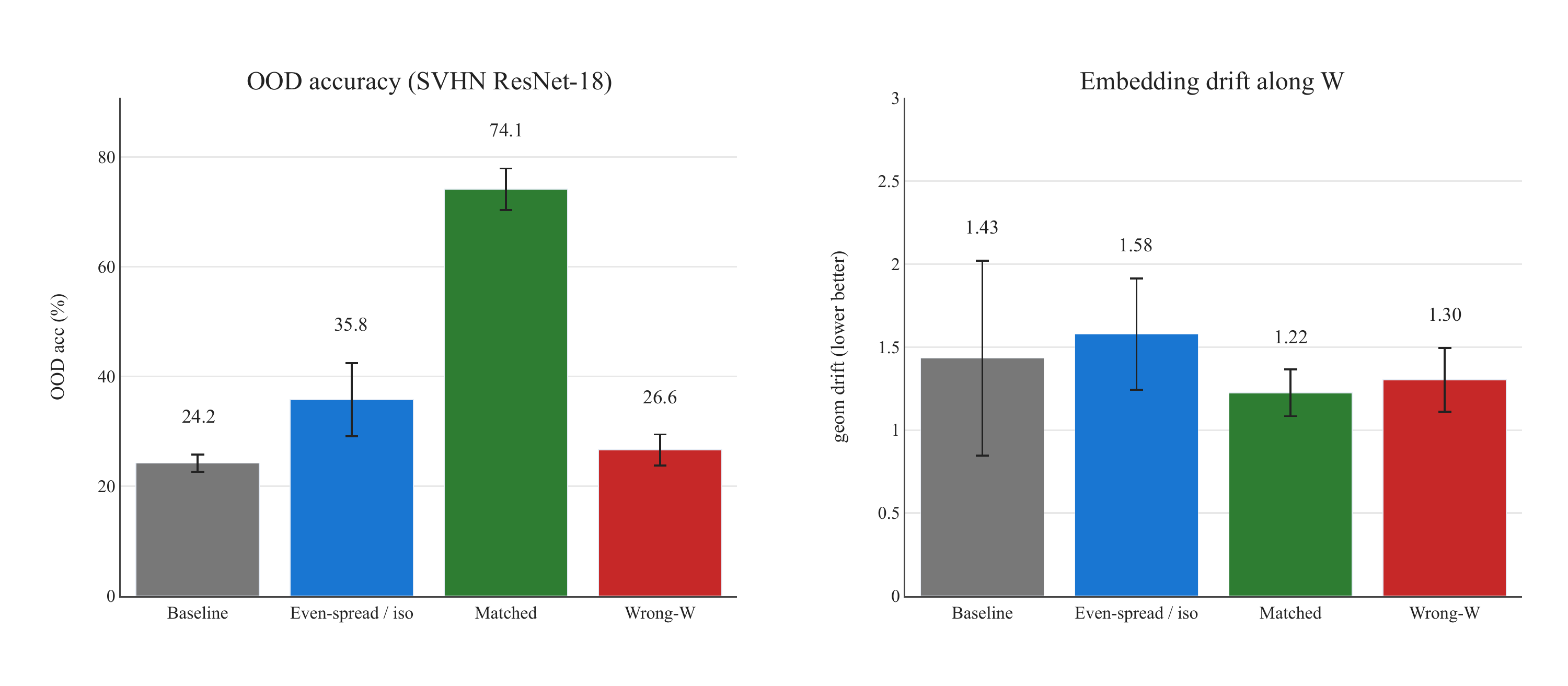}
\caption{\textbf{Same oracle ordering on SVHN (ResNet-18).}
Matched$-$even-spread $+38.3$\,pp under hard projection.}
\label{fig:t1-deep-oracle-svhn}
\end{figure}

\subsection{Split axes, not a single scoreboard}
\label{sec:emp-tier3}
\label{sec:emp-test3}
\label{sec:empirical-partials}
\label{sec:headline}
\label{sec:T7}
\label{par:tier3-cost}

A method can win the probes it regularises and lose the ones it misses
(Corollary~\ref{cor:nfl-noise})---so we refuse to collapse the paper to one leaderboard
number.
Table~\ref{tab:tier3-competitors} is that reminder, not a claim that matched PMH dominates
every field baseline.

On CIFAR-10 ViT (Figure~\ref{fig:t7b-multiseed-flagship}), matched wins clean /
Gaussian~$\sigma{=}0.1$ and loses registered PGD metrics to PGD-AT: opposite rankings on
purpose.
Elsewhere VAT wins T2B worst-shift; CE edges Whisper WER while matched still moves geometry.
Full protocols: Online Appendix~1.

\input{sections/fragments/t7b_flagship_figure}

\begin{table}[!htbp]
  \centering
  \caption{Field comparisons are mixed on purpose (task axis only).
  Two T7B/PGD-AT rows $=$ one comparison on two axes
  (Corollary~\ref{cor:nfl-noise}).
  Not the paper's main claim.}
  \label{tab:tier3-competitors}
  \small
  \setlength{\tabcolsep}{3pt}
  \begin{tabular}{@{}llp{2.2cm}p{2.3cm}l@{}}
    \toprule
    Block & Competitor & Matched (task) & Competitor (task) & Verdict \\
    \midrule
    T7B & PGD-AT
      \newline{\scriptsize (clean / Gaussian)} &
      clean $81.0$; $\sigma{=}0.1$ $73.8$ &
      clean $64.7$; $\sigma{=}0.1$ $62.2$ &
      win (clean$+$generic) \\
    T7B & PGD-AT
      \newline{\scriptsize (registered PGD)} &
      PGD@4 $16.6$; drift $0.93$ &
      PGD@4 $44.6$; drift $0.18$ &
      lose (PGD axis) \\
    T2B & VAT & worst-shift $69.2$ &
      worst-shift $81.7$ &
      lose (head-to-head) \\
    T7A & iso & syc.\ $13.5$; MC1 $0.55$ &
      syc.\ $5.8$; MC1 $0.65$ &
      lose (pilot) \\
    T6A & CE re-run & WER $14.63$ &
      WER $13.91$ &
      lose (WER); win (geom.) \\
    \bottomrule
  \end{tabular}
\end{table}

\subsection{Limits}
\label{sec:soft-drift}
\label{sec:limits-emp}
\label{par:t6b-nonsel}
\label{par:t6b-tension}

We proved the ridge case; deep blocks are experiments under a specified perturbation.
T3A/T3B even-spread shortfalls are additive-$\Delta$ scope mismatches
(\S\ref{sec:scope}), not reasons to abandon default regularisation where the model fits.
Soft$\star$ / Scale (Table~\ref{tab:open}) are open---upgrades after the default, not
prerequisites for it.
\label{sec:T2}\label{sec:T2B}\label{sec:T3}\label{sec:T3A}\label{sec:T3B}
\label{sec:T4A}\label{sec:T5}\label{sec:T5A}\label{sec:T5B}
\label{sec:T6}\label{sec:T6A}\label{sec:T6B}\label{sec:T7A}
\label{sec:predicted-failures}\label{sec:evidence-summary}
\label{sec:empirical-inventory}\label{sec:empirical-glance}

%% file: sections/fragments/t4b_multiseed_numbers.tex
\providecommand{\TFourBSeeds}{$\{7,13,42\}$}
\providecommand{\TFourBBZeroMIoU}{21.3\%}
\providecommand{\TFourBEOneMIoU}{19.0\%}
\providecommand{\TFourBMsMIoU}{23.5\%}
\providecommand{\TFourBBZeroMoto}{10.9\%}
\providecommand{\TFourBEOneMoto}{10.0\%}
\providecommand{\TFourBMsMoto}{9.0\%}
\providecommand{\TFourBBZeroMIoUStd}{$\pm$1.6}
\providecommand{\TFourBEOneMIoUStd}{$\pm$2.0}
\providecommand{\TFourBMsMIoUStd}{$\pm$2.4}

%% file: sections/fragments/t1_estimated_w_numbers.tex
\providecommand{\TOneEstMatched}{76.2\%}
\providecommand{\TOneEstIso}{67.7\%}
\providecommand{\TOneEstWrong}{44.4\%}
\providecommand{\TOneEstBZero}{45.6\%}
\providecommand{\TOneEstMatchedStd}{$\pm$1.5}
\providecommand{\TOneEstIsoStd}{$\pm$2.8}
\providecommand{\TOneEstWrongStd}{$\pm$8.7}
\providecommand{\TOneEstBZeroStd}{$\pm$4.9}
\providecommand{\TOneEstMatchedMinusIso}{+8.5}
\providecommand{\TOneEstPairedP}{0.0046}
\providecommand{\TOneEstMeanCos}{0.988}

%% file: sections/fragments/t7b_flagship_figure.tex
\IfFileExists{fig11_t7b_multiseed_flagship.pdf}{%
  \def\TSevenBFlagshipPDF{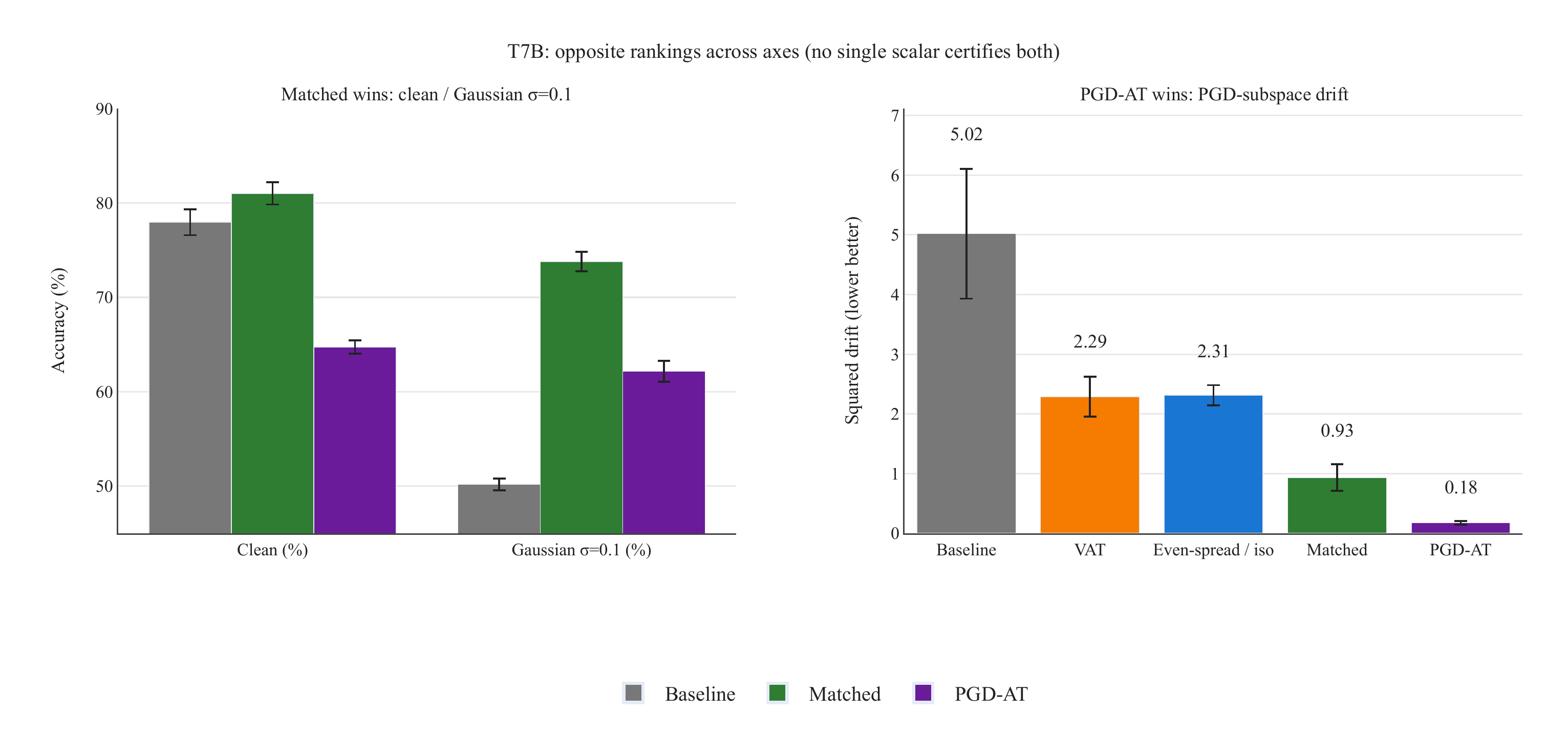}%
}{%
  \IfFileExists{figures/main/fig11_t7b_multiseed_flagship.pdf}{%
    \def\TSevenBFlagshipPDF{figures/main/fig11_t7b_multiseed_flagship.pdf}%
  }{%
    \IfFileExists{\ManuscriptRoot/figures/main/fig11_t7b_multiseed_flagship.pdf}{%
      \def\TSevenBFlagshipPDF{\ManuscriptRoot/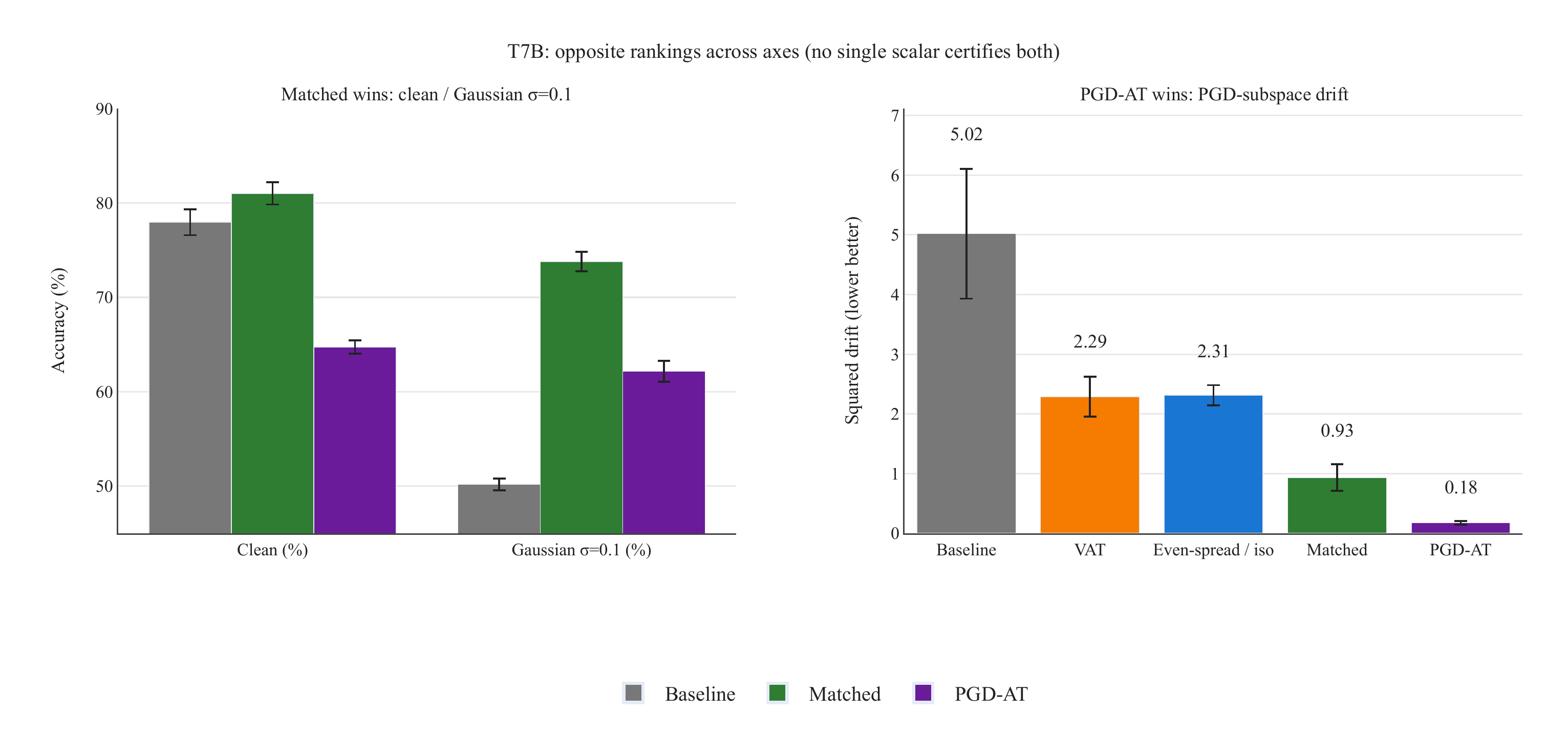}%
    }{%
      \def\TSevenBFlagshipPDF{}%
    }%
  }%
}
\ifx\TSevenBFlagshipPDF\empty
\else
\begin{figure}[htbp]
\centering
\SubmissionFig{\TSevenBFlagshipPDF}
\caption{\textbf{Do not collapse to one metric (T7B, CIFAR-10 ViT).}
\emph{Left:} clean / Gaussian---matched leads.
\emph{Right:} PGD drift---PGD-AT leads.
Opposite rankings are the point (Corollary~\ref{cor:nfl-noise}).}
\label{fig:t7b-multiseed-flagship}
\end{figure}
\fi

%% file: sections/09_discussion.tex
\section{Discussion}
\label{sec:discussion}

Ordinary training finishes when the task loss is small; it never fixes internal
representation energy, which is why small label-preserving noise so often breaks models.
The first design move is even-spread / isotropic ($\Sigma'\propto I$)---no deployment
knowledge, no architecture
change, and---in the linear--quadratic response---no coverage miss
(Corollary~\ref{cor:even-spread-vs-erm})---not a clever estimate of $\Sigmatask$.
Arm comparisons come after.

\subsection{Implications}
\label{sec:discussion-implications}

\textbf{Gap and default regularisation.}
Task loss does not control Jacobian energy; fragility under mild nuisance is predictable.
Classical noise injection already fixes that at second order
\citep{bishop1995training,an1996effects}---but only when used as default regularisation,
which current practice does not do.
Even-spread / isotropic ($\Sigma'\propto I$) needs no axis estimate, strictly beats
task-only training drift in the linear--quadratic response
(Corollary~\ref{cor:even-spread-vs-erm}), and is
the default when axes are unknown---safer \emph{as that default} than betting on a one-axis
field penalty whose registered law may miss deployment
(Table~\ref{tab:methods-as-estimators}). Named penalties of several kinds beat none in
Table~\ref{tab:tier1-primary}.

\textbf{Then match; what this opens.}
Full-support penalties differ only by a constant as $\lambda\to\infty$
(Corollary~\ref{cor:rate-separation}); a miss leaves a floor (Theorem~\ref{thm:game}).
Once internal energy is named, the programme is broader than matched PMH: losses and
architectures that make sensitivity cheap to control.
Oracle arm orderings and field head-to-heads are illustrations
(\S\ref{sec:empirical})---keep them separate from the default-regularisation claim.

\subsection{Soft training and the estimation gap}
\label{sec:discussion-soft}
\label{sec:estimation-gap}

Default regularisation already helps at recipe $\lambda$.
Soft$\star$ / Scale (Table~\ref{tab:open}) are open upgrades after the default---whether a
trainable soft penalty recovers hard-projection geometry, and how to estimate
$\Sigmatask$---not reasons to wait before using even-spread.
We do not claim a completed nonlinear cover proof; item~(O) stays open on purpose.

\subsection{Scope}
\label{sec:not-claims}
\label{sec:scope}

Formal claims apply to structurally label-preserving couplings with a named displacement
law.
Corollary~\ref{cor:even-spread-vs-erm} assumes additive $\Delta$ with bounded moments
(Lemma~\ref{lem:lindrift}); Proposition~\ref{prop:lazy-even-spread} extends it under a lazy
Jacobian by $O(\varepsilon_{\mathrm{jac}})$.
Support is scoped to additive blocks (T1, T2A, T2B)---e.g.\ T2B proxy $0.044\to0.011$ under
even-spread.
T3A (occlusion) and T3B (shared photometric orbit; secondary Gaussian steer) are
nuisance-type mismatches outside that model, not lazy/nonlinearity failures: their soft
even-spread $\varepsilon_{\mathrm{jac}}$ proxies are small and below T2B
(Online Appendix~1; T3B synthetic batch only).

\subsection{What to do with this paper}
\label{sec:practical-read}

Checklist: Table~\ref{tab:audit-card}.
Default: even-spread / isotropic.
Known structured axes: match.
Estimating $\Sigmatask$ from unlabeled data remains open.

\subsection{Open problems}
\label{sec:open-questions}
\label{sec:tables}

\input{tables_open}

Item~(O): Proposition~\ref{prop:lazy-even-spread} covers even-spread; matched cover under
GD and $\Sigmatask$ identification remain open.

%% file: tables_open.tex
\begin{table}[t]
  \centering
  \caption{Open problems left by the framework.}
  \label{tab:open}
  \label{tab:scale}
  \small
  \setlength{\tabcolsep}{3pt}
  \begin{tabular}{@{}lp{0.72\linewidth}@{}}
    \toprule
    ID & Problem \\
    \midrule
    (O) & Nonlinear bridge: general matched cover under GD (even-spread: Prop.~\ref{prop:lazy-even-spread}) \\
    D$^\dagger$ & Choosing a subspace estimate when the eigengap is weak \\
    Pred & Pre-registered forecast study (scored predictions) \\
    Scale & Upgrade: cover real directional shifts when $\Sigmatask$ must be estimated \\
    Soft$^\star$ & Upgrade: a trainable soft penalty that recovers hard-projection geometry \\
    \bottomrule
  \end{tabular}
\end{table}

%% file: sections/10_conclusion.tex
\section{Conclusion}
\label{sec:conclusion}

Task-only training optimises the label and stops; it never fixes internal representation
energy, so mild label-preserving noise can break an otherwise accurate model.
The Matching Principle asks whether the training penalty $\Sigma'$ covers deployment
$\Sigmatask$.

Three takeaways, in that order.
First, even-spread / isotropic is the no-thinking default---no axis estimate, no architecture
change, strictly less drift than task-only training in the linear--quadratic response---and a named
second-moment penalty beats none across seven domains.
Second, covering is necessary as $\lambda\to\infty$; a controlled illustration recovers
match $\succ$ even-spread $\succ$ wrong-axis when axes are forced---illustration, not the
product; Soft$\star$ is an open upgrade, not a reason to skip default regularisation.
Third, we proved the ridge case; deep blocks are experiments under a specified perturbation,
and single task scores are
not the claim (field comparisons stay mixed on purpose).

Fix internal energy by default; match when axes are known; treat losses and architectures
that control representation sensitivity as first-class design.
Checklist: \S\ref{sec:recipe}.
\MatchingPmhPackage{} implements the estimators and penalty.

\paragraph{Reproducibility.}
Frozen per-task JSON; full protocols, screens, and tables for T1--T7B are in the separately
uploaded Online Appendix~1 PDF
(\MatchingPmhRepo{}\MatchingPmhInstall).

%% file: appendix/proofs_main.tex
\section{Proofs of load-bearing results}
\label{app:proofs}

This appendix proves the main-text spine: the linearisation remainder, the consumer-class
bound, deployment consistency, the coverage game, rate separation, and the first-order
obstruction.
License statements that are not needed for those claims live in
Appendix~\ref{app:conditional-licenses}.

\begin{definition}[Structurally label-preserving deployment]
\label{def:label-preserving}
Let $C$ be label-relevant latent state, $Z$ deployment-varying state, and $E_Y$ label noise, with
$X=g(C,Z)$ and $Y=\ell(C,E_Y)$.  A deployment intervention replaces $Z$ by
$Z^+\sim K(\cdot\mid C,Z)$ while holding $C,E_Y$ fixed:
$X^+=g(C,Z^+)$ and $Y^+=\ell(C,E_Y)=Y$ almost surely.
\end{definition}
Structural label preservation is a property of the coupling, not of
$p(y\mid x)$.  It neither requires nor is implied by
$p_{\mathrm{train}}(y\mid x+\delta)=p_{\mathrm{train}}(y\mid x)$.

\begin{lemma}[Linearised vs.\ exact drift]
\label{lem:lindrift}
Let $\phi$ have $\beta$-Lipschitz Jacobian and $\sup_x\|J_\phi(x)\|_F\leq M$.
Let $\Delta$ be zero-mean with second-moment matrix
$\Sigmatask=\E[\Delta\Delta^\top]$ and scale parameter $\sigma$ (e.g.\
$\Delta=\sigma Z$ with $\E\|Z\|_2=O(1)$ and $\E\|Z\|_2^3<\infty$), independent of $x$
in the additive specialisation used by the matrix theory.
Then
\[
D_Q(\phi)=\E_{x,\Delta}\bigl[\|\phi(x+\Delta)-\phi(x)\|_2^2\bigr]
=\E_x\bigl[\Tr\!\bigl(J_\phi(x)^\top J_\phi(x)\,\Sigmatask\bigr)\bigr]+R(\phi,\sigma),
\]
with $|R(\phi,\sigma)|\leq C\bigl(M\beta\,\E\|\Delta\|_2^3+\beta^2\,\E\|\Delta\|_2^4\bigr)$
for a universal $C>0$ (hence $O(\sigma^3)$ under the stated normalisation).
Under $C^3$ regularity with integrably bounded third derivatives and central symmetry of
$\Delta$, the cubic remainder vanishes in expectation and $R(\phi,\sigma)=O(\sigma^4)$.
In the isotropic-\emph{deployment} special case $\Sigmatask=\sigma^2 I_d$ the leading term
reduces to
$\sigma^2\E_x[\|J_\phi(x)\|_F^2]$.
\end{lemma}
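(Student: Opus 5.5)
The plan is a pointwise Taylor expansion followed by expectation over $\Delta$ with $x$ held fixed. Independence of $\Delta$ and $x$ is used here, and Fubini applies because every term is integrable under the stated moment assumptions.

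For fixed $x$ and $\delta$, I write
\[
\phi(x+\delta)-\phi(x)=J_\phi(x)\delta+r(x,\delta),\qquad r(x,\delta)=\int_0^1\bigl(J_\phi(x+t\delta)-J_\phi(x)\bigr)\delta\,\mathrm{d}t .
\]
Since the Jacobian is $\beta$-Lipschitz, this gives $\|r(x,\delta)\|_2\le \tfrac{\beta}{2}\|\delta\|_2^2$. Expanding the square then yields
\[
\|\phi(x+\delta)-\phi(x)\|_2^2=\delta^\top J_\phi(x)^\top J_\phi(x)\delta+2\langle J_\phi(x)\delta,r\rangle+\|r\|_2^2 .
\]
The leading term has expectation $\E_\Delta[\Tr(J_\phi^\top J_\phi\Delta\Delta^\top)]=\Tr(J_\phi(x)^\top J_\phi(x)\Sigmatask)$, and averaging over $x$ produces the stated main term. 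This step does not need zero mean; the zero-mean assumption only enters the symmetric refinement.

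The remainder $R$ is the expectation of the two cross terms. By Cauchy--Schwarz and $\|J_\phi\|_{\mathrm{op}}\le\|J_\phi\|_F\le M$, the mixed term is bounded by $2M\|\delta\|_2\cdot\tfrac{\beta}{2}\|\delta\|_2^2=M\beta\|\delta\|_2^3$. The quadratic term is bounded by $\tfrac{\beta^2}{4}\|\delta\|_2^4$. Taking expectations gives $|R|\le C(M\beta\,\E\|\Delta\|_2^3+\beta^2\E\|\Delta\|_2^4)$ with $C=1$. Under $\Delta=\sigma Z$ these moments are $\sigma^3\E\|Z\|^3$ and $\sigma^4\E\|Z\|^4$, so $R=O(\sigma^3)$.

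The main obstacle is the quartic moment. The stated normalisation only assumes $\E\|Z\|^3<\infty$, so I would either read ``$O(\sigma^3)$'' as also requiring $\E\|Z\|^4<\infty$, or truncate. For the truncation, I would use the global bound $\|\phi(x+\delta)-\phi(x)\|\le M\|\delta\|$, which gives $\|r\|\le 2M\|\delta\|$. Then $\|r\|^2\le \min(\tfrac{\beta^2}{4}\|\delta\|^4,\,4M^2\|\delta\|^2)\le M\beta\|\delta\|^3$, because the minimum of two numbers is at most their geometric mean. This bounds the whole remainder by a multiple of $M\beta\,\E\|\Delta\|^3$ alone, so the third moment suffices, which I would note as a remark.

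For the $C^3$ refinement, I would Taylor-expand to third order: $\phi(x+\delta)-\phi(x)=J\delta+\tfrac12 H[\delta,\delta]+T(x,\delta)$, with $\|T\|=O(\|\delta\|^3)$ from the bounded third derivatives. The cubic contribution to the square is $\langle J\delta,H[\delta,\delta]\rangle$, which is odd in $\delta$. Its expectation therefore vanishes under central symmetry of $\Delta$, given integrability of $\|\delta\|^3$, and every surviving term is of order $\|\delta\|^4$. This requires $\E\|\Delta\|^4=O(\sigma^4)$, which I would state as part of the hypothesis.

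Finally, the isotropic case $\Sigmatask=\sigma^2 I$ is the identity $\Tr(J_\phi^\top J_\phi)=\|J_\phi\|_F^2$.
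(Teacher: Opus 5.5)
Your proposal is correct and follows the paper's own proof step for step: the integral-form remainder with $\|r\|_2\le\tfrac{\beta}{2}\|\Delta\|_2^2$, the expansion of the square, Cauchy--Schwarz with $\|J_\phi\|_{\mathrm{op}}\le M$ on the cross term, the odd cubic term removed by central symmetry, and the trace identity for the isotropic case. Your extra interpolation $\|r\|_2^2\le\min\bigl(\tfrac{\beta^2}{4}\|\Delta\|_2^4,\,4M^2\|\Delta\|_2^2\bigr)\le M\beta\|\Delta\|_2^3$ is worth keeping: it gives $|R|\le 2M\beta\,\E\|\Delta\|_2^3$ and so justifies the $O(\sigma^3)$ claim under the stated third-moment normalisation, whereas the paper's displayed bound implicitly needs $\E\|Z\|_2^4<\infty$, which is also needed for the $O(\sigma^4)$ refinement, as you correctly flag.
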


Short proofs follow.
Conditional licenses are stated in Appendix~\ref{app:conditional-licenses}; secondary
calculus and D1/D2/D7 remain in Online Appendix~1.

\paragraph{Proof of Lemma~\ref{lem:lindrift}.}
Write $\phi(x+\Delta)-\phi(x)=J_\phi(x)\Delta+r(x,\Delta)$ with integral remainder
$\|r\|_2\le\tfrac12\beta\|\Delta\|_2^2$ from $\beta$-Lipschitz Jacobians.
Then
\[
\|\phi(x+\Delta)-\phi(x)\|_2^2
=\|J_\phi\Delta\|_2^2+2\langle J_\phi\Delta,r\rangle+\|r\|_2^2.
\]
Taking expectations, the leading term is
$\E[\|J_\phi\Delta\|_2^2]=\E_x[\Tr(J_\phi^\top J_\phi\Sigmatask)]$.
The cross term is bounded by
$\E[2\|J_\phi\|_{\mathrm{op}}\|\Delta\|_2\|r\|_2]
\le M\beta\,\E\|\Delta\|_2^3$ (using $\|J_\phi\|_F\le M$), and
$\E\|r\|_2^2\le\tfrac14\beta^2\E\|\Delta\|_2^4$, giving the stated
$O(\E\|\Delta\|_2^3)$ remainder under the Lipschitz hypotheses.
Under the stronger $C^3$ and integrability assumptions, the only cubic cross term in a
third-order Taylor expansion is odd in $\Delta$; its expectation vanishes by central
symmetry, leaving $O(\E\|\Delta\|_2^4)=O(\sigma^4)$.
The isotropic-\emph{deployment} specialisation $\Sigmatask=\sigma^2 I$ recovers
$\sigma^2\E\|J_\phi\|_F^2$.
\(\square\)

\paragraph{Proof of Theorem~\ref{thm:consumer-class}.}
For any $L$-Lipschitz $h$,
$|h(\phi(x+n))-h(\phi(x))|\le L\|\phi(x+n)-\phi(x)\|_2$, so squaring and taking
expectations yields the upper bound $L^2 D_Q(\phi)$.
In the linearised regime with linear consumers $h(z)=\langle u,z\rangle$, $\|u\|_2\le L$,
\[
\sup_{\|u\|_2\le L}\E|\langle u,J_\phi n\rangle|^2
=L^2\lambda_{\max}\!\bigl(\E[J_\phi\Sigmatask J_\phi^\top]\bigr)
\le L^2\Tr\!\bigl(\E[J_\phi\Sigmatask J_\phi^\top]\bigr)
=L^2\tilde D_Q(\phi),
\]
with equality throughout when the Gram has rank one (and a gap of up to $d_\phi$ otherwise).
\(\square\)

\paragraph{Proof of Theorem~\ref{thm:deployment-consistency}.}
Apply Lemma~\ref{lem:lindrift} with deployment scale $\sigma$: the exact embedding drift
expands as
$\E\|\phi(x+\Delta)-\phi(x)\|_2^2
=\E_x[\Tr(J_\phi(x)^\top J_\phi(x)\Sigmatask)]+O(\sigma^3)$
under the lemma's regularity (and $O(\sigma^4)$ under its stronger symmetry hypotheses).
The population matched-\PMH{} penalty with $\Sigma'=\Sigmatask$ is exactly the displayed
leading quadratic form, hence the second-order shadow of \emph{deployment embedding drift}
under the true coupling~$Q$.
The classical noise-injection / Tikhonov reading
\citep{bishop1995training,an1996effects,wager2013dropout} is the same expansion for
matched penalties; the present contribution is the deployment object and the matching design
rule, not a proved expansion of task-only training risk.
\(\square\)

\paragraph{Proof of Theorem~\ref{thm:game}.}
By the spectral theorem,
$(I+2\lambda\Sigma')^{-1}\to\Pi_{\ker(\Sigma')}$ in operator norm.  Continuity of the
quadratic form gives
\[
D_{\Sigmatask}(w_\lambda(\Sigma';v))
\longrightarrow
\|\Sigmatask^{1/2}\Pi_{\ker(\Sigma')}v\|_2^2,
\]
which proves the fixed-$v$ assertion.

If $\ker(\Sigma')\subseteq\ker(\Sigmatask)$, the limit is zero for every $v$.
Conversely suppose $\ker(\Sigma')\not\subseteq\ker(\Sigmatask)$.  Choose
$k\in\ker(\Sigma')$ with $\Sigmatask^{1/2}k\neq0$ and set
$R=\rangeop(\Sigmatask)$ and $v=\Pi_Rk$.  Then $v\neq0$.  Put
$p=\Pi_{\ker(\Sigma')}v$.  If $p\in\ker(\Sigmatask)=R^\perp$, then
$0=\langle v,p\rangle=\|p\|_2^2$, so $p=0$.  But that would make
$v\perp\ker(\Sigma')$, contradicting
$\langle v,k\rangle=\|\Pi_Rk\|_2^2=\|v\|_2^2>0$.
Thus $\Sigmatask^{1/2}p\neq0$ and the limit is positive for this
$v\in R$.  Since symmetric PSD matrices satisfy
$\ker(\Sigma')\subseteq\ker(\Sigmatask)$ iff
$\rangeop(\Sigma')\supseteq\rangeop(\Sigmatask)$, the uniform equivalence follows.
Under range coverage the kernel component is annihilated by $\Sigmatask$ and every non-kernel
component of the resolvent is $O(\lambda^{-1})$; squaring yields
$O(\lambda^{-2})$.
\(\square\)

\paragraph{Proof of Corollary~\ref{cor:even-spread-vs-erm}.}
With $\Sigma'=cI$ the resolvent is scalar:
$w_\lambda(cI;v)=(1+2\lambda c)^{-1}v$.
Hence
$D_{\Sigmatask}(w_\lambda(cI;v))=(1+2\lambda c)^{-2}\,v^\top\Sigmatask v$.
For $\lambda=0$ this equals $D_0:=v^\top\Sigmatask v$; for $\lambda>0$ and $c>0$ the
prefactor is strictly smaller than~$1$ whenever $D_0>0$.
The $\lambda\to\infty$ limit is~$0$, which also follows from Theorem~\ref{thm:game}
because $\rangeop(cI)=\mathbb{R}^{d_x}\supseteq\rangeop(\Sigmatask)$.
\(\square\)

\paragraph{Proof of Corollary~\ref{cor:nfl-noise}.}
Identify $\Sigma'$ with an effective response matrix in Theorem~\ref{thm:game}
(Assumption~\ref{ass:bridge}, or the second-moment of a training law when that law induces
such a response).
Failure of $\rangeop(\Sigma')\supseteq\rangeop(\Sigmatask)$ yields some
$v\in\rangeop(\Sigmatask)$ with positive limiting drift
(contrapositive of Theorem~\ref{thm:game}).
Fixed-$v$ vanishing requires only $\Pi_{\ker(\Sigma')}v\in\ker(\Sigmatask)$, so finitely many
fixed probes can be matched while a positive floor remains on some unseen $v'$.
Scalars that are not the true deployment risk under~$Q$ can therefore look healthy while
a miss remains; the true risk under~$Q$ does see the floor.
This scopes the floor to the LQ / bridge response; it does not by itself identify every
VAT / aug / AT procedure with \PMH{}.
\(\square\)

\paragraph{Proof of Corollary~\ref{cor:rate-separation}.}
In a shared eigenbasis write $a_i=\tau_i\tilde v_i^2$ and
$f_\lambda(\mu)=\sum_i a_i/(1+2\lambda\mu_i)^2$ (Online Appendix~1 mismatch asymptotics).
If $\mu_j=0$ and $a_j>0$, the $j$th summand equals $a_j$ for every $\lambda$, so
$f_\lambda(\mu)\ge a_j$ (Type~B).
If $\mu_i>0$ whenever $a_i>0$, expand
$(1+2\lambda\mu_i)^{-2}=(4\lambda^2\mu_i^2)^{-1}+O(\lambda^{-3})$ to obtain
$f_\lambda(\mu)=\frac{1}{4\lambda^2}\sum_{a_i>0}a_i/\mu_i^2+O(\lambda^{-3})$.
The leading coefficient is positive and finite for any such full-support penalty, so
$f_\lambda=\Theta(\lambda^{-2})$ and the ratio of two Type~A penalties tends to the ratio of
those coefficients.
\(\square\)

\paragraph{Proof of Proposition~\ref{prop:local-obstruction}.}
Write $P(\theta)=\E_x[\Tr(J_\phi(x)^\top J_\phi(x)\Sigma')]$.
Along a $C^1$ path $\theta(t)$ with $\theta(0)=\theta_0$,
\[
\frac{\dd}{\dd t}P(\theta(t))
=2\,\E_x\!\left[\Tr\!\bigl(J_\phi(x)^\top\,(\partial_t J_\phi(x))\,\Sigma'\bigr)\right].
\]
If the variation changes only the action of $J_\phi$ on $q\in\ker(\Sigma')$, then
$(\partial_t J_\phi)\,\Sigma'=0$ wherever $\partial_t J_\phi$ is supported on multiples of
$q$, because $\Sigma'q=0$.  Thus $\frac{\dd}{\dd t}P\big|_{t=0}=0$.
Any first-order decrease of $\E_x[\|J_\phi q\|_2^2]$ at a critical point of
$\mathcal{L}_{\mathrm{task}}+\lambda P$ must therefore arise from $\mathcal{L}_{\mathrm{task}}$.
The same identity holds at every $t$ along gradient flow while the path exists.
\(\square\)

\paragraph{Proof of Proposition~\ref{prop:lazy-bridge}.}
If $\varepsilon_{\mathrm{jac}}=0$, then $J_\phi(x)=\bar J$ almost everywhere, so
\[
\E_x[\Tr(J_\phi(x)^\top J_\phi(x)\Sigma')]
=\Tr(\bar J^\top\bar J\Sigma').
\]
Under Assumption~\ref{ass:bridge} with $W\equiv\bar J$, this is exactly the population
penalty in Proposition~\ref{prop:bridge}, and the coverage claim reduces to
Theorem~\ref{thm:game}.
For $\varepsilon_{\mathrm{jac}}>0$ and general matched $\Sigma'$, no coverage envelope is
claimed; see Proposition~\ref{prop:lazy-even-spread} for even-spread.
\(\square\)

\paragraph{Proof of Proposition~\ref{prop:lazy-even-spread}.}
This is a leading-order population argument under the affine factorisation of
Assumption~\ref{ass:bridge} applied to $\bar J$---not an independent derivation that
post-training deep-net drift tracks the resolvent under gradient descent.
Write $J_\phi=\bar J+E$ with $\|E\|_{\mathrm{op}}\le\varepsilon_{\mathrm{jac}}$ a.e.\ and
$\Sigma'=cI$.
Then
$\E[\Tr(J_\phi^\top J_\phi\,cI)]=c\|\bar J\|_F^2+O(\varepsilon_{\mathrm{jac}})$
($\|\bar J\|_F$ and dimensions fixed; $\|E\|_F\le\sqrt{r}\,\varepsilon_{\mathrm{jac}}$).
Carrying that remainder through the scalar resolvent of
Corollary~\ref{cor:even-spread-vs-erm} yields
$\tilde D_{\Sigmatask}=D_0/(1+2\lambda c)^2+O(\varepsilon_{\mathrm{jac}})$.
\(\square\)

\paragraph{Proof of Proposition~\ref{prop:bridge}.}
Under Assumption~\ref{ass:bridge}, $J_\phi(x)=W$ is constant, so the population \PMH{}
penalty is $\Tr(W\Sigma'W^\top)$ (up to the fixed decoder).  Row-wise completion of the
square yields the displayed objective and the resolvent minimisers; Theorem~\ref{thm:game}
applies with penalty matrix $\Sigma'$.
\(\square\)

\paragraph{Proof of Proposition~\ref{prop:gauge}.}
Immediate from $J_{\phi_c}=c\,J_\phi$ and bilinearity of the Frobenius pairing.
\(\square\)

\paragraph{Proof of Proposition~\ref{prop:pmh-cap}.}
Set $r=R_{\PMH}/\mathcal{L}_{\mathrm{task}}\in[0,\mathrm{cap}]$.
Then $f=r/(1+r)$ is increasing in $r$, so
$0\leq f\leq\mathrm{cap}/(1+\mathrm{cap})$, with equality exactly when the cap is active.
\(\square\)

%% file: appendix/conditional_licenses.tex
\section{Conditional licenses and secondary formal statements}
\label{app:conditional-licenses}
\label{sec:bridge}
\label{app:bridge}

Optional licenses (not Core Theory): when the coverage game is a training statement, the
task-only sensitivity floor / training cap, and the predictor gauge.

\subsection{Linear and lazy licenses}
\label{sec:conditional-licenses-bridge}

\begin{assumption}[Constrained linear factorisation]
\label{ass:bridge}
Fix $\Sigma'\succeq0$ and $\lambda>0$.
\begin{enumerate}[label=(\roman*), leftmargin=*, itemsep=1pt]
\item \textbf{Affine encoder:} $\phi(x)=Wx+b$ with $W\in\mathbb{R}^{d_\phi\times d_x}$.
\item \textbf{Fixed decoder:} the decoder $h$ is held fixed (not jointly optimised) and is
$L$-Lipschitz.
\item \textbf{Quadratic task proxy:} the trainable encoder parameters are scored by
$\mathcal{L}_{\mathrm{task}}^{\mathrm{lin}}(W)=\tfrac12\|W-V\|_F^2$
for a fixed target $V\in\mathbb{R}^{d_\phi\times d_x}$.
\end{enumerate}
\end{assumption}

\begin{proposition}[Population \PMH{} is the game under constrained linear factorisation]
\label{prop:bridge}
Under Assumption~\ref{ass:bridge}, population \PMH{}~\eqref{eq:pmh-family} reduces to
independent row copies of Theorem~\ref{thm:game}:
$\mathcal{L}_{\PMH(\Sigma')}^{\mathrm{lin}}(W)
=\sum_j\bigl(\tfrac12\|w_j-v_j\|_2^2+\lambda\,w_j^\top\Sigma' w_j\bigr)$,
with $w_j^\star=(I+2\lambda\Sigma')^{-1}v_j$.
Linearised drift $\tilde D_Q(\phi)=\Tr(W\Sigmatask W^\top)$ vanishes uniformly over
$\rangeop(\Sigmatask)$ iff $\rangeop(\Sigma')\supseteq\rangeop(\Sigmatask)$.
\end{proposition}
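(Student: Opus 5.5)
Under Assumption~\ref{ass:bridge} the encoder is affine, so the Jacobian is constant: $J_\phi(x)=W$ for every $x$. The plan is to substitute this into \eqref{eq:pmh-family}, show that the resulting population objective splits into one strictly convex problem per row of $W$, solve each row in closed form, and then pass the drift statement to Theorem~\ref{thm:game}.

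\textbf{Step 1: reduce the penalty.} With $J_\phi\equiv W$ the expectation over $x$ is trivial, and cyclicity of the trace gives
\[
\E_x\bigl[\Tr(J_\phi^\top J_\phi\Sigma')\bigr]=\Tr(W^\top W\Sigma')=\Tr(W\Sigma'W^\top)=\sum_j w_j^\top\Sigma' w_j ,
\]
where $w_j^\top$ is the $j$th row of $W$.

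\textbf{Step 2: split the objective and solve each row.} The task proxy also splits row-wise: $\tfrac12\|W-V\|_F^2=\sum_j\tfrac12\|w_j-v_j\|_2^2$. Adding the two sums yields the displayed decomposition, with no coupling between rows. Each summand is strictly convex, since its Hessian is $I+2\lambda\Sigma'\succ0$. Setting the gradient to zero gives $(w_j-v_j)+2\lambda\Sigma' w_j=0$, hence the unique minimiser $w_j^\star=(I+2\lambda\Sigma')^{-1}v_j=w_\lambda(\Sigma';v_j)$. This is exactly the response of Theorem~\ref{thm:game} with $v=v_j$, apart from the unit normalisation.

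\textbf{Step 3: transfer the drift statement.} The linearised drift is $\tilde D_Q(\phi)=\Tr(W\Sigmatask W^\top)=\sum_j w_j^\top\Sigmatask w_j=\sum_j D_{\Sigmatask}(w_j)$. At the optimum it equals $\sum_j D_{\Sigmatask}(w_\lambda(\Sigma';v_j))$. Theorem~\ref{thm:game} gives the limit $\sum_j\|\Sigmatask^{1/2}\Pi_{\ker\Sigma'}v_j\|_2^2$, which is homogeneous of degree two in each $v_j$, so normalising $v_j$ to unit length costs nothing.

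\textbf{Step 4: interpret ``uniformly over $\rangeop(\Sigmatask)$''.} I would read the phrase as: the limiting drift vanishes for every target $V$ whose rows lie in $\rangeop(\Sigmatask)$. For the ``if'' direction, coverage makes every summand vanish, at rate $O(\lambda^{-2})$. For the ``only if'' direction, Theorem~\ref{thm:game} supplies a unit $v\in\rangeop(\Sigmatask)$ with a positive floor; take $V$ with one row equal to $v$ and the others zero to obtain a positive floor for $\tilde D_Q$.

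The main obstacle is this last quantifier, because the statement's ``uniformly'' is ambiguous and the natural reading ranges over targets $V$. I would state that reading explicitly. Beyond that the argument is routine, since the earlier theorem does the real work.
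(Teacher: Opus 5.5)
Your proposal is correct and follows the paper's own route. The constant Jacobian $W$ collapses the penalty to $\Tr(W\Sigma'W^\top)$, the objective splits into independent row problems with resolvent minimisers $(I+2\lambda\Sigma')^{-1}v_j$, and Theorem~\ref{thm:game} supplies the coverage equivalence; you use stationarity plus strict convexity where the paper says ``completion of the square,'' and your reading of ``uniformly over $\rangeop(\Sigmatask)$'' as ranging over targets $V$ with rows in $\rangeop(\Sigmatask)$ (one nonzero row for ``only if'') makes explicit a quantifier the paper leaves implicit.
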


\begin{assumption}[Lazy / near-constant Jacobian regime]
\label{ass:lazy}
There exists a fixed matrix $\bar J$ such that
$\varepsilon_{\mathrm{jac}}:=\mathrm{ess\,sup}_x\|J_\phi(x)-\bar J\|_{\mathrm{op}}$
is small throughout the relevant training window, in the sense of the lazy-training /
neural-tangent regime
\citep{jacot2018ntk,chizat2019lazy,lee2019wide}.
\end{assumption}

\begin{proposition}[Lazy-regime reduction to the coverage game]
\label{prop:lazy-bridge}
If Assumption~\ref{ass:lazy} holds with $\varepsilon_{\mathrm{jac}}=0$ (frozen Jacobian
$\bar J$), then the population matched-\PMH{} penalty equals
$\E_x[\Tr(\bar J^\top\bar J\Sigma')]$, and under Assumption~\ref{ass:bridge} with
$W\equiv\bar J$ the objective reduces to independent copies of Theorem~\ref{thm:game}
(cover still requires $\rangeop(\Sigma')\supseteq\rangeop(\Sigmatask)$).
A matched-$\Sigma'$ coverage envelope for $\varepsilon_{\mathrm{jac}}>0$ is not claimed
(Remark~\ref{rem:near-affine}); the even-spread case is
Proposition~\ref{prop:lazy-even-spread}.
\end{proposition}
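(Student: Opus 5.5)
The plan is to make the frozen-Jacobian hypothesis collapse the population penalty to a constant quadratic form, and then hand that form to Proposition~\ref{prop:bridge}. Assume Assumption~\ref{ass:lazy} holds with $\varepsilon_{\mathrm{jac}}=0$. Then $\|J_\phi(x)-\bar J\|_{\mathrm{op}}=0$ for $P_X$-almost every $x$, so $J_\phi(x)=\bar J$ almost everywhere. The integrand $\Tr(J_\phi(x)^\top J_\phi(x)\Sigma')$ is therefore equal almost everywhere to the constant $\Tr(\bar J^\top\bar J\Sigma')$. Taking expectations, a null set does not change the integral, which gives the first claim:
\[
\E_x[\Tr(J_\phi^\top J_\phi\Sigma')]=\E_x[\Tr(\bar J^\top\bar J\Sigma')]=\Tr(\bar J\Sigma'\bar J^\top).
\]

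Next I would bring in Assumption~\ref{ass:bridge} with $W\equiv\bar J$. An affine encoder $\phi(x)=Wx+b$ has constant Jacobian $W$, so the two assumptions agree, and the penalty from the previous step is exactly $\Tr(W\Sigma'W^\top)=\sum_j w_j^\top\Sigma' w_j$, where $w_j^\top$ are the rows of $W$. Adding the quadratic task proxy $\tfrac12\|W-V\|_F^2=\sum_j\tfrac12\|w_j-v_j\|_2^2$, the objective separates into independent row problems. Each row problem is strictly convex, so its minimiser is obtained by setting the gradient to zero:
\[
(w_j-v_j)+2\lambda\Sigma' w_j=0,\qquad w_j^\star=(I+2\lambda\Sigma')^{-1}v_j .
\]
This is precisely the response $w_\lambda(\Sigma';v_j)$ of Theorem~\ref{thm:game}, so I would simply invoke Proposition~\ref{prop:bridge} rather than repeat the calculation.

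The coverage clause then comes from applying Theorem~\ref{thm:game} to each row. The linearised drift is $\Tr(W\Sigmatask W^\top)=\sum_j D_{\Sigmatask}(w_j)$, a sum of nonnegative terms. Each term tends to zero as $\lambda\to\infty$ uniformly over row directions in $\rangeop(\Sigmatask)$ if and only if $\rangeop(\Sigma')\supseteq\rangeop(\Sigmatask)$. Because the sum is nonnegative, a floor in any one row forces a floor in the total, so coverage is still required.

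The last sentence of the statement, that no envelope is claimed for $\varepsilon_{\mathrm{jac}}>0$, is a scoping disclaimer and needs no proof beyond pointing to Proposition~\ref{prop:lazy-even-spread}.

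I expect the main obstacle to be making precise in what sense $\bar J$ plays the role of the trainable $W$. The lazy assumption fixes the Jacobian at a single point in training, whereas the bridge treats $W$ as the variable being optimised. I would handle this by reading the reduction as a statement about the population objective restricted to the affine family containing $\bar J$: the lazy hypothesis only certifies that the encoder lies in that family, and the optimisation over $W$ is carried out under Assumption~\ref{ass:bridge}. The almost-everywhere step is routine measure theory.
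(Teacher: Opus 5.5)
Your proposal is correct and follows the paper's own proof. The paper uses the same two steps: the almost-everywhere identity $J_\phi=\bar J$ collapses the penalty to $\Tr(\bar J^\top\bar J\Sigma')$, and then the identification $W\equiv\bar J$ passes the objective to Proposition~\ref{prop:bridge} and the coverage clause to Theorem~\ref{thm:game}; where the paper just cites those two results, you also write out the row separation $w_j^\star=(I+2\lambda\Sigma')^{-1}v_j$ and the nonnegative-sum floor argument.
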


\begin{proposition}[Even-spread / isotropic penalty under the lazy bridge]
\label{prop:lazy-even-spread}
Under Assumptions~\ref{ass:lazy} and~\ref{ass:bridge} with frozen mean Jacobian $\bar J$,
even-spread / isotropic penalty $\Sigma'=cI$ ($c>0$), and $D_0:=v^\top\Sigmatask v>0$, the
population
penalty is $c\|\bar J\|_F^2+O(\varepsilon_{\mathrm{jac}})$ and the linearised deployment
drift satisfies
$\tilde D_{\Sigmatask}=D_0/(1+2\lambda c)^2+O(\varepsilon_{\mathrm{jac}})$
(dimensions and $\|\bar J\|_F$ fixed).
Corollary~\ref{cor:even-spread-vs-erm} therefore degrades smoothly in
$\varepsilon_{\mathrm{jac}}$.
This is a leading-order population argument under the affine factorisation of
$\bar J$, not a gradient-descent theorem for deep nets
(Appendix~\ref{app:conditional-licenses} licenses).
\end{proposition}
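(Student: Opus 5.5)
The statement to prove is Proposition~\ref{prop:lazy-even-spread}. I will argue by perturbation around the frozen case. The plan has three steps. First, handle $\varepsilon_{\mathrm{jac}}=0$ exactly by reducing to Corollary~\ref{cor:even-spread-vs-erm}. Second, control the penalty functional under $J_\phi=\bar J+E$. Third, carry that perturbation through the scalar resolvent to the linearised drift.

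\textbf{Step 1 (frozen case).} When $\varepsilon_{\mathrm{jac}}=0$, Proposition~\ref{prop:lazy-bridge} applies. Combined with Proposition~\ref{prop:bridge} and $W\equiv\bar J$, the population objective splits into row copies $\tfrac12\|w_j-v_j\|_2^2+\lambda c\|w_j\|_2^2$. Each has minimiser $w_j^\star=(1+2\lambda c)^{-1}v_j$. The linearised drift of a unit row direction $v$ is then exactly $D_0/(1+2\lambda c)^2$, as in Corollary~\ref{cor:even-spread-vs-erm}.

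\textbf{Step 2 (penalty expansion).} Write $J_\phi(x)=\bar J+E(x)$ with $\|E(x)\|_{\mathrm{op}}\le\varepsilon_{\mathrm{jac}}$ almost everywhere. Then
\[
\Tr(J_\phi^\top J_\phi\,cI)=c\|\bar J\|_F^2+2c\langle\bar J,E\rangle_F+c\|E\|_F^2 .
\]
I bound the remainder terms as follows:
\begin{itemize}
\item Using $\|E\|_F\le\sqrt{r}\,\|E\|_{\mathrm{op}}$ with $r=\min(d_\phi,d_x)$, the cross term is at most $2c\|\bar J\|_F\sqrt r\,\varepsilon_{\mathrm{jac}}$.
\item The quadratic term is $O(\varepsilon_{\mathrm{jac}}^2)$.
\end{itemize}
Taking expectations over $x$ gives $c\|\bar J\|_F^2+O(\varepsilon_{\mathrm{jac}})$. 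The constant depends only on $c$, $\|\bar J\|_F$ and the dimensions, which the statement holds fixed.

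\textbf{Step 3 (propagating to the drift).} The main obstacle is that the statement is a leading-order population claim, not a gradient-descent theorem. I will therefore not try to show that trained deep-net parameters track the resolvent. Instead I interpret the response as in the proof of Corollary~\ref{cor:even-spread-vs-erm}: the encoder's effective Jacobian is the ridge minimiser under the affine factorisation of $\bar J$, perturbed by $E$.

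The linearised drift is $\tilde D=\E_x[\Tr(J_\phi\Sigmatask J_\phi^\top)]$. I expand it around the frozen-response Jacobian $\bar J_\lambda$, whose rows are the resolvent images $(1+2\lambda c)^{-1}v_j$:
\[
\tilde D=\Tr(\bar J_\lambda\Sigmatask\bar J_\lambda^\top)+2\,\E\,\Tr(E\Sigmatask\bar J_\lambda^\top)+\E\,\Tr(E\Sigmatask E^\top).
\]
I bound the two correction terms by Cauchy--Schwarz:
\begin{itemize}
\item the cross term is at most $2\|\Sigmatask\|_{\mathrm{op}}\|\bar J_\lambda\|_F\sqrt r\,\varepsilon_{\mathrm{jac}}$;
\item the quadratic term is at most $\|\Sigmatask\|_{\mathrm{op}}\,r\,\varepsilon_{\mathrm{jac}}^2$.
\end{itemize}
Since $\|\bar J_\lambda\|_F\le\|\bar J\|_F$ uniformly in $\lambda$ (the resolvent is a contraction), both bounds are $O(\varepsilon_{\mathrm{jac}})$ uniformly in $\lambda$.

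Restricting to the unit direction $v$ turns the leading term into $D_0/(1+2\lambda c)^2$ by Step 1. This gives $\tilde D_{\Sigmatask}=D_0/(1+2\lambda c)^2+O(\varepsilon_{\mathrm{jac}})$. Smooth degradation of Corollary~\ref{cor:even-spread-vs-erm} follows: the strict inequality persists whenever $\varepsilon_{\mathrm{jac}}$ is below a constant times $D_0\bigl(1-(1+2\lambda c)^{-2}\bigr)$.
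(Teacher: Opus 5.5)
Your proposal is correct and is essentially the paper's argument: write $J_\phi=\bar J+E$, use $\|E\|_F\le\sqrt r\,\varepsilon_{\mathrm{jac}}$ to get the penalty expansion, and push the $O(\varepsilon_{\mathrm{jac}})$ remainder through the scalar resolvent of Corollary~\ref{cor:even-spread-vs-erm}; your explicit expansion of $\E_x[\Tr(J_\phi\Sigmatask J_\phi^\top)]$ around $\bar J_\lambda$ spells out that last step. One small fix: the resolvent contraction gives $\|\bar J_\lambda\|_F\le\|V\|_F$ (the fixed target in Assumption~\ref{ass:bridge}), not $\|\bar J\|_F$, and you should state that the matrix you perturb around in Step~3 is the same frozen Jacobian as in Step~2 (i.e.\ $\bar J=\bar J_\lambda$ at the trained point).
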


\begin{remark}[Near-affine Jacobian envelope]
\label{rem:near-affine}
\label{prop:near-affine}
Proposition~\ref{prop:lazy-even-spread} is the positive-$\varepsilon_{\mathrm{jac}}$
even-spread envelope; Proposition~\ref{prop:lazy-bridge} does not claim a matched
coverage envelope for $\varepsilon_{\mathrm{jac}}>0$.
Hard-projection $\varepsilon_{\mathrm{jac}}$ proxies are large
(\texttt{results/theory\_redesign/eps\_jac\_gono\_go.json}).
Soft additive-block proxies (T2B) and the T3 nuisance-type scope reading are in Online
Appendix~1; local obstruction: Proposition~\ref{prop:local-obstruction}.
\end{remark}

\begin{proposition}[Predictor gauge]
\label{prop:gauge}
For $\phi_c:=c\phi$ and $h_c(z):=h(z/c)$, one has $h_c\circ\phi_c\equiv f$ but
$J_{\phi_c}=c\,J_\phi$, so both the \PMH{} penalty and $\tilde D_Q$ scale by $c^2$:
$f$ alone does not pin Jacobian energy.
\end{proposition}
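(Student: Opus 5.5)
The statement is Proposition~\ref{prop:gauge}, so the plan is a direct computation in three steps: check that the predictor is unchanged, compute the Jacobian of the rescaled encoder, and substitute it into the two trace expressions.

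\emph{Step 1 (predictor unchanged).} Fix $c\neq0$; the scale is only informative for $c>0$, but the identities below hold for any nonzero $c$. For every $x$,
\[
(h_c\circ\phi_c)(x)=h\bigl(c\phi(x)/c\bigr)=h(\phi(x))=f(x),
\]
so the composite predictor does not depend on $c$.

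\emph{Step 2 (Jacobian rescales).} Wherever $\phi$ is differentiable, so is $\phi_c=c\phi$, and linearity of differentiation gives $J_{\phi_c}(x)=c\,J_\phi(x)$. This holds almost everywhere, which is the regularity the setup assumes. It also follows that $J_{\phi_c}^\top J_{\phi_c}=c^2J_\phi^\top J_\phi$ pointwise.

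\emph{Step 3 (both functionals scale by $c^2$).} Substitute Step 2 into the penalty term of~\eqref{eq:pmh-family} and into $\tilde D_Q$ from~\eqref{eq:drift-linear}. Linearity of the trace and of the expectation pulls out the factor:
\[
\E_x\bigl[\Tr(J_{\phi_c}^\top J_{\phi_c}\Sigma)\bigr]=c^2\,\E_x\bigl[\Tr(J_\phi^\top J_\phi\Sigma)\bigr],
\]
first with $\Sigma=\Sigma'$ and then with $\Sigma=\Sigmatask$. Note that $\mathcal{L}_{\mathrm{task}}$ depends only on $f$, so by Step 1 it is unchanged. Therefore the \PMH{} objective is not gauge invariant, while the task loss is.

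\emph{Consequence (Jacobian energy is not pinned).} The family $\{(\phi_c,h_c)\}_{c>0}$ realises the same $f$ with Jacobian energy $c^2E$, where $E$ is the energy of $\phi$. When $E>0$, letting $c$ range over $(0,\infty)$ makes this energy take every positive value. So $f$ alone does not determine it.

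The one subtlety concerns the head's regularity. If $h$ is $L$-Lipschitz, then $h_c$ is $L/|c|$-Lipschitz. Consequently, fixing the head's Lipschitz constant $L$ (as Theorem~\ref{thm:consumer-class} does) removes the freedom: the consumer bound $L^2D_Q$ is itself gauge invariant, because $(L/c)^2c^2D_Q=L^2D_Q$. I would state this remark to explain why the setup pins the gauge through the Lipschitz head. No step is a real obstacle; the only care needed is to note the almost-everywhere differentiability and the nonzero-$c$ restriction.
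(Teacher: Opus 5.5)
Your proposal is correct and follows the paper's own proof, which is the same direct computation: $J_{\phi_c}=c\,J_\phi$ together with the quadratic (bilinear Frobenius) dependence of $\Tr(J^\top J\,\Sigma)$ on $J$ gives the $c^2$ scaling for both $\Sigma'$ and $\Sigmatask$. Your closing Lipschitz remark matches the paper's note that Theorem~\ref{thm:consumer-class} dissolves the gauge at the downstream interface, provided $L$ is read as the exact Lipschitz constant rather than an upper bound.
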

(Theorem~\ref{thm:consumer-class} dissolves the gauge at the downstream interface.)
Proofs of Propositions~\ref{prop:bridge},~\ref{prop:lazy-bridge},~\ref{prop:lazy-even-spread},
and~\ref{prop:gauge}: Appendix~\ref{app:proofs}.

\subsection{Task-only floor and training cap}
\label{sec:conditional-licenses-floor}

\begin{theorem}[Gaussian Bayes sensitivity floor]
\label{thm:p1-1}
Linear-Gaussian Online Appendix~1 model, nuisance coefficient $\rho>0$, $L$-Lipschitz head
at fixed representation scale: population-MSE Bayes factorisation satisfies
$\tilde D_Q(\phi^*_\theta,\sigma)\ge\sigma^2\rho^2/L^2$.
\end{theorem}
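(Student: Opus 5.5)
We prove Theorem~\ref{thm:p1-1} by writing the Bayes representation in closed form under the linear-Gaussian model and then reading off its Jacobian along the nuisance axis.

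\textbf{Model and Bayes map.} The Online Appendix~1 model is not displayed here, so we work with the natural instance. Take $X=(C,N)$ jointly Gaussian and $Y=\langle a,C\rangle+\rho\,N+\varepsilon$, where $N$ is the nuisance coordinate. A label-preserving deployment moves $N$ by an additive $\Delta$ of variance $\sigma^2$ along a unit direction $e_N$, so that $\Sigmatask=\sigma^2e_Ne_N^\top$. Under population MSE the Bayes predictor is the conditional mean $f^*(x)=\E[Y\mid X=x]$, which is linear in $x$. Its coefficient on $e_N$ is the regression coefficient of $Y$ on $N$ given $C$; in the model with $\rho$ the nuisance coefficient, this equals $\rho$. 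Hence $\nabla f^*\cdot e_N=\rho$ for every $x$.

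\textbf{Transfer to the representation.} Any factorisation $f^*=h\circ\phi^*_\theta$ with $h$ $L$-Lipschitz at the fixed representation scale obeys the chain rule $\nabla f^*(x)^\top=J_h(\phi^*_\theta(x))\,J_{\phi^*_\theta}(x)$. It follows that $|\rho|=|J_h J_{\phi^*_\theta} e_N|\le L\,\|J_{\phi^*_\theta}(x)e_N\|_2$, or equivalently $\|J_{\phi^*_\theta}(x)e_N\|_2^2\ge\rho^2/L^2$ pointwise. Substituting $\Sigmatask=\sigma^2e_Ne_N^\top$ into the linearised drift~\eqref{eq:drift-linear} gives $\tilde D_Q=\sigma^2\,\E_x\|J_{\phi^*_\theta}e_N\|_2^2\ge\sigma^2\rho^2/L^2$.

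If instead $\Sigmatask$ is a general matrix with nuisance direction $u$, the same argument uses $\Tr(J^\top J\Sigmatask)\ge\lambda_{\min}^{+}\,\|J u\|^2$ restricted to $\rangeop(\Sigmatask)$, with $\sigma^2$ as the relevant eigenvalue.

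\textbf{Main obstacle.} The hard step is the gauge issue of Proposition~\ref{prop:gauge}. Without pinning ``fixed representation scale,'' rescaling $\phi\mapsto c\phi$ and $h\mapsto h/c$ shrinks the drift arbitrarily. We resolve this by treating $L$ as the Lipschitz constant of the head at the normalised scale, so that it is exactly the constant appearing in the chain-rule bound; this is the reading under which the bound holds.

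A secondary obstacle is that the Bayes coefficient on $N$ equals $\rho$ only if $N$ enters the model as an independent regressor. When $N$ is correlated with $C$, the coefficient must first be computed via the Gaussian conditional formula. In that case we state the bound in terms of that coefficient, and it reduces to $\rho$ in the orthogonalised parametrisation of the appendix model.
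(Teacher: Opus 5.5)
Your proposal is correct and follows the paper's route. The Bayes regressor has derivative $\rho$ along the predictive nuisance direction (Lemma~\ref{lem:erm-encoding}); the chain rule through an $L$-Lipschitz head forces $\|J_{\phi,N}w_n\|_2\ge\rho/L$; and the linearised drift dominates $\sigma^2$ times its square. There are two small differences: the paper argues in expectation plus Jensen where you argue pointwise, and the paper's actual model has a $d_n$-dimensional block $\Sigmatask=\sigma^2I_{d_n}$ on $N$, so it invokes the sub-block inequality $\|J_{\phi,N}\|_F^2\ge\|J_{\phi,N}w_n\|_2^2$ (Lemma~\ref{lem:subblock}), which is exactly what your general-$\Sigmatask$ remark supplies. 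Your ``secondary obstacle'' is not real: because $(C,N)$ is fully observed and $\varepsilon$ is independent mean-zero noise, $\E[Y\mid X]=\langle a,C\rangle+\rho N$ has coefficient exactly $\rho$ whatever the correlation between $C$ and $N$.
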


\begin{proposition}[Penalty budget cap]
\label{prop:pmh-cap}
If $R_{\PMH}\leq\mathrm{cap}\cdot\mathcal{L}_{\mathrm{task}}$, then the penalty fraction is at most
$\mathrm{cap}/(1+\mathrm{cap})$.
\end{proposition}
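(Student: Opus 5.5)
The statement immediately above is Proposition~\ref{prop:pmh-cap}. The plan is a one-variable monotonicity argument. I would first fix what ``penalty fraction'' means, since the statement does not define it explicitly. I take it to be the share of the total objective~\eqref{eq:pmh-family} contributed by the (already $\lambda$-weighted) penalty term $R_{\PMH}$:
\[
f:=\frac{R_{\PMH}}{\mathcal{L}_{\mathrm{task}}+R_{\PMH}}.
\]
This requires $\mathcal{L}_{\mathrm{task}}>0$, which I assume. If $\mathcal{L}_{\mathrm{task}}=0$, the hypothesis forces $R_{\PMH}=0$, and the fraction is either undefined or set to $0$ by convention. I also use $R_{\PMH}\ge 0$, which holds because $\Sigma'\succeq0$ and $\lambda>0$ make the trace in~\eqref{eq:pmh-family} nonnegative.

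Next I would normalise by the task loss. Set $r:=R_{\PMH}/\mathcal{L}_{\mathrm{task}}$. The cap hypothesis then reads exactly $0\le r\le\mathrm{cap}$. Dividing numerator and denominator of $f$ by $\mathcal{L}_{\mathrm{task}}$ gives $f=r/(1+r)=g(r)$, where $g(t):=t/(1+t)$ on $[0,\infty)$.

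The key step is that $g$ is increasing. This follows from $g'(t)=(1+t)^{-2}>0$, or directly from $g(t)=1-1/(1+t)$. Monotonicity on $[0,\mathrm{cap}]$ then gives
\[
0=g(0)\le f\le g(\mathrm{cap})=\frac{\mathrm{cap}}{1+\mathrm{cap}}.
\]
Because $g$ is strictly increasing, equality holds exactly when the cap is active, that is, when $r=\mathrm{cap}$.

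There is no real obstacle here. The only point needing care is the definitional choice of ``fraction'' and the degenerate case $\mathcal{L}_{\mathrm{task}}=0$, which I would dispatch as above. I would close with a remark on the interpretation: since $\mathrm{cap}/(1+\mathrm{cap})<1$, the task loss always retains at least a $1/(1+\mathrm{cap})$ share of the objective. This is the ``regulariser cannot swamp the task'' reading used in the recipe of \S\ref{sec:recipe}.
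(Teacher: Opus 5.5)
Your proposal is correct and matches the paper's proof essentially step for step. Both normalise by the task loss to get $r\in[0,\mathrm{cap}]$, write the penalty fraction as $r/(1+r)$, and use that this map is increasing, with equality exactly when the cap binds. Your explicit checks that $R_{\PMH}\ge 0$ and that $\mathcal{L}_{\mathrm{task}}>0$ (plus the degenerate case) are a small tidy-up that the paper leaves implicit.
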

Proofs: Online Appendix~1; cap also in Appendix~\ref{app:proofs}.

%% file: appendix/task_supplement.tex
\input{appendix/task_supplement_intro_full}
\input{appendix/task_supplement_core_body}
\input{appendix/task_supplement_blocks}
\input{appendix/moved_from_main}

%% file: appendix/task_supplement_intro_full.tex
\section{Per-task protocol cards}
\label{app:tasks-full}

Compact record behind the main-text experiments (\S\ref{sec:empirical}).
Each card uses the same fields: \textbf{Setup}, \textbf{Protocol}, \textbf{Headline} or
\textbf{Takeaway}, \textbf{Failure mode} when relevant, \textbf{Competitors} when relevant,
and \textbf{JSON}.
Main-text flagship figures are not reprinted.
Figure~\ref{fig:negatives-appendix} collects representative failures.

\paragraph{Card vocabulary (quick).}
\begin{description}[leftmargin=1.4em,style=nextline,itemsep=1pt plus 0pt minus 0pt,parsep=0pt,topsep=2pt plus 0pt minus 0pt,font=\small]
  \item[$A_1$--$A_7$]
    Nuisance family index; short map in Table~\ref{tab:app-block-index}, full detail in
    Table~\ref{tab:Ak-summary}.
  \item[B0]
    Unregularised baseline (task loss only).
  \item[matched / even-spread / isotropic / wrong]
    Same as the main text: even-spread $=$ isotropic training penalty $\Sigma'\propto I$.
    Main-text Arm column shorthand: \texttt{iso}.
    Cards below spell ``even-spread'' for the same arm.
    If the \emph{deployment} law is itself isotropic ($\Sigmatask\propto I$), matched and
    even-spread coincide.
  \item[E1]
    Block recipe Jacobian-penalty arm (matched when axes are penalised; even-spread /
    isotropic on $A_2$).
  \item[aniso / multiscale]
    Structured non-isotropic recipes (not the isotropic default).
    Main-text T3A/T3B use \texttt{aniso}$^\dagger$ for estimated photometric-orbit
    $\hat\Sigma_{\mathrm{task}}$ with a weak eigengap (distrust subspace ID).
  \item[E1S]
    Signal-aligned control (penalises the wrong coordinates on purpose).
  \item[TDI / VAT / PGD / PCK / WER]
    Drift probe (\S\ref{app:tdi-detail}); virtual / PGD adversarial training; pose / ASR metrics.
\end{description}

Soft$\star$ (soft-$\lambda$ / hard-projection gap) and estimated-$\hat W$ narratives are
canonical in \S\ref{app:soft-detail} and \S\ref{sec:T1-estimated}; the T1 cards below point
there rather than repeating them.

%% file: appendix/task_supplement_core_body.tex
\begin{table}[!htbp]
\centering
\small
\caption{Block index: what each experiment \emph{names} vs.\ what it actually
uses as the penalised directions (aligned with Table~\ref{tab:nuisance-families}).
Skim for orientation; details live in the cards below.}
\label{tab:app-block-index}
\begin{tabular}{@{}llll@{}}
\toprule
Block & Family & Named construction & Charged in experiments \\
\midrule
T1 classical & $A_1$ & subspace ($n{=}W\eta$) & Identified $W$ (oracle / linear check) \\
T1 deep & $A_1$ & subspace & Oracle hard projection (Fashion / SVHN) \\
T1 est.\ $\hat W$ & $A_1$ & subspace (estimated) & Hard projection with estimated $W$ \\
T2A & $A_2$ & even-spread / isotropic & As named ($\Sigma'\propto I$) \\
T2B & $A_2$ & even-spread / isotropic & Feature-MSE / isotropic acquisition \\
T3A--T3B & $A_3$ & photometric & Augmentation / photometric modes \\
T4A--T4B & $A_4$ & domain & Representation-Gram \emph{proxy} \\
T5A--T5B & $A_5$ & compositional & T5A: isotropic position; T5B: identifier-matched \\
T6A--T6B & $A_6$ & temporal & Residual / sensor-scatter \emph{proxy} \\
T7A & $A_7$ & style & Style-Gram proxy (pilot) \\
T7B & $A_7$ & adversarial & Algorithmic PGD-delta Gram \\
\bottomrule
\end{tabular}
\end{table}

\subsection{Calibrated failures}
\label{app:negatives}

\begin{figure}[H]
\centering
\SubmissionFig[width=\linewidth]{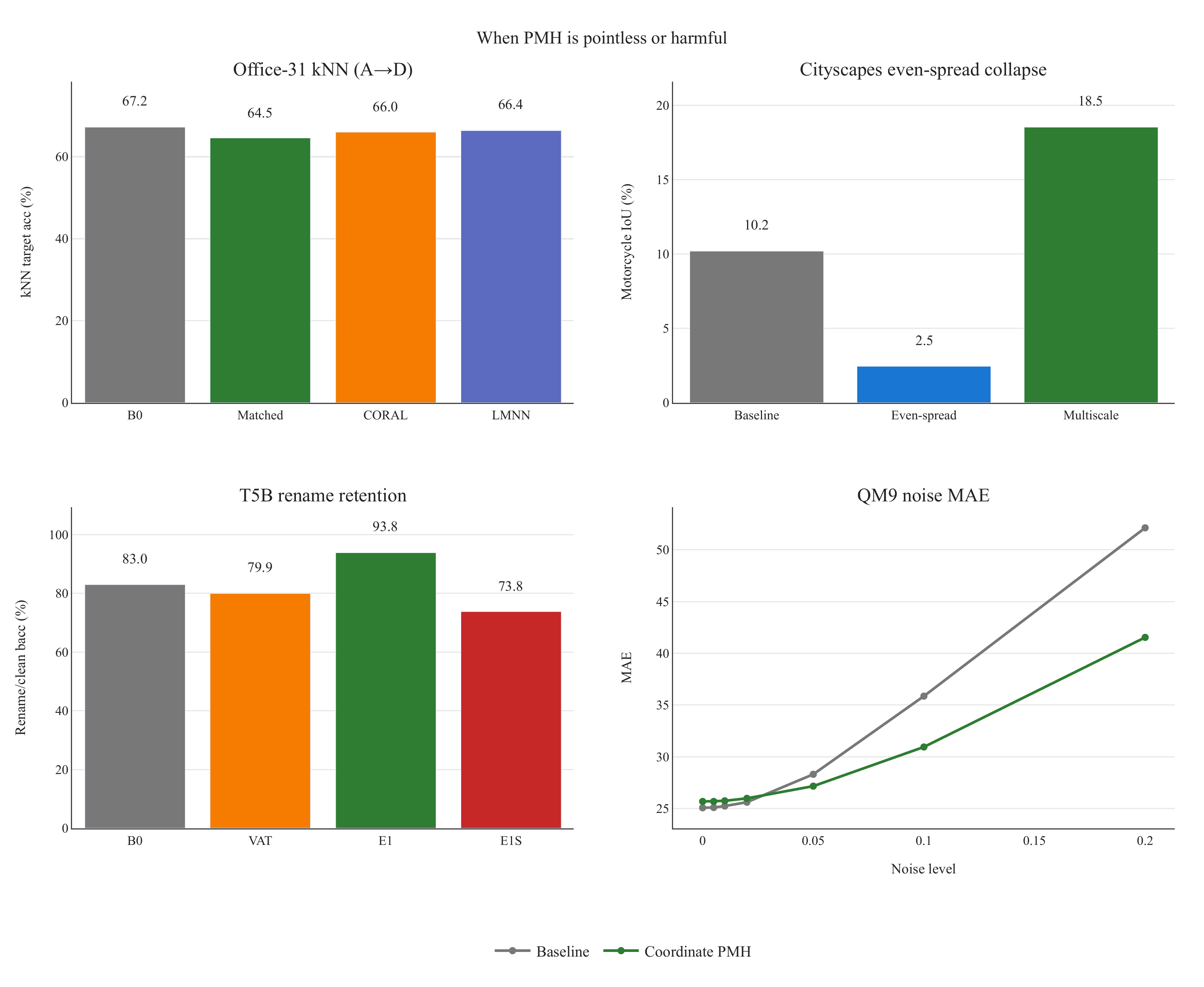}
\caption{\textbf{Where the method is predicted to struggle.}
Weak eigengap (Office-31), single-class Cityscapes miss, signal-aligned CodeBERT stress,
and QM9 clean--robust trade-off.}
\label{fig:negatives-appendix}
\end{figure}

\paragraph{Reproducibility.}
\label{app:repro-quickstart}
Frozen JSON roots (under \texttt{revision\_hardened\_2026/} where present).
Library: \MatchingPmhRepo{}.
{\footnotesize
\begin{center}
\begin{tabular}{@{}lp{0.68\linewidth}@{}}
\toprule
Block & Path \\
\midrule
T1 Fashion & \path{T1/classical_pmh/results/revision_hardened_2026/fashion_mnist/} \\
T1 deep & \path{T1/deep_oracle_pmh/results/revision_hardened_2026/} \\
T1 est.\ $\hat W$ & \path{T1/deep_oracle_pmh/results/estimated_w_20260727/deep_cnn_estimated_fashion_mnist_hard_merged.json} \\
T1 Office-31 & \path{T1/classical_pmh/results/office31_results.json} \\
T2A & \path{T2/Task2A/results/run1/} \\
T2B & \path{T2/Task2B/runs/eval_out_robust/compare_results_robust.json} \\
T3A / T3B & \path{T3/Task3A/runs/eval_out/}, \path{T3/Task3B/runs/eval_out/depth_robustness.json} \\
T4A & \path{T4/Task4A/runs/DomainNet/} \\
T4B & \path{T4/Task4B/runs/multiseed_2026/summary_ci.json} \\
T5A & \path{T5/Task5A/runs/QM9/pipeline_evals/} \\
T5B & \path{T5/Task5B/runs/Clone_pilot/pipeline_summary.json} \\
T6A & \path{T6/task6A/results/wer_results.json} \\
T6B & \path{T6/task6B/artifacts/multiseed/multiseed_summary.json} \\
T7A & \path{T7/task7A/results/behavioral_eval_7b_tqa500/} \\
T7B & \path{T7/task7B/results/revision_hardened_2026/summary_ci.json} \\
T1 soft $\lambda$ & \path{T1/deep_oracle_pmh/results/soft_lambda_cap0_20260727/sweep_summary.json} \\
$\varepsilon_{\mathrm{jac}}$ & \path{results/theory_redesign/eps_jac_gono_go.json},
  \path{results/theory_redesign/eps_jac_boundary_blocks.json} \\
\bottomrule
\end{tabular}
\end{center}
}
\FloatBarrier

\subsection{$\varepsilon_{\mathrm{jac}}$ measurement and additive-$\Delta$ scope}
\label{app:eps-jac}

Hard-projection flagships void the \emph{exact} affine license: Fashion CNN $\approx 0.72$
and ResNet-18 SVHN $\approx 3.38$ (\S\ref{sec:scope}).
Those numbers are batch standard deviations of a directional finite-difference sensitivity
(64 inputs, 8 unit directions, step $10^{-2}$): a spatial-variation \emph{proxy}, not an
exact operator norm.
\textbf{JSON:} \JsonPath{results/theory\_redesign/eps\_jac\_gono\_go.json}.
Proposition~\ref{prop:lazy-even-spread} supplies the positive-$\varepsilon_{\mathrm{jac}}$
even-spread envelope for additive $\Delta$ (Lemma~\ref{lem:lindrift}).

The same proxy on soft checkpoints
(\JsonPath{results/theory\_redesign/eps\_jac\_boundary\_blocks.json}):
\begin{itemize}[leftmargin=*,itemsep=1pt]
\item \textbf{T2B (additive default regularisation holds).}
 B0 $0.044$ $\to$ even-spread E1 $0.011$
  (${\approx}4\times$ drop) on pneumonia test images---aligned with
  Corollary~\ref{cor:even-spread-vs-erm} / Proposition~\ref{prop:lazy-even-spread}.
\item \textbf{T3A / T3B (outside the additive model).}
 Soft even-spread proxies are small
  (T3A ${\approx}6{\times}10^{-4}$ on COCO val; T3B ${\approx}4{\times}10^{-4}$),
  \emph{smaller} than T2B.
  Occlusion masks (T3A) and a shared photometric orbit with only a secondary Gaussian steer
  (T3B) are not additive bounded-moment $\Delta$, so even-spread underperformance there is a
  \emph{nuisance-type scope mismatch}, not evidence that large Jacobian wander breaks the
  ridge bridge.
  \footnote{T3B used a synthetic ImageNet-normalised batch: the NYU loader failed at
  measurement (encoding error). Do not treat that number as a measurement on the NYU
  distribution; the qualitative point---small proxy, not a large-$\varepsilon_{\mathrm{jac}}$
  story---is unchanged if the synthetic stand-in is discarded.}
\end{itemize}
Deployment consistency still supplies the second-order reading under
Lemma~\ref{lem:lindrift} when $\Delta$ is additive.

%% file: appendix/task_supplement_blocks.tex
\setlength{\abovecaptionskip}{4pt}
\setlength{\belowcaptionskip}{2pt}

\subsection{T1 --- Classical oracle-\texorpdfstring{$W$}{W} and data-driven drift}
\label{app:T1}

\noindent\textbf{Setup:} Linear subspace nuisance (family $A_1$): oracle $W$ when known,
SVD estimate otherwise.\\
\textbf{Protocol:} Classical heads (ridge / SVM / $k$NN / logistic) on synthetic data
($d{=}50$, $r{=}5$, 20 seeds), MNIST / Fashion-MNIST (oracle $+$ DCT), SVHN, and Office-31
Amazon$\to$DSLR (ResNet-18, 3 seeds).\\
\textbf{Headline:} Oracle ridge MSE stays flat at $0.101$ vs.\ baseline $0.553$ at
$\sigma_W{=}12$ (even-spread $0.409$, wrong $0.529$).
Fashion-MNIST SVM matched $84.70\%$ vs.\ baseline $76.46\%$ ($+8.24$\,pp, 5 seeds,
bootstrap CI $[+7.46,+8.82]$); DCT SVM $70.2\%$ vs.\ baseline $54.8\%$.\\
\textbf{Failure mode:} Office-31 weak eigengap ($\lambda_r/\lambda_{r+1}\approx1.00$):
$k$NN miss (matched $64.5\%$ vs.\ baseline $67.2\%$); SVM near-null.\\
\textbf{Competitors:} Table~\ref{tab:T1-baselines} is an \emph{audit} table: matched
Jacobian penalty gets the oracle nuisance subspace; CORAL / LMNN-style do not---unequal
information, not a fair head-to-head.
Fashion-SVM ``LMNN'' $38.0\%$ is a documented metric$\times$classifier collapse (below).\\
\textbf{JSON:} \JsonPath{T1/classical\_pmh/results/revision\_hardened\_2026/fashion\_mnist/};
\JsonPath{T1/classical\_pmh/results/office31\_results.json};
\JsonPath{T1/classical\_pmh/results/coverage\_staircase.json}.

\begin{table}[!htbp]
\centering
\small
\caption{\emph{Audit table} (skim Headline above).
T1 four-arm oracle-$W$ OOD accuracy (\%).
Columns: baseline / even-spread / matched / wrong.
Fashion-MNIST: 5-seed hardened; MNIST: archived 3-seed.
Not the competitor head-to-head in Table~\ref{tab:T1-baselines}.}
\label{tab:T1-oracle}
\begin{tabular}{@{}lcccc@{}}
\toprule
Classifier & B0 & even-spread & matched & wrong \\
\midrule
\multicolumn{5}{@{}l}{\textit{Fashion-MNIST (5-seed hardened)}} \\
SVM & 76.46 & 77.62 & \textbf{84.70} & 76.38 \\
$k$-NN & 77.72 & 77.74 & \textbf{79.14} & 77.56 \\
Logistic & 66.36 & 72.66 & \textbf{81.62} & 72.90 \\
\midrule
\multicolumn{5}{@{}l}{\textit{MNIST (archived 3-seed)}} \\
SVM & 90.3 & 90.2 & \textbf{94.2} & 90.0 \\
$k$-NN & 88.5 & 88.6 & \textbf{90.5} & 88.4 \\
Logistic & 79.1 & 82.7 & \textbf{89.0} & 82.5 \\
\midrule
\multicolumn{5}{@{}l}{\textit{SVHN (archived 3-seed SVM)}} \\
SVM & 23.4 & 25.6 & \textbf{43.9} & 23.6 \\
\bottomrule
\end{tabular}
\end{table}

\begin{table}[!htbp]
\centering
\small
\caption{\emph{Audit table} (unequal information---not continuous reading).
T1 competitor head-to-head under oracle $W$ (\%; archived 3-seed with CORAL /
LMNN-style).
Matched Jacobian penalty is given ground-truth $W$; CORAL / LMNN-style are not.
\textbf{Different run from Table~\ref{tab:T1-oracle}}---do not compare B0 across tables.
Bold = matched. CORAL is the strongest usable competitor; Fashion-SVM LMNN-style
$38.0\%$ collapsed (see below).}
\label{tab:T1-baselines}
\begin{tabular}{@{}llcccc@{}}
\toprule
Dataset & Classifier & B0 & \textbf{matched} & CORAL & LMNN-style \\
\midrule
MNIST & SVM & 90.9 & \textbf{94.0} & 90.3 & 83.9 \\
MNIST & $k$-NN soft & 88.8 & \textbf{89.8} & 86.3 & 84.5 \\
MNIST & Logistic & 82.4 & \textbf{87.8} & 83.7 & 81.0 \\
Fashion-MNIST & SVM & 75.8 & \textbf{84.6} & 76.6 & 38.0 \\
Fashion-MNIST & $k$-NN & 76.4 & \textbf{76.9} & 75.2 & 74.3 \\
Fashion-MNIST & Logistic & 72.9 & \textbf{80.9} & 75.5 & 64.9 \\
\bottomrule
\end{tabular}
\end{table}

\paragraph{Why Fashion-SVM LMNN-style collapses.}
The ``LMNN'' column is an LDA-style fallback (PCA to $64$ dims, then within-class
whitening on labelled source only), not a tuned large-margin metric.
On Fashion-MNIST that transform leaves $k$-NN near baseline but drives RBF-SVM to
$38.0\%$---a metric$\times$kernel failure of that baseline stack, not evidence against
matched projection (matched SVM $84.6\%$).
Head-to-head claims use CORAL.

\begin{figure}[!htbp]
\centering
\SubmissionFig[width=\linewidth]{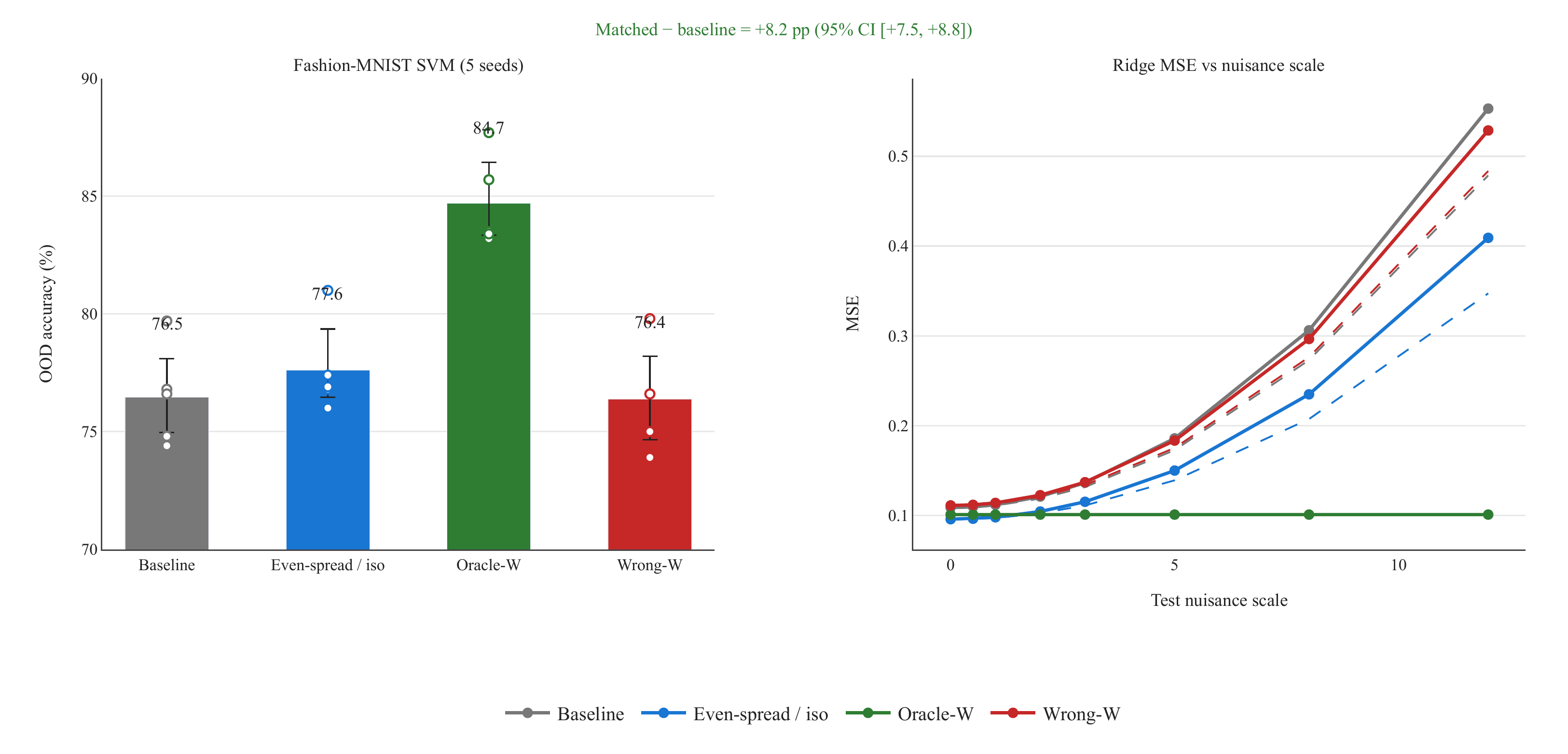}
\caption{\textbf{T1 oracle-$W$ flagship.}
Fashion-MNIST SVM (5 seeds) and closed-form ridge check.}
\label{fig:real-t1}
\label{fig:t1-oracle-flagship}
\end{figure}

\paragraph{Coverage-rank staircase (Theorem~\ref{thm:game}).}
\label{par:coverage-staircase}
At fixed penalty budget $\Tr(\Sigma')=B$, increasing the rank of $\Sigma'$ inside the
deployment range lowers the limiting drift floor until full coverage collapses it
(Figure~\ref{fig:coverage-staircase}, left).
A wrong subspace leaves the floor high.
The right panel is a descriptive deep sweep on one T7B seed---illustration, not a
closed-form claim.

\begin{figure}[!htbp]
\centering
\SubmissionFig[width=\linewidth]{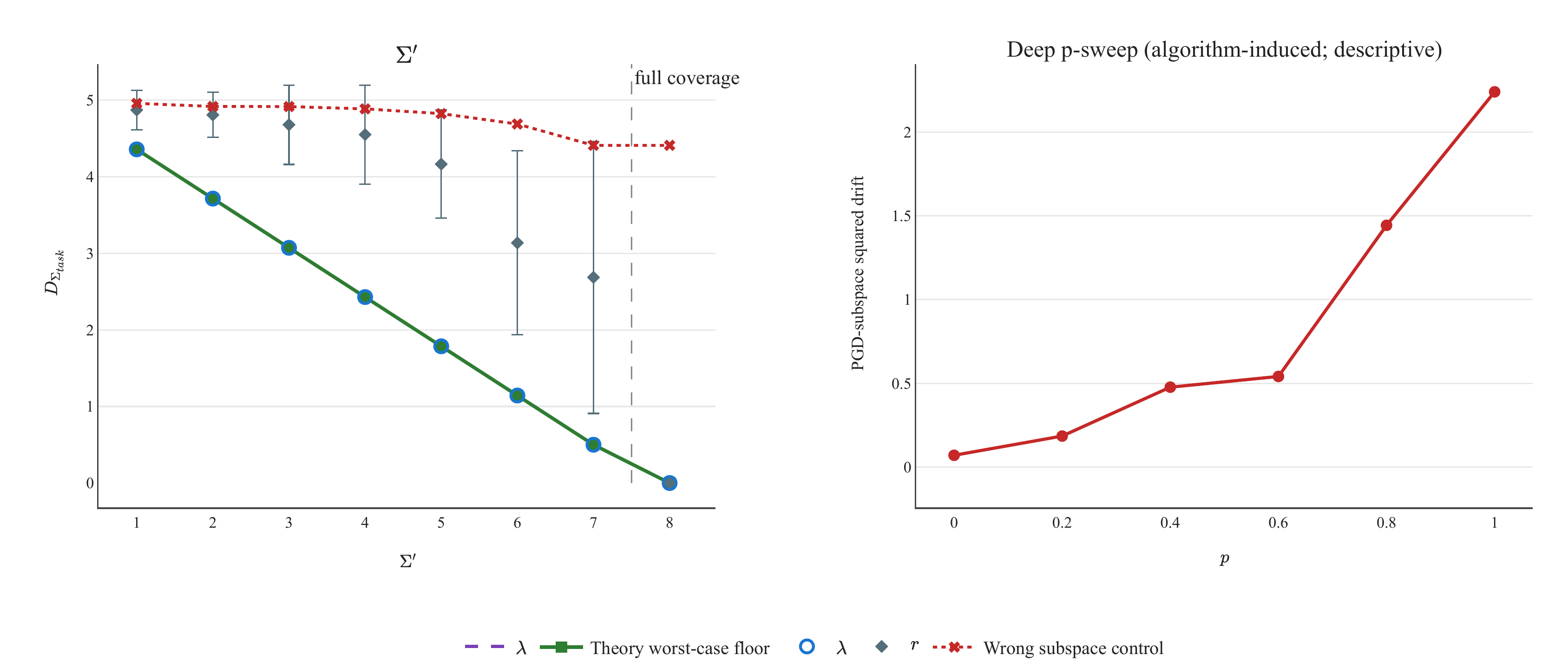}
\caption{\textbf{Coverage-rank staircase.}
Left: closed-form linear response at fixed $\Tr(\Sigma')=B$---nested penalty of the top-$r$
task directions (theory floor, large-$\lambda$ resolvent markers, random / wrong-subspace
controls); floor stays positive for $r<R$ and collapses at full coverage.
Right: descriptive deep $p$-sweep of PGD-subspace drift on T7B seed~7
(algorithm-induced; not a closed-form claim).}
\label{fig:coverage-staircase}
\end{figure}

\begin{figure}[!htbp]
\centering
\IfFileExists{\ManuscriptRoot/figures/main/fig_cor5_probe_lie.pdf}{%
\SubmissionFig[width=\linewidth]{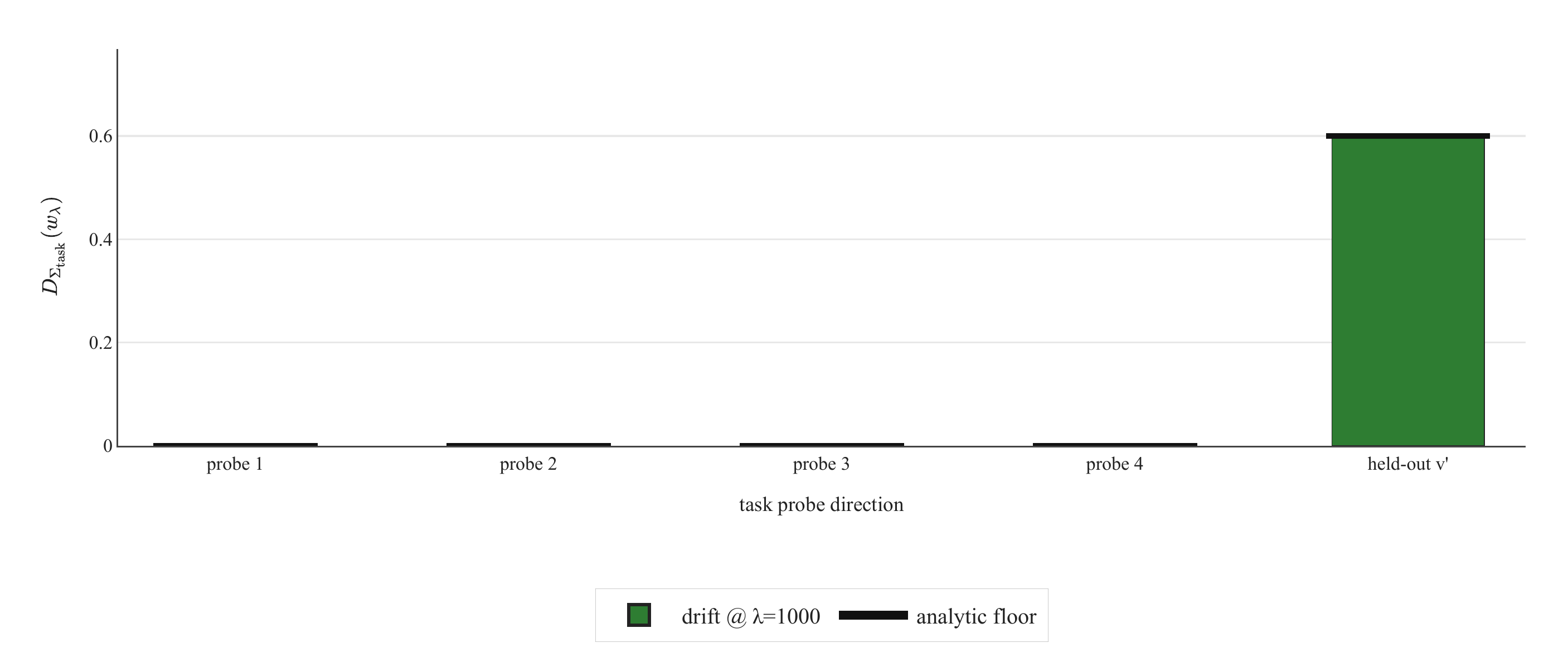}%
}{\fbox{\parbox{0.85\linewidth}{\centering Missing \texttt{fig\_cor5\_probe\_lie.pdf}; run \texttt{plot\_cor5\_probe\_lie.py}.}}}
\caption{\textbf{Closed-form probe-lie (Corollary~\ref{cor:nfl-noise}).}
$\Sigma'$ penalises a proper subspace of $\rangeop(\Sigmatask)$; $k{=}4$ probes show
vanishing drift at large $\lambda$, while a held-out $v'$ retains the analytic floor.
Not a deep-net experiment---pedagogical illustration of finite-probe non-certification.}
\label{fig:cor5-probe-lie}
\end{figure}

\begin{figure}[!htbp]
\centering
\includegraphics[width=\linewidth]{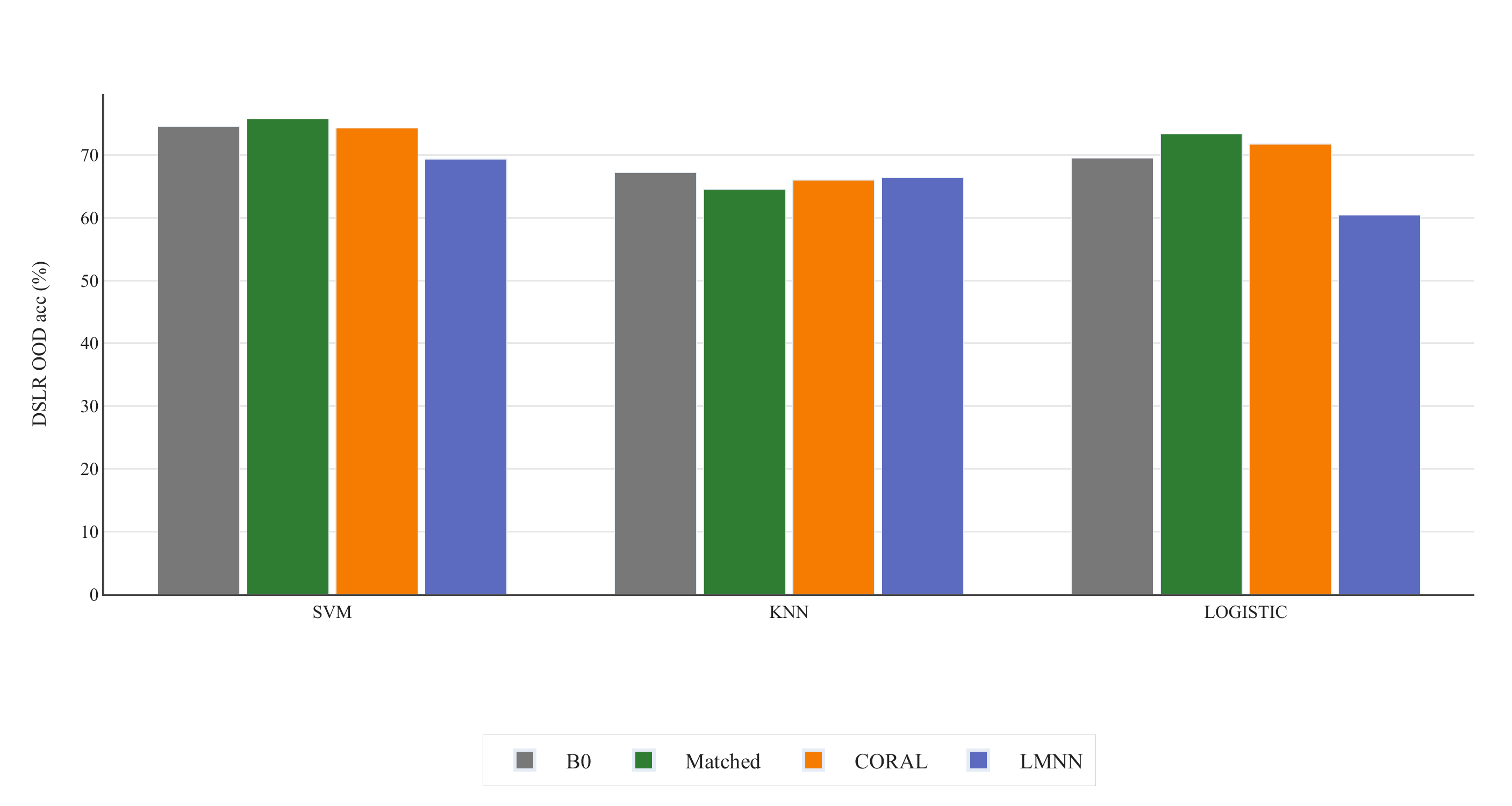}
\caption{\textbf{T1 Office-31 calibrated negative.}
Weak eigengap $\Rightarrow$ unreliable $\hat W$; $k$NN miss vs.\ B0.}
\label{fig:real-t1-office}
\end{figure}

\subsection{T1 deep --- Coupling-identified deep twins}
\label{app:T1-deep}

\noindent\textbf{Setup:} Same identified Fashion / SVHN subspace coupling; hard projection is
the coverage control.\\
\textbf{Protocol:} Fashion SmallCNN / SVHN ResNet-18; $n_{\mathrm{train}}{=}2000$,
$n_{\mathrm{test}}{=}1000$, 5 seeds, $\sigma{=}1.5$, rank~$16$.\\
\textbf{Figures:} main text (Figures~\ref{fig:t1-deep-oracle},~\ref{fig:t1-deep-oracle-svhn}).\\
\textbf{JSON:} \JsonPath{T1/deep\_oracle\_pmh/results/revision\_hardened\_2026/}.

\begin{table}[!htbp]
\centering
\small
\caption{T1 deep OOD accuracy (\%, 5 seeds).
Hard-projection rows are the oracle coverage control; Fashion soft is the proposed loss
(weak matched$-$even-spread---see \S\ref{sec:soft-drift}).
Bold = best hard-proj matched.}
\label{tab:T1-deep}
\begin{tabular}{@{}llcccc@{}}
\toprule
Setting & Model & B0 & even-spread & matched & wrong \\
\midrule
Fashion hard & SmallCNN & $39.9\pm3.7$ & $65.5\pm4.5$ & $\mathbf{75.8\pm1.6}$ & $44.1\pm6.2$ \\
Fashion hard & ResNet-18 & $51.7\pm4.6$ & $76.9\pm2.1$ & $\mathbf{82.6\pm1.4}$ & $54.3\pm4.0$ \\
MNIST hard & ResNet-18 & $81.2\pm8.3$ & $92.8\pm6.0$ & $\mathbf{97.0\pm0.6}$ & $88.0\pm2.2$ \\
SVHN hard & ResNet-18 & $24.2\pm1.6$ & $35.8\pm6.7$ & $\mathbf{74.1\pm3.8}$ & $26.6\pm2.8$ \\
Fashion soft & SmallCNN & $44.0\pm6.4$ & $61.6\pm4.1$ & $64.6\pm2.6$ & $61.8\pm5.6$ \\
\bottomrule
\end{tabular}
\end{table}

\subsection{T1 soft $\lambda$ (JSON only; Soft$\star$ write-up $\rightarrow$)}
\label{app:soft-lambda}

\noindent\textbf{Soft$\star$ canonical write-up:} \S\ref{app:soft-detail}
(Figure~\ref{fig:soft-lambda-sweep}; also main-text \S\ref{sec:soft-drift}).
This card is only the frozen path.\\
\textbf{JSON:} \JsonPath{T1/deep\_oracle\_pmh/results/soft\_lambda\_cap0\_20260727/sweep\_summary.json}.

\subsection{T1 projector-soft probe}
\label{app:projector-soft}

\noindent\textbf{Setup:} Same Fashion spike; feature-MSE soft vs.\ a column
finite-difference penalty along $W$.\\
\textbf{Protocol:} SmallCNN, three seeds, recipe $\lambda{=}2$; feature-MSE
matched/wrong, projector matched/wrong, and hard-proj matched.\\
\textbf{Takeaway:} Feature-MSE soft barely separates matched from wrong on $D_W$.
A column finite-difference projector is slightly more selective
(matched$-$wrong $+3.6$\,pp OOD) but still far below hard projection
($76.0\pm1.7\%$ OOD).
Directional soft is a step toward larger effective $\lambda$, not a replacement for
oracle geometry.\\
\textbf{JSON:} \JsonPath{T1/deep\_oracle\_pmh/results/projector\_soft\_probe/projector\_soft\_summary.json}.

\subsection{T1 estimated-\texorpdfstring{$\hat W$}{W-hat}}
\label{app:T1-estimated}

\noindent\textbf{Canonical write-up:} \S\ref{sec:T1-estimated}.
Audit numbers below; OOD stress still on the true $W$.\\
\textbf{JSON:} \JsonPath{T1/deep\_oracle\_pmh/results/estimated\_w\_20260727/deep\_cnn\_estimated\_fashion\_mnist\_hard\_merged.json}.

\begin{table}[!htbp]
\centering
\small
\caption{\emph{Audit table.}
T1 estimated-$\hat W$ hard-proj Fashion OOD (\%, 5 seeds; mean$\pm$SD).
Bold = matched.}
\label{tab:T1-estimated}
\begin{tabular}{@{}lcccc@{}}
\toprule
Setting & B0 & even-spread & matched($\hat W$) & wrong \\
\midrule
Fashion hard (est.\ $\hat W$) & $45.6\pm4.9$ & $67.7\pm2.8$ & $\mathbf{76.2\pm1.5}$ & $44.4\pm8.7$ \\
\bottomrule
\end{tabular}
\end{table}

\subsection{T2A --- ImageNet ViT-B/16, even-spread / isotropic}
\label{app:T2A}

\noindent\textbf{Setup:} Isotropic \emph{deployment} (family $A_2$: $\Sigmatask\propto I$):
matched and even-spread coincide by construction---a scale check, not a coverage test.\\
\textbf{Protocol:} ViT-B/16 on a 100-class ImageNet validation subset; even-spread /
isotropic Jacobian
penalty uses $1.2\times$ task-only epochs.\\
\textbf{Headline:} ImageNet-C severity~$3$ rises $82.9\%\to87.2\%$ ($+4.3$\,pp).
At $\sigma{=}0.10$, trajectory TDI falls $58\%$ while $\|J\|_F$ falls only $9.1\%$.\\
\textbf{Orientation:} Gated ILSVRC matched-vs-wrong orientation stress is near-null
($+0.45$\,pp); public ImageNet-100 likewise
(\path{T2/Task2A/results/orient\_ilsvrc\_aniso\_stress\_s42.json}).

\begin{figure}[!htbp]
\centering
\includegraphics[width=\linewidth]{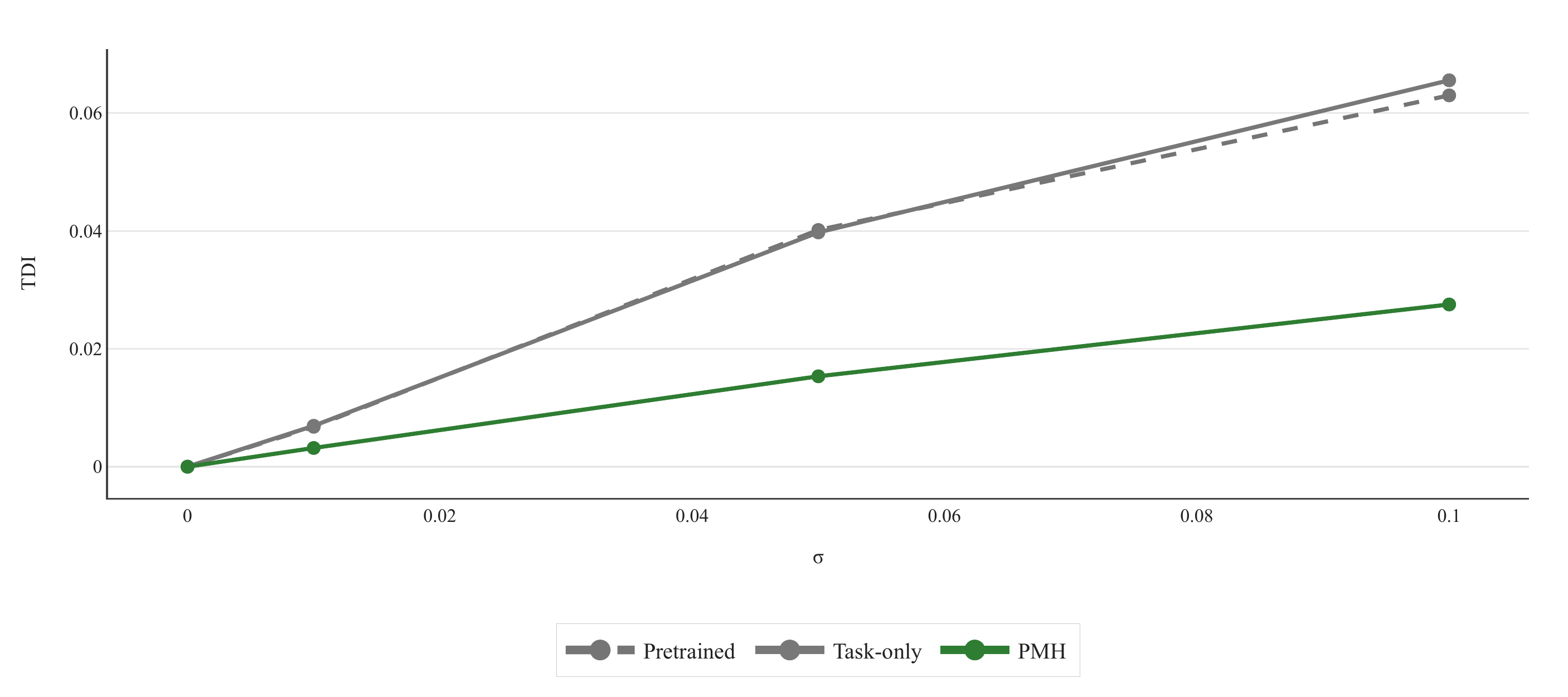}
\caption{\textbf{T2A:} TDI vs.\ Gaussian $\sigma$ (matched $=$ isotropic penalty by design).}
\label{fig:real-t2a}
\end{figure}

\begin{figure}[!htbp]
\centering
\includegraphics[width=\linewidth]{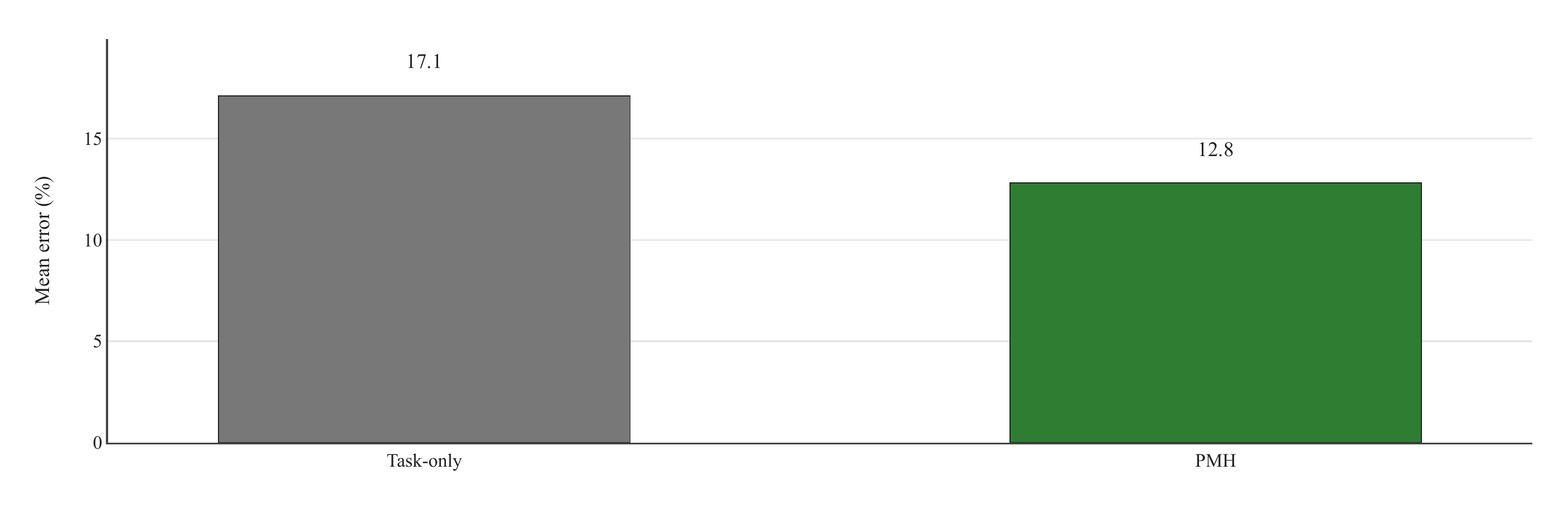}
\caption{\textbf{T2A:} ImageNet-C error by corruption family at severity~$3$;
noise/blur families improve under the even-spread Jacobian penalty.}
\end{figure}

\subsection{T2B --- Chest X-ray (geom$\neq$task)}
\label{app:T2B}

\noindent\textbf{Setup:} Chest X-ray robustness under even-spread / feature-MSE penalties
(not a matched-subspace test).\\
\textbf{Protocol:} ResNet-18 pneumonia CXR; 30 epochs, Adam $10^{-4}$, batch~$32$, seed~$42$;
baseline / VAT / Jacobian-penalty without feature-MSE / with feature-MSE (E1).
VAT: Miyato-style ($\varepsilon{=}2.0$, $1$ power iter).
E1: even-spread multi-scale feature-MSE $+$ pixel noise.\\
\textbf{Takeaway:} VAT wins clean accuracy and worst-shift; E1 wins layer-4 drift.
Both arms penalise generic objects under equal budget---useful as a competitiveness
comparison, not as a matched-vs-wrong direction test
(\S\ref{sec:emp-tier3}).\\
\textbf{JSON:} \JsonPath{T2/Task2B/runs/eval\_out\_robust/compare\_results\_robust.json}.

\begin{table}[!htbp]
\centering
\small
\caption{\emph{Audit table.} T2B headline metrics (protocol v3). Bold = column best.}
\label{tab:T2B-summary}
\label{tab:T2B-headline-app}
\label{tab:T2B-headline-main}
\begin{tabular}{@{}lcccc@{}}
\toprule
Arm & Clean$\uparrow$ & Worst-shift$\uparrow$ & L4 drift$\downarrow$ & Saliency$\uparrow$ \\
\midrule
B0 & 90.71 & 62.50 & 22.54 & \textbf{0.657} \\
VAT & \textbf{92.55} & \textbf{81.65} & 13.44 & 0.611 \\
E1-no-penalty & 90.22 & 66.91 & 16.01 & 0.643 \\
E1 (even-spread Jac.) & 89.10 & 69.23 & \textbf{9.70} & 0.628 \\
\bottomrule
\end{tabular}
\end{table}

\begin{table}[!htbp]
\centering
\small
\caption{\emph{Audit table.} T2B stage-wise mean L2 embedding drift at Gaussian $\sigma{=}0.08$ ($\downarrow$). Bold = best L4.}
\label{tab:T2B-stages}
\begin{tabular}{@{}lcccc@{}}
\toprule
Arm & L1 & L2 & L3 & L4 (pooled) \\
\midrule
B0 & 2.06 & 1.70 & 2.39 & 22.54 \\
E1-no-penalty & 2.30 & 1.13 & 1.38 & 16.01 \\
E1 (even-spread Jac.) & 1.44 & 0.78 & 0.84 & \textbf{9.70} \\
\bottomrule
\end{tabular}
\end{table}

\begin{figure}[!htbp]
\centering
\SubmissionFigCompact[width=\linewidth]{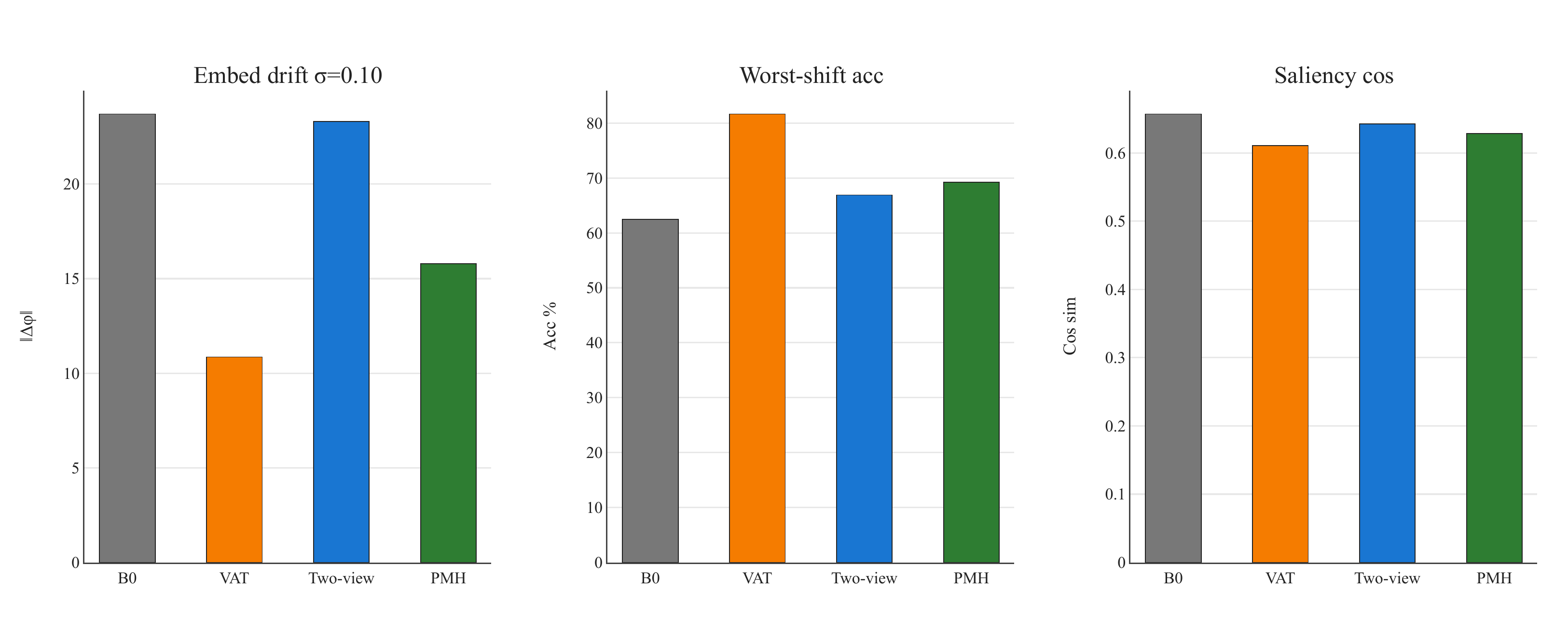}
\caption{\textbf{T2B dissociation.} E1 wins L4 drift; VAT wins task robustness; B0 wins saliency.}
\label{fig:supp-t2b}
\end{figure}

\subsection{T3A --- COCO pose / occlusion}
\label{app:T3A}

\noindent\textbf{Setup:} Pose under photometric / augmentation modes (family $A_3$); estimated
$\hat W$ has a weak eigengap (spectrum ratio $1.08$)---distrust subspace match.\\
\textbf{Protocol:} ResNet-18$+$heatmap on COCO val; equal budget 30 epochs;
baseline / VAT / even-spread / matched (photometric-orbit) Jacobian-penalty arms.\\
\textbf{Takeaway:} Clean PCK favours matched penalty ($53.03$ vs.\ baseline $47.35$).
Under heavy occlusion, even-spread falls below baseline ($27.99$ vs.\ $33.98$)---state that
plainly: occlusion is a masked removal, not additive $\Delta$
(Lemma~\ref{lem:lindrift}; \S\ref{sec:scope}).
VAT clean $13.09\%$ is a collapsed arm, kept for completeness only.\\
\textbf{JSON:} \JsonPath{T3/Task3A/runs/eval\_out/robustness\_comparison.json}.

\begin{table}[!htbp]
\centering
\small
\caption{\emph{Audit table.} T3A equal-budget PCK (\%). Bold = column best.}
\label{tab:T3A}
\begin{tabular}{@{}lcc@{}}
\toprule
Arm & PCK@0.05 clean$\uparrow$ & PCK@0.05 occ$_{0.40}$$\uparrow$ \\
\midrule
baseline & 47.35 & \textbf{33.98} \\
VAT & 13.09 & 10.00 \\
even-spread & 51.62 & 27.99 \\
matched & \textbf{53.03} & 33.54 \\
\bottomrule
\end{tabular}
\end{table}

\begin{figure}[!htbp]
\centering
\includegraphics[width=\linewidth]{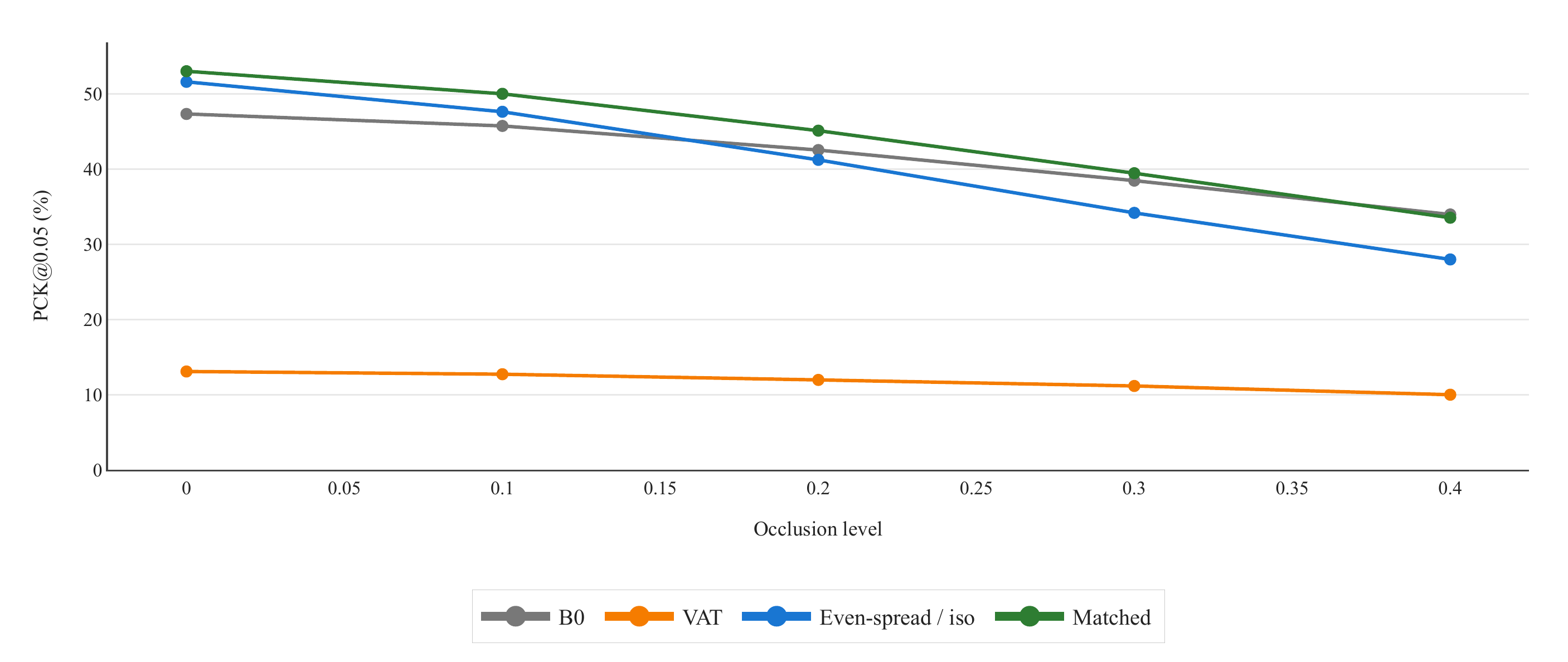}
\caption{\textbf{T3A:} PCK@0.05 vs.\ occlusion (clean gains do not transfer cleanly to heavy occ).}
\label{fig:real-t3a}
\end{figure}

\begin{figure}[!htbp]
\centering
\includegraphics[width=\linewidth]{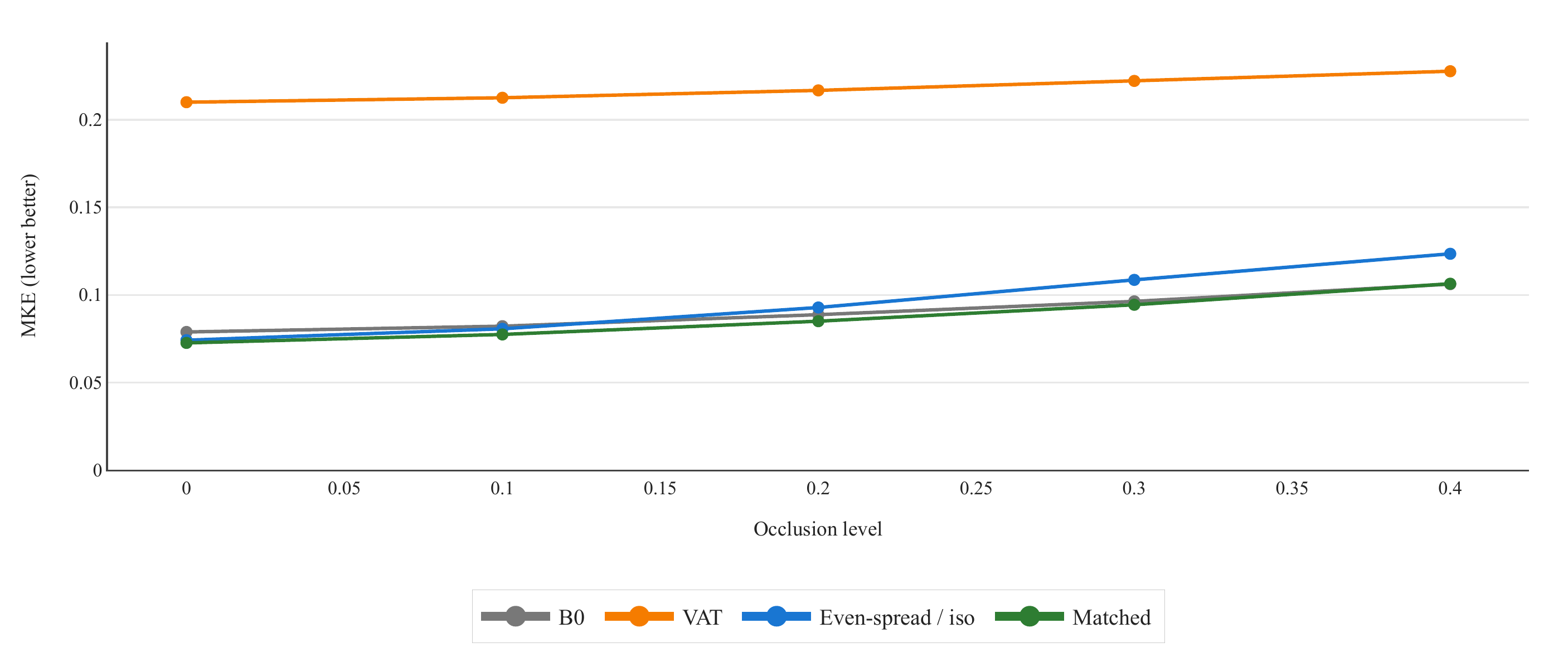}
\caption{\textbf{T3A:} MKE vs.\ occlusion (lower better).}
\end{figure}

\subsection{T3B --- NYU Depth photometric}
\label{app:T3B}

\noindent\textbf{Setup:} Photometric depth shift (family $A_3$; eigengap $\approx3.5$).
All Jacobian-penalty arms share the same photometric jitter orbit; matched/wrong only
steer a secondary Gaussian in span($W$) vs.\ a random ONB.\\
\textbf{Protocol:} ResNet-18$+$U-Net on NYU Depth V2, 50 epochs;
baseline / even-spread / matched / wrong.\\
\textbf{Note:} Wrong-$W$ wins \emph{all} combined-hard task metrics---evidence that direction
barely matters once the photo orbit is shared, not a matched$\succ$wrong claim.
The shared orbit dominates; even-spread/matched/wrong only steer a secondary Gaussian---outside
the additive-$\Delta$ model of Corollary~\ref{cor:even-spread-vs-erm}
(\S\ref{sec:scope}).\\
\textbf{JSON:} \JsonPath{T3/Task3B/runs/eval\_out/depth\_robustness.json}.

\begin{table}[!htbp]
\centering
\small
\caption{\emph{Audit table.} T3B AbsRel ($\downarrow$). Bold = column best on combined-hard.
Wrong also leads combined-hard RMSE / $\delta_1$ / SiLog in the JSON.}
\label{tab:T3B-summary}
\begin{tabular}{@{}lcc@{}}
\toprule
Arm & Clean AbsRel & Combined-hard AbsRel \\
\midrule
baseline & 0.203 & 0.242 \\
even-spread & \textbf{0.195} & 0.219 \\
matched & 0.197 & 0.215 \\
wrong & 0.197 & \textbf{0.211} \\
\bottomrule
\end{tabular}
\end{table}

\begin{figure}[!htbp]
\centering
\SubmissionFigCompact[width=\linewidth]{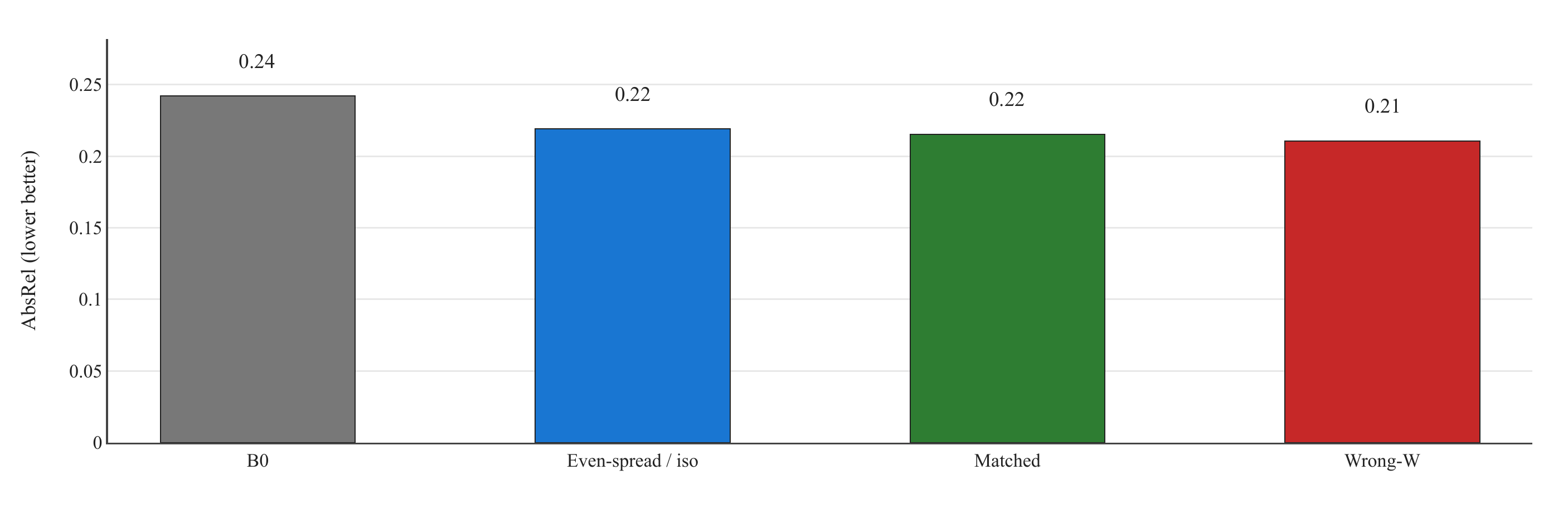}
\caption{\textbf{T3B photometric stress.}
Wrong-$W$ is best on combined-hard AbsRel; matched and even-spread only beat baseline.
Direction barely matters once the photo orbit is shared
(\S\ref{sec:discussion-soft}).}
\label{fig:supp-t3b}
\end{figure}

\subsection{T4A --- DomainNet Real$\to$Sketch}
\label{app:T4A}

\noindent\textbf{Setup:} DomainNet Real$\to$Sketch using a representation-Gram \emph{proxy}
(not the named domain construction).\\
\textbf{Protocol:} ResNet-50, 20 epochs, Gram rank 64; baseline / even-spread-pixel / multiscale.\\
\textbf{Note:} A misaligned Gram erases the gain; TDI need not track accuracy.

\begin{table}[!htbp]
\centering
\small
\caption{T4A test accuracy (\%). Bold = best.}
\label{tab:T4A-acc}
\begin{tabular}{@{}lc@{}}
\toprule
Arm & Test acc.$\uparrow$ \\
\midrule
baseline & 38.84 \\
even-spread-pixel & 39.34 \\
multiscale & \textbf{42.15} \\
\bottomrule
\end{tabular}
\end{table}

\begin{figure}[!htbp]
\centering
\SubmissionFigCompact[width=\linewidth]{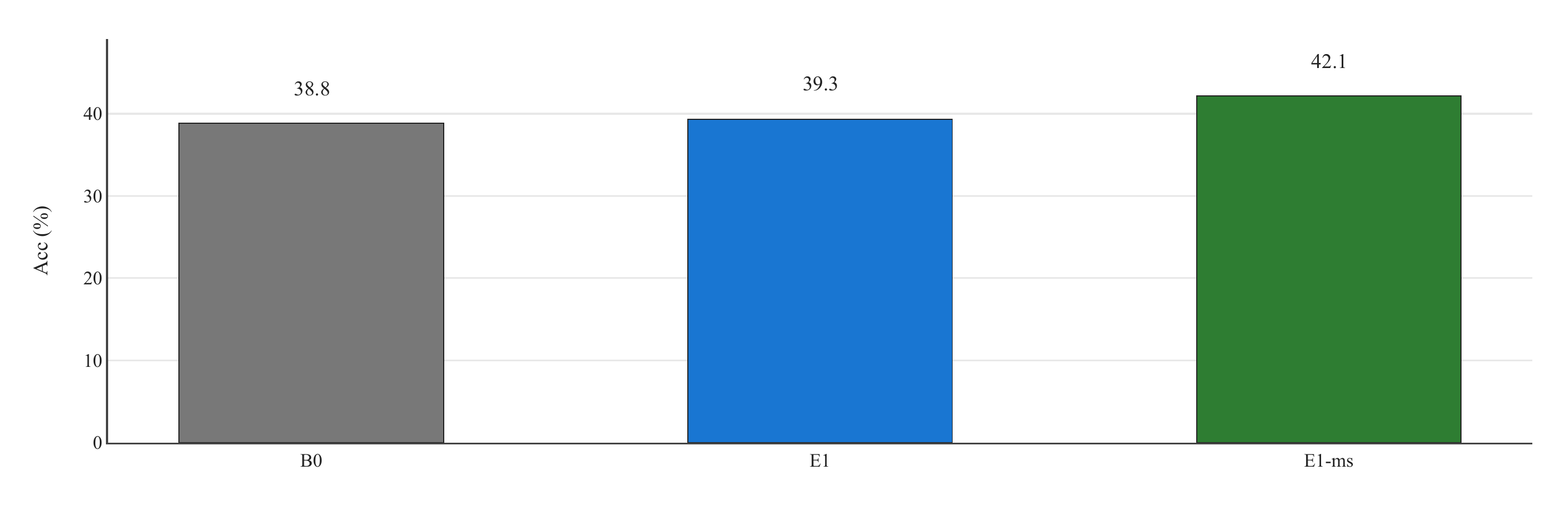}
\caption{\textbf{T4A DomainNet.} Multiscale Gram lift on test accuracy.}
\label{fig:real-t4a}
\end{figure}

\begin{figure}[!htbp]
\centering
\includegraphics[width=\linewidth]{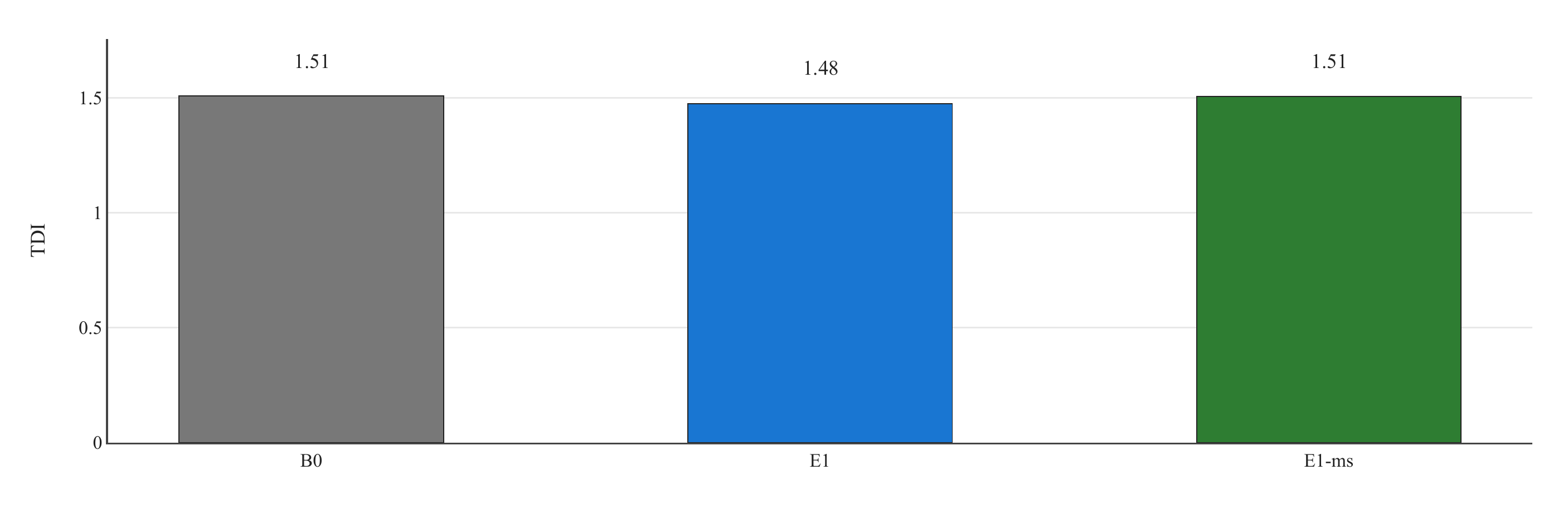}
\caption{\textbf{T4A TDI.} Accuracy and TDI need not agree.}
\end{figure}

\subsection{T4B --- GTA5$\to$Cityscapes Rare-5}
\label{app:T4B}

\noindent\textbf{Setup:} GTA5$\to$Cityscapes rare-5 segmentation with a representation-Gram
\emph{proxy}; primary endpoint rare-5 mIoU.\\
\textbf{Protocol:} DeepLabV3-ResNet50, 30 epochs; seeds $\{7,13,42\}$; B0 / even-spread / multiscale.\\
\textbf{Note:} Motorcycle IoU need not track the rare-5 aggregate; this is not a D4 coverage
claim.\\
\textbf{JSON:} \JsonPath{T4/Task4B/runs/multiseed\_2026/summary\_ci.json}.

\begin{table}[!htbp]
\centering
\small
\caption{T4B three-seed rare-5 mIoU and motorcycle IoU (\%). Bold = best rare-5 mIoU.}
\label{tab:T4B-class}
\begin{tabular}{@{}lcc@{}}
\toprule
Arm & Rare-5 mIoU & Motorcycle IoU \\
\midrule
B0 & $21.3\pm1.6$ & $10.9\pm2.6$ \\
E1 (even-spread) & $19.0\pm2.0$ & $10.0\pm5.2$ \\
E1-multiscale & $\mathbf{23.5\pm2.4}$ & $9.0\pm8.6$ \\
\bottomrule
\end{tabular}
\end{table}

\begin{figure}[!htbp]
\centering
\SubmissionFigCompact[width=\linewidth]{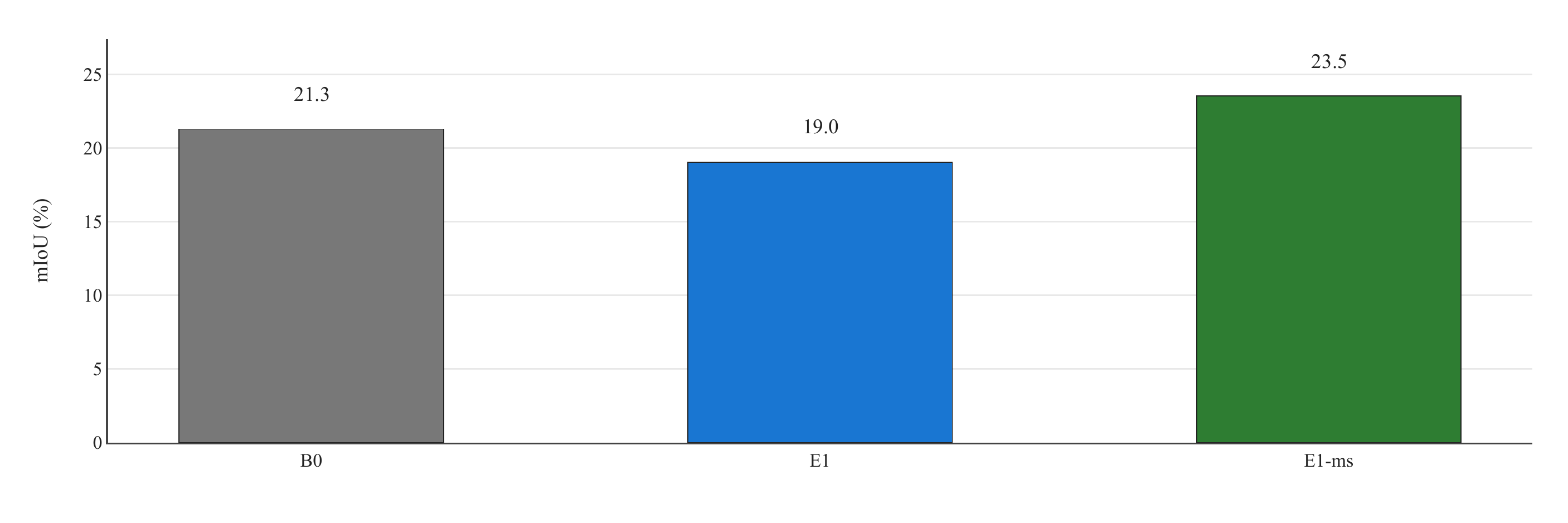}
\caption{\textbf{T4B aggregate.} Multiscale wins rare-5 mIoU; motorcycle remains high-variance.}
\label{fig:supp-t4b}
\end{figure}

\begin{figure}[!htbp]
\centering
\includegraphics[width=\linewidth]{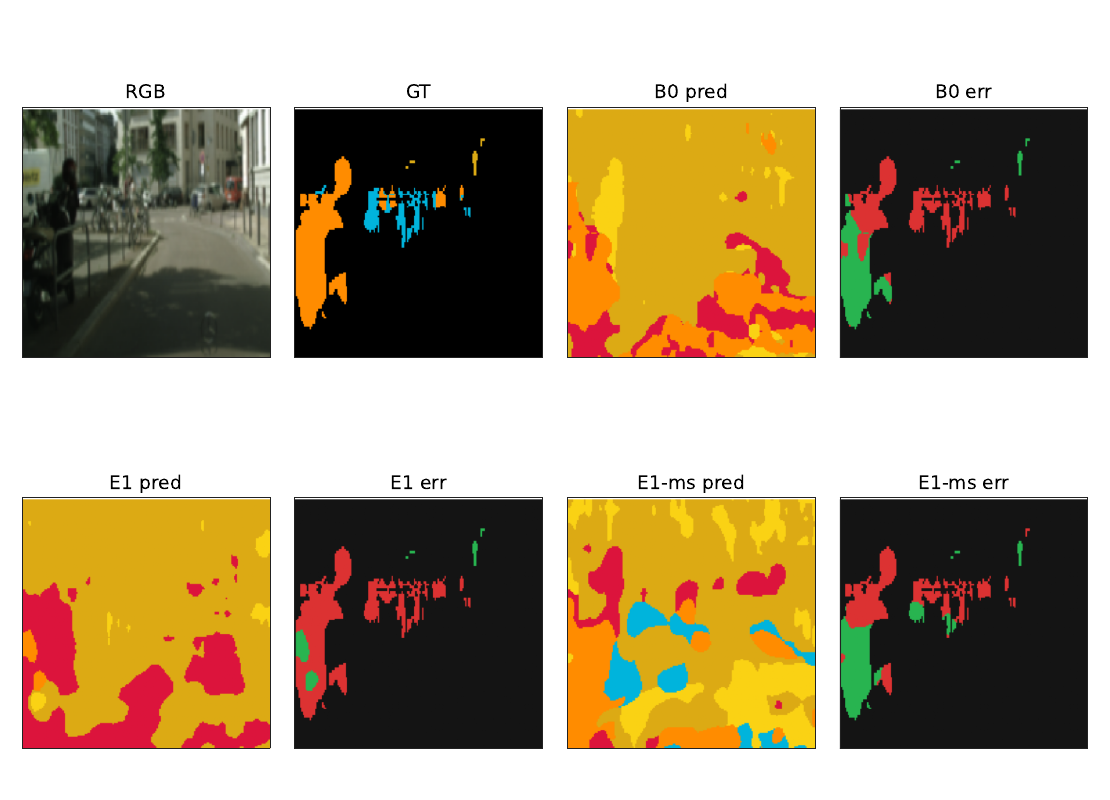}
\caption{\textbf{T4B crop (illustrative).} Multiscale recovers rare classes missed by B0/even-spread.}
\label{fig:real-t4b}
\end{figure}

\subsection{T5A --- QM9 position noise}
\label{app:T5A}

\noindent\textbf{Setup:} QM9 molecular property prediction under isotropic position-coordinate
noise ($A_5$: $\delta\sim\mathcal{N}(0,\sigma^2 I)$ on atom positions; Arm${=}$iso).\\
\textbf{Protocol:} MolGCN, 19 targets, 100 epochs; B0 / E1-small ($\sigma{=}0.01$) /
E1-large ($0.15$) / VAT.\\
\textbf{Takeaway:} Classic clean--robust trade-off: E1-small is best on clean MAE; E1-large
is best under position noise at $\sigma{=}0.20$\,\AA.\\
\textbf{JSON:} \JsonPath{T5/Task5A/runs/QM9/pipeline\_evals/run\_evals\_both\_E1/eval\_summary.json}.

\begin{table}[!htbp]
\centering
\small
\caption{T5A MAE ($\downarrow$). Bold = best clean / best $\sigma{=}0.20$.}
\label{tab:T5A-summary}
\begin{tabular}{@{}lcc@{}}
\toprule
Arm & Clean MAE & MAE@$\sigma{=}0.20$\,\AA \\
\midrule
B0 & 25.08 & 52.13 \\
VAT & 28.15 & --- \\
E1-small & \textbf{24.92} & 47.42 \\
E1-large & 25.69 & \textbf{41.54} \\
\bottomrule
\end{tabular}
\end{table}

\begin{figure}[!htbp]
\centering
\SubmissionFigCompact[width=\linewidth]{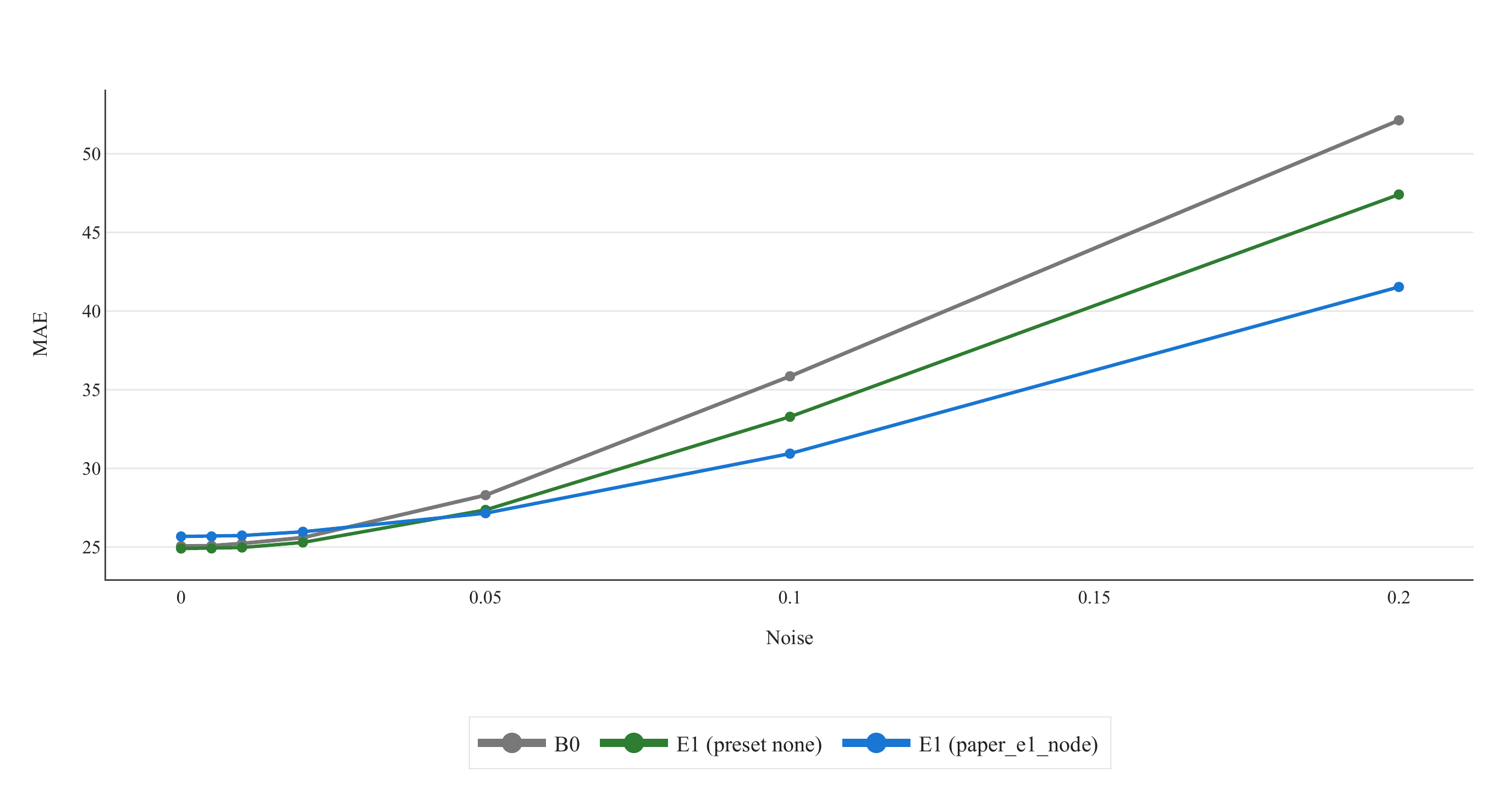}
\caption{\textbf{T5A clean--robust tradeoff.} E1-large dominates large position noise at a clean-MAE cost.}
\label{fig:supp-t5a}
\end{figure}

\subsection{T5B --- BigCloneBench}
\label{app:T5B}

\noindent\textbf{Setup:} Code clone detection under identifier rename (family $A_5$).
Matched arm penalises identifier coordinates; E1S is the signal-aligned wrong partition.\\
\textbf{Protocol:} CodeBERT, 10 epochs, $5$k/$5$k stratified, seed~$42$;
baseline / VAT / identifier-matched / signal-aligned wrong (E1S).\\
\textbf{Takeaway:} Clean balanced accuracy is a coin flip between VAT and matched at one seed
(${\approx}0.7$\,pp).
The rename-ratio gap is the real signal: identifier-matched reaches $0.938$ vs.\
VAT $0.799$ / baseline $0.830$, while signal-aligned E1S falls below baseline.
Charging the named nuisance coordinates matters when they are identified.\\
\textbf{JSON:} \JsonPath{T5/Task5B/runs/Clone\_pilot/pipeline\_summary.json}.

\begin{table}[!htbp]
\centering
\small
\caption{T5B rename retention. Bold = best rename\_bacc\_ratio.}
\label{tab:T5B}
\begin{tabular}{@{}lccc@{}}
\toprule
Arm & Clean bacc$\uparrow$ & Rename bacc$\uparrow$ & Rename ratio$\uparrow$ \\
\midrule
baseline & 0.932 & 0.773 & 0.830 \\
VAT & \textbf{0.938} & 0.750 & 0.799 \\
identifier-matched & 0.931 & \textbf{0.873} & \textbf{0.938} \\
signal-aligned (E1S) & 0.933 & 0.688 & 0.738 \\
\bottomrule
\end{tabular}
\end{table}

\begin{figure}[!htbp]
\centering
\SubmissionFigCompact[width=\linewidth]{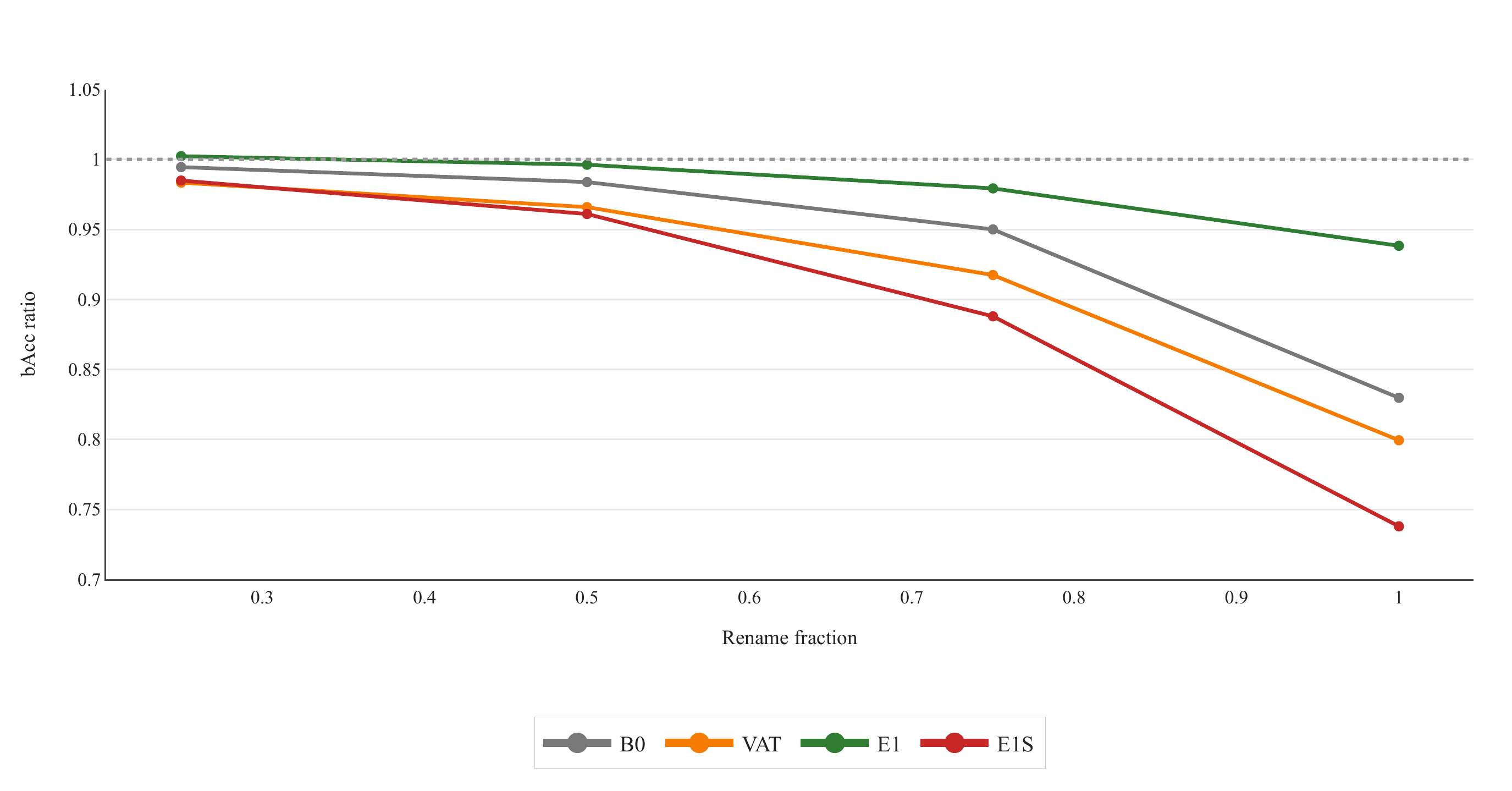}
\caption{\textbf{T5B predicted failure.} Signal-aligned wrong underperforms identifier-matched.}
\label{fig:edges-t5b}
\end{figure}

\subsection{T6A --- Whisper accent}
\label{app:T6A}

\noindent\textbf{Setup:} Whisper ASR under accent-like residual/scatter proxy
(not the full temporal construction).\\
\textbf{Protocol:} Whisper-small, $4$k LibriSpeech, 5 epochs; baseline / matched / wrong /
CE-only re-run (\texttt{accent\_adapted}).
The CE re-run uses the same data pool---a second CE seed, not accent-labelled supervision.\\
\textbf{Note:} CE re-run wins WER ($13.91$); wrong also beats matched on WER
($14.03$ vs.\ $14.63$); only matched substantially lowers temporal TDI
($0.381$ vs.\ wrong $0.644$)---geometry and task disagree
(\S\ref{sec:discussion-soft}).\\
\textbf{JSON:} \JsonPath{T6/task6A/results/wer\_results.json}.

\begin{table}[!htbp]
\centering
\small
\caption{T6A WER and temporal TDI ($\downarrow$). Bold = best WER / best TDI.
Wrong-$W$ beats matched on WER; matched alone wins TDI---geom$\neq$task.
\texttt{accent\_adapted} $=$ CE-only Libri re-run (not accent-CSV oracle).}
\label{tab:T6A}
\begin{tabular}{@{}lcc@{}}
\toprule
Arm & WER (\%) & Temporal TDI \\
\midrule
baseline & 23.26 & 1.096 \\
matched & 14.63 & \textbf{0.381} \\
wrong-$W$ & 14.03 & 0.644 \\
CE re-run (\texttt{accent\_adapted}) & \textbf{13.91} & $1.101$ \\
\bottomrule
\end{tabular}
\end{table}

\begin{figure}[!htbp]
\centering
\SubmissionFigCompact[width=\linewidth]{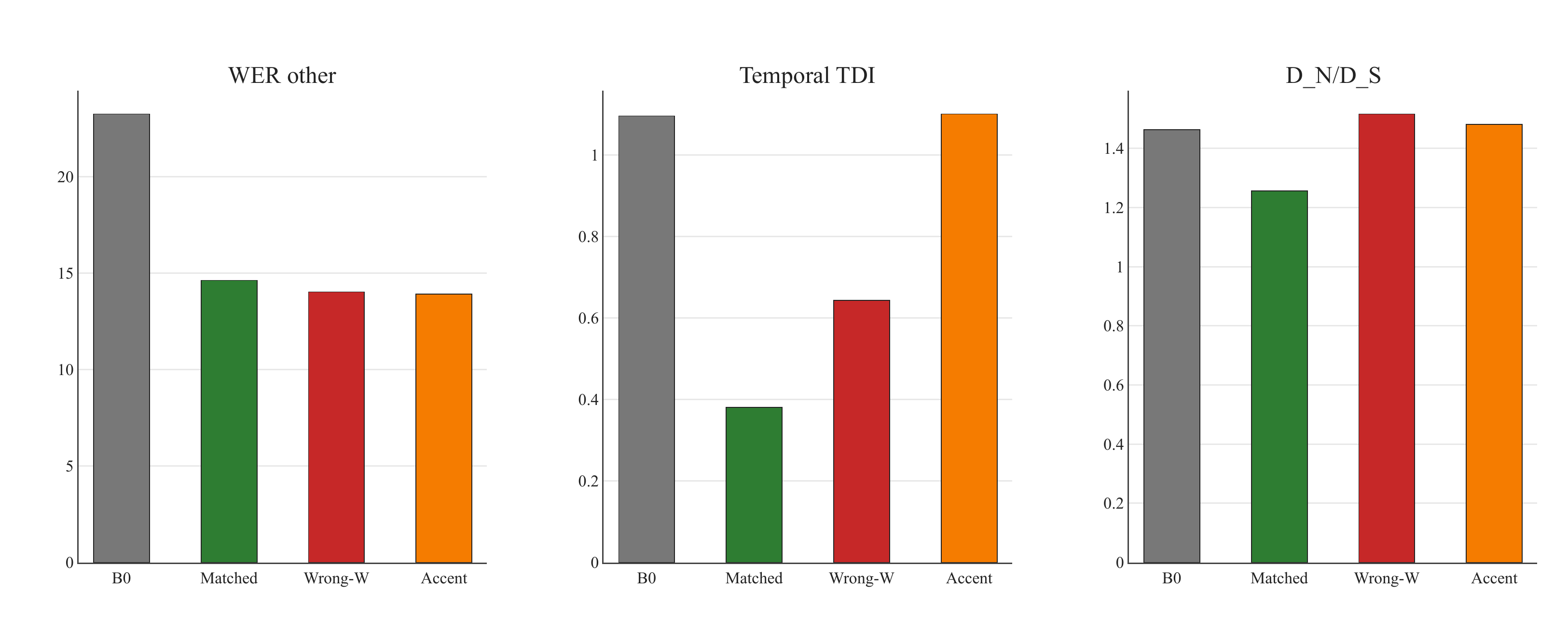}
\caption{\textbf{T6A Whisper.} A CE-only re-run can win WER while leaving temporal TDI near
baseline; only matched moves geometry (geom$\neq$task).}
\label{fig:supp-t6a}
\end{figure}

\subsection{T6B --- UCI HAR}
\label{app:T6B}

\noindent\textbf{Setup:} sensor-scatter proxy (rank 48).\\
\textbf{Protocol:} TCN, UCI HAR ($n_{\mathrm{test}}{=}2947$), 35 epochs, 3 seeds $\{0,1,42\}$;
arms baseline / wrong / even-spread (full-space) / matched.\\
\textbf{JSON:} \JsonPath{T6/task6B/artifacts/multiseed/multiseed\_summary.json}.\\
\textbf{Takeaway:} At stress $L{=}3.0$, matched ($0.410\pm0.029$) ties even-spread
($0.408\pm0.054$) and both beat wrong and baseline.
Soft training does not separate matched from even-spread here; this is a sensor-scatter
proxy and cannot carry a coverage claim
(\S\ref{par:t6b-tension}).\\

\begin{table}[!htbp]
\centering
\small
\caption{T6B balanced accuracy (3-seed mean$\pm$std). Bold = best at $L{=}3.0$
(matched and even-spread agree within noise).}
\label{tab:T6B-stress}
\begin{tabular}{@{}lcccc@{}}
\toprule
Level & B0 & wrong-$W$ & even-spread & matched \\
\midrule
clean & $0.917\pm0.023$ & $0.923\pm0.002$ & $0.925\pm0.008$ & $0.931\pm0.010$ \\
$L{=}1.0$ & $0.772\pm0.032$ & $0.825\pm0.002$ & $0.826\pm0.014$ & $0.843\pm0.013$ \\
$L{=}2.0$ & $0.515\pm0.041$ & $0.630\pm0.014$ & $0.652\pm0.038$ & $0.679\pm0.026$ \\
$L{=}3.0$ & $0.279\pm0.028$ & $0.349\pm0.031$ & $0.408\pm0.054$ & $\mathbf{0.410\pm0.029}$ \\
\bottomrule
\end{tabular}
\end{table}

\begin{figure}[!htbp]
\centering
\SubmissionFigCompact[width=\linewidth]{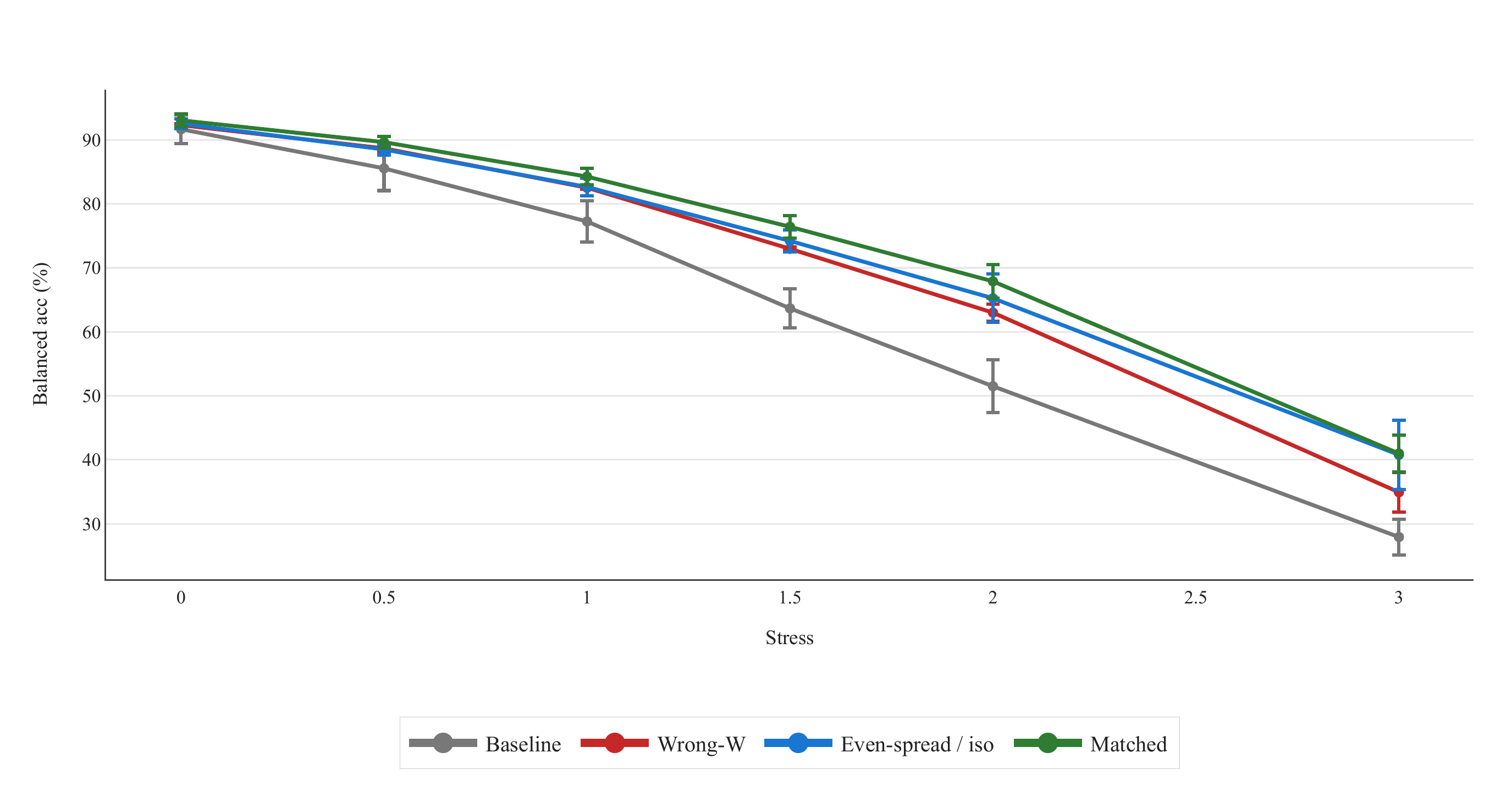}
\caption{\textbf{T6B stress.}
Matched $\approx$ even-spread, both above wrong and baseline---soft training does not
separate direction on this sensor-scatter proxy.}
\label{fig:supp-t6b}
\end{figure}

\subsection{T7A --- Qwen2.5-7B style geometry (pilot)}
\label{app:T7A}

\noindent\textbf{Setup:} Language-model style geometry via a style-Gram proxy (appendix pilot).\\
\textbf{Protocol:} Qwen2.5-7B; reward model 20 epochs; DPO LoRA 1 epoch / 240 pairs;
TruthfulQA $n{=}500$.\\
\textbf{Headline:} Matched sycophancy falls $38.5\to13.5\%$; style TDI matched DPO $1.84$
vs.\ standard $2.41$.\\
\textbf{Note:} Even-spread can beat matched on raw sycophancy / MC1---pilot, not a Tier-2
direction claim.

\begin{figure}[!htbp]
\centering
\SubmissionFigCompact[width=\linewidth]{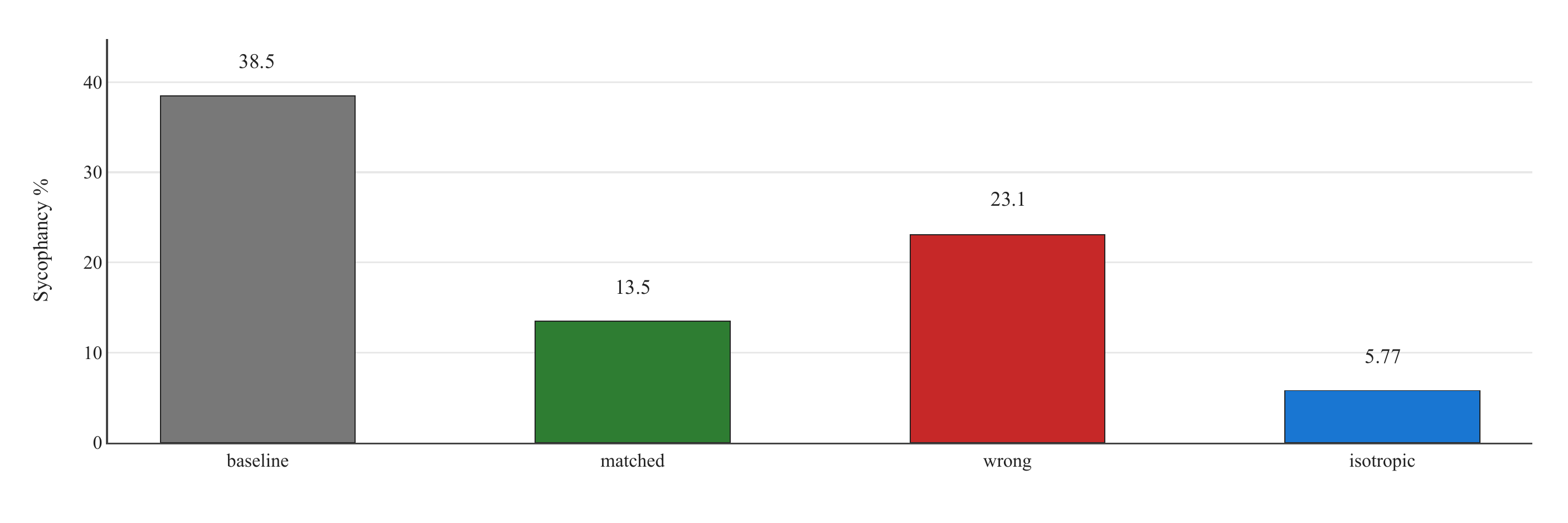}
\caption{\textbf{T7A pilot.} Style-Gram DPO geometry.}
\label{fig:supp-t7a}
\end{figure}

\begin{figure}[!htbp]
\centering
\SubmissionFigCompact[width=\linewidth]{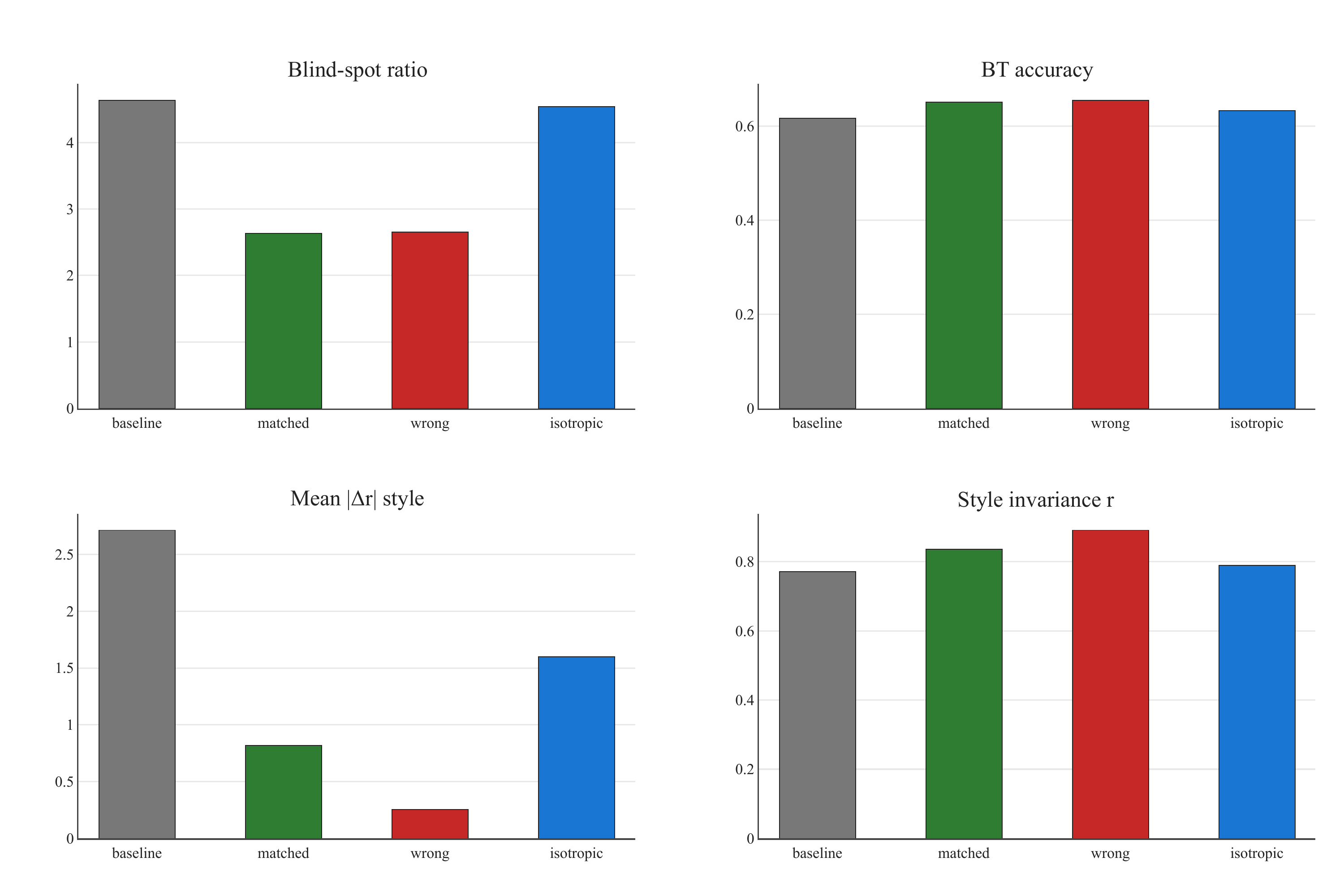}
\caption{\textbf{T7A RM metrics.} Mixed selectivity vs.\ even-spread.}
\label{fig:real-t7a}
\end{figure}

\subsection{T7B --- CIFAR-10 ViT PGD geometry}
\label{app:T7B}

\noindent\textbf{Setup:} CIFAR-10 ViT under an algorithm-induced PGD subspace penalty
(family $A_7$).\\
\textbf{Protocol:} ViT-Small, 75 epochs, batch 1024; seeds $\{7,13,42\}$
(\texttt{revision\_hardened\_2026}).\\
\textbf{Relation to April preprint.}
T7B hardens the earlier CIFAR-10 ViT matched-Jacobian / PGD column (shared noise-recipe
lineage; not the same run).
April reported single-seed B0 clean $69.95\%$ in a stressful from-scratch regime;
canonical numbers here are three-seed means (B0 $\approx 78.0\%$)---do not equate the
scalars.\\
\textbf{Takeaway:} Axes split on real data.
Matched training wins clean accuracy, Gaussian noise at $\sigma{=}0.1$
($73.8\pm1.0\%$ vs.\ PGD-AT $62.2\pm1.1\%$ / baseline $50.2\pm0.6\%$), and class-layout
geometry (\TDI{} $0.88\pm0.05$ vs.\ $1.48\pm0.04$).
PGD-AT wins registered adversarial metrics (subspace drift $0.18\pm0.03$ vs.\
$0.93\pm0.22$; PGD@4 $44.6\pm0.5\%$ vs.\ $16.6\pm6.3\%$, with matched also below
baseline $22.2\pm6.0\%$).
No single registered scalar certifies both stories.\\
\textbf{Wall-clock.}
Same recipe: B0 $\approx\!2.8$\,min; matched median $\approx\!6.2$\,min
(seed~$42$ slowed by contention); PGD-AT $\approx\!21$\,min ($\approx\!3.4\times$ matched);
VAT median $\approx\!11$\,min (seed~$42$ contended).\\
\textbf{Figures:} Figure~\ref{fig:t7b-multiseed-flagship}.\\
\textbf{JSON:} \JsonPath{T7/task7B/results/revision\_hardened\_2026/summary\_ci.json}.
A gradient-coupled arm in that JSON has incomplete geometry metrics and is excluded from
claims.
Wave~B prediction seeds hit the same qualitative split at different absolute levels;
paper numbers stay on $\{7,13,42\}$.

\begin{table}[!htbp]
\centering
\small
\caption{\emph{Audit table.} T7B three-seed aggregate (mean$\pm$CI half-width; seeds $\{7,13,42\}$).
Bold $=$ best on each column.
Opposite rankings on clean/Gaussian/TDI vs.\ registered PGD metrics are the point.}
\label{tab:T7B-subspace-full}
\begin{tabular}{@{}lccccc@{}}
\toprule
Arm & Clean (\%)$\uparrow$ & $\sigma{=}0.1$ (\%)$\uparrow$ & TDI$\downarrow$ & Subspace drift$\downarrow$ & PGD@4 (\%)$\uparrow$ \\
\midrule
baseline & $78.0\pm1.4$ & $50.2\pm0.6$ & $1.11\pm0.07$ & $5.02\pm1.09$ & $22.2\pm6.0$ \\
VAT & $77.2\pm1.8$ & $57.6\pm0.9$ & $1.37\pm0.04$ & $2.29\pm0.34$ & $11.7\pm0.8$ \\
even-spread & $78.4\pm0.1$ & $73.6\pm0.7$ & $0.97\pm0.03$ & $2.31\pm0.17$ & $20.3\pm2.6$ \\
matched & $\mathbf{81.0\pm1.2}$ & $\mathbf{73.8\pm1.0}$ & $\mathbf{0.88\pm0.05}$ & $0.93\pm0.22$ & $16.6\pm6.3$ \\
PGD-AT & $64.7\pm0.7$ & $62.2\pm1.1$ & $1.48\pm0.04$ & $\mathbf{0.18\pm0.03}$ & $\mathbf{44.6\pm0.5}$ \\
\bottomrule
\end{tabular}
\end{table}

\begin{figure}[!htbp]
\centering
\SubmissionFigCompact[width=\linewidth]{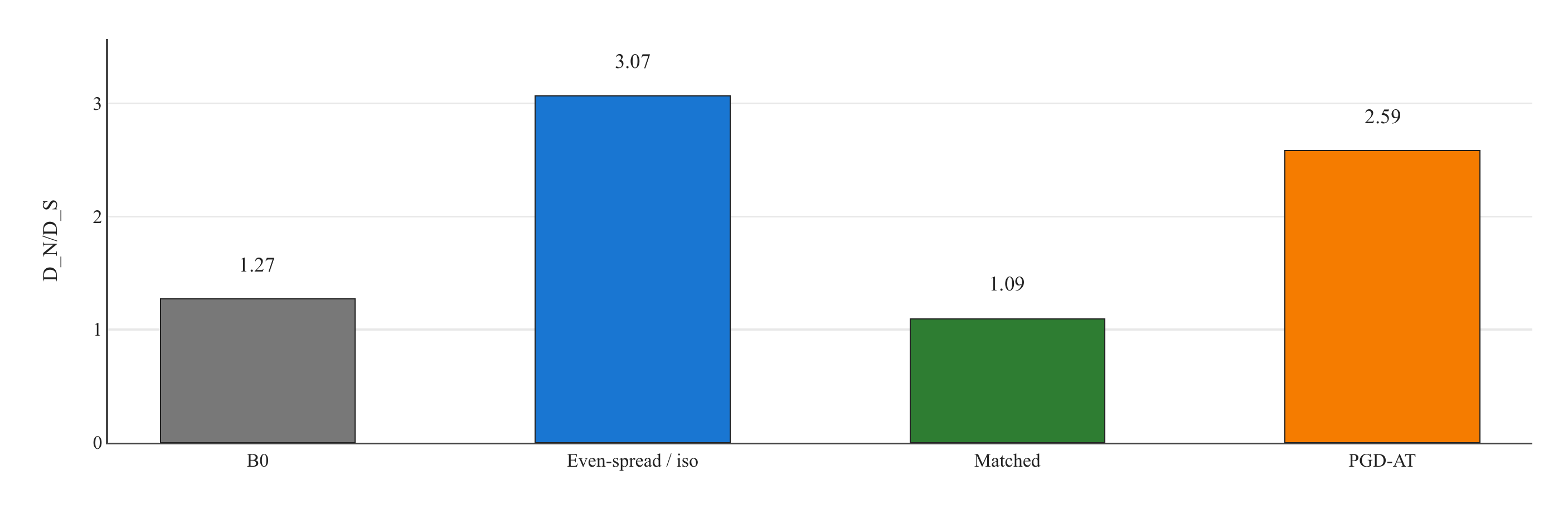}
\caption{\textbf{T7B secondary geometry ($D_N/D_S$).}
Three-seed means from \texttt{summary\_ci.json}; lower is better.
Not the clean/Gaussian/TDI vs.\ PGD-drift dissociation (that is
Figure~\ref{fig:t7b-multiseed-flagship}).}
\label{fig:edges-t7b}
\end{figure}

%% file: appendix/moved_from_main.tex

\input{sections/fragments/t1_estimated_w_numbers}
\input{sections/fragments/t4b_multiseed_numbers}
\section{Material moved from the main text}
\label{app:moved-from-main}

Screens that support the main-text spine but are not needed to follow the argument
(oracle T1 ordering $\to$ T7B field check $\to$ checklist).
\textbf{Canonical} soft-$\lambda$ and estimated-$\hat W$ write-ups live here; the
per-task cards in \S\ref{app:tasks-full} point back to these subsections.
Main text: \S\ref{sec:empirical}.

\subsection{Estimator families and eigengap pre-flight}
\label{app:Ak-tables}

\begin{table}[t]
\centering
\caption{What each experiment names vs.\ what training actually penalises.}
\label{tab:Ak-summary}
\label{tab:nuisance-families}
\small
\setlength{\tabcolsep}{2.5pt}
\begin{tabularx}{\linewidth}{@{}lcc>{\raggedright\arraybackslash}X>{\raggedright\arraybackslash}Xl@{}}
\toprule
Family & $A_k$ & D & Named construction & Penalised in experiments & Blocks \\
\midrule
Subspace & 1 & 1 & $n{=}W\eta$; top-$r$ left singular subsp.
  & Identified subsp.\ (oracle~$W$ hard-proj) & T1 \\
Even-spread & 2 & 2 & $\mathcal{N}(0,\sigma^2 I)$; $\hat\sigma^2 I$
  & As named (even-spread / isotropic) & T2A, T2B \\
Photometric & 3 & 3 & aug.\ modes $\beta_k$; mean Gram
  & Constr./aug.\ modes (shared orbit) & T3A, T3B \\
Domain & 4 & 4 & paired $x_T{-}x_S$; paired-diff.\ moment
  & Rep.-Gram proxy (not constr.~D4) & T4A/B \\
Compositional & 5 & 5 & coord.\ spike on $\mathcal{V}_n$; block cov.
  & Coord.\ / construction~D5 & T5A, T5B \\
Temporal & 6 & 6 & deployment-like $\Delta h$; incr.\ cov.
  & Residual/scatter proxy (not D6 estimator) & T6A/B \\
Adv.\ / align. & 7 & 7 & PGD $\hat\delta^\star$ or style; Gram
  & T7A: style-Gram proxy; T7B: PGD-delta Gram & T7A, T7B \\
\bottomrule
\end{tabularx}
\end{table}

\begin{table}[t]
\centering
\caption{Heuristic map from deployment symptom to candidate family $A_k$.}
\label{tab:Ak-choice}
\small
\begin{tabularx}{\linewidth}{@{}>{\raggedright\arraybackslash}X c c@{}}
\toprule
Deployment symptom & $A_k$ & Block \\
\midrule
Known / estimable low-rank subspace & $A_1$ & T1 \\
No preferred direction (sensor noise) & $A_2$ & T2 \\
Finite aug.\ / photometric modes & $A_3$ & T3 \\
Paired multi-domain shift & $A_4$ & T4 \\
Nuisance in named coordinates & $A_5$ & T5 \\
Label-constant temporal / sensor drift & $A_6$ & T6 \\
Learned deltas (PGD) or style rewrites & $A_7$ & T7 \\
\bottomrule
\end{tabularx}
\end{table}

If the estimate is low-rank, report the eigengap $\gamma_r=\lambda_r/\lambda_{r+1}$.
When $\gamma_r\approx 1$ there is no clear subspace to cover---treat that as a predicted
failure (Office-31), not as a counterexample to matching.
Figure~\ref{fig:eigengap-vs-matched} cross-checks this pre-flight on frozen blocks:
Office-31 ($\gamma_r{\approx}1.00$) shows the expected miss; elsewhere the gap is a
distrust signal for $\hat W$, not a monotone predictor of gain size.

\begin{figure}[t]
\centering
\SubmissionFig[width=0.72\linewidth]{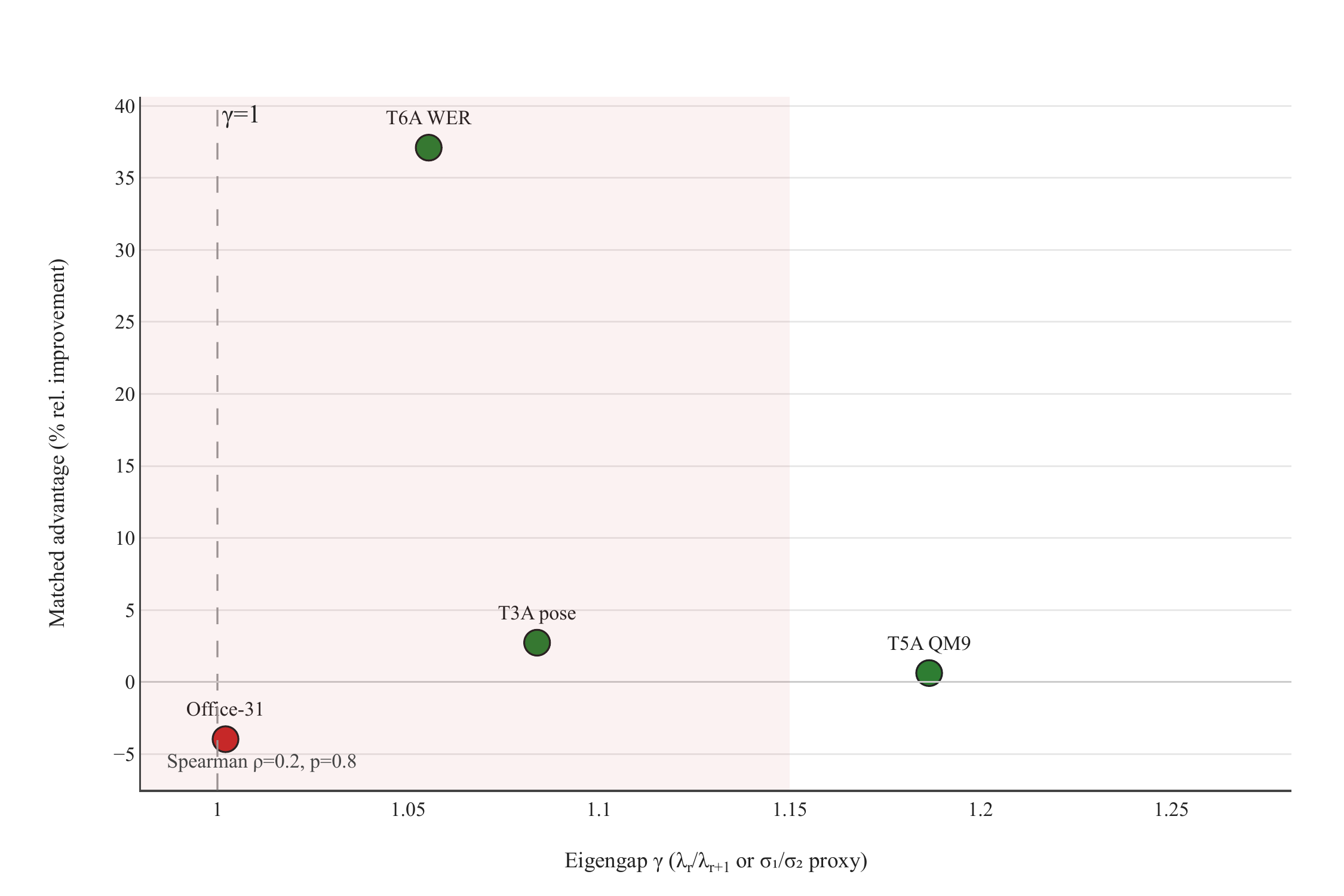}
\caption{Eigengap pre-flight vs.\ matched advantage on each block's primary task
metric (frozen JSON; $\sigma_1/\sigma_2$ proxy when rank-$r$ gap is not exported).}
\label{fig:eigengap-vs-matched}
\end{figure}

\subsection{Estimated-$\hat W$ companion (T1)}
\label{sec:T1-estimated}

Does the ordering survive when $W$ is estimated rather than handed to the trainer?
On the same Fashion-MNIST spike, hard projection with $\hat W$ from SVD of cluttered
deltas still separates matched from even-spread
(matched OOD \TOneEstMatched\TOneEstMatchedStd{} vs.\ even-spread
\TOneEstIso\TOneEstIsoStd{}; matched$-$even-spread \TOneEstMatchedMinusIso\,pp;
paired $t$ $p{\approx}\TOneEstPairedP$; mean principal cosine $\TOneEstMeanCos$).
Easy estimation repeats the coverage control; it does not close Cityscapes or ImageNet
orientation.

\subsection{Soft$\star$: trainable soft regime and $\lambda$ sweep}
\label{app:soft-detail}

Soft$\star$ (main Table~\ref{tab:open}): at recipe penalty weight, soft training does not
recover the oracle hard-projection ordering (main-text \S\ref{sec:soft-drift}).
On Fashion-MNIST (T1), matched and wrong penalise a rank-$16$ subspace while even-spread
penalises everything; the three arms are near-parity at $\lambda{=}2$.
Raising $\lambda$ uncapped pulls matched drift below even-spread only for
$\lambda\gtrsim 32$, while wrong stays competitive through $\lambda{=}128$
(Figure~\ref{fig:soft-lambda-sweep})---still short of hard projection.
\begin{itemize}[leftmargin=1.4em,itemsep=2pt,parsep=0pt,topsep=2pt]
  \item \textbf{Fashion-MNIST (T1 soft).}
    OOD matched $64.6\pm2.6\%$ vs.\ even-spread $61.6\%$ / wrong $61.8\%$;
    $D_W$ ratios $\approx 0.210$ / $0.201$ / $0.189$ $\times$ baseline.
  \item \textbf{NYU Depth (T3B).}
    AbsRel matched $0.215$ / wrong $0.211$ / B0 $0.242$---all three share the same
    photometric orbit, so direction barely matters.
  \item \textbf{Whisper (T6A).}
    WER matched $14.63$ vs.\ B0 $23.26$; wrong also beats matched on WER.
  \item \textbf{UCI HAR (T6B).}
    Balanced accuracy matched $0.410\pm0.029$ $\approx$ even-spread $0.408\pm0.054$
    $\succ$ wrong $0.349\pm0.031$ $\succ$ B0 $0.279\pm0.028$.
\end{itemize}
\noindent\textbf{CodeBERT rename control (T5B).}
\label{par:t5b-main}
Identifier-matched training raises rename ratio from $0.830$ to $0.938$; a
signal-aligned wrong partition falls to $0.738$.
On the identifier-rename pilot, VAT edges clean balanced accuracy by only
${\approx}0.7$\,pp over E1 at a single seed ($0.938$ vs.\ $0.931$; $n{=}1$)---noise-level
on that axis---while E1's rename-ratio gap is large ($0.938$ vs.\ VAT $0.799$ /
B0 $0.830$).

\begin{figure}[htbp]
\centering
\SubmissionFig{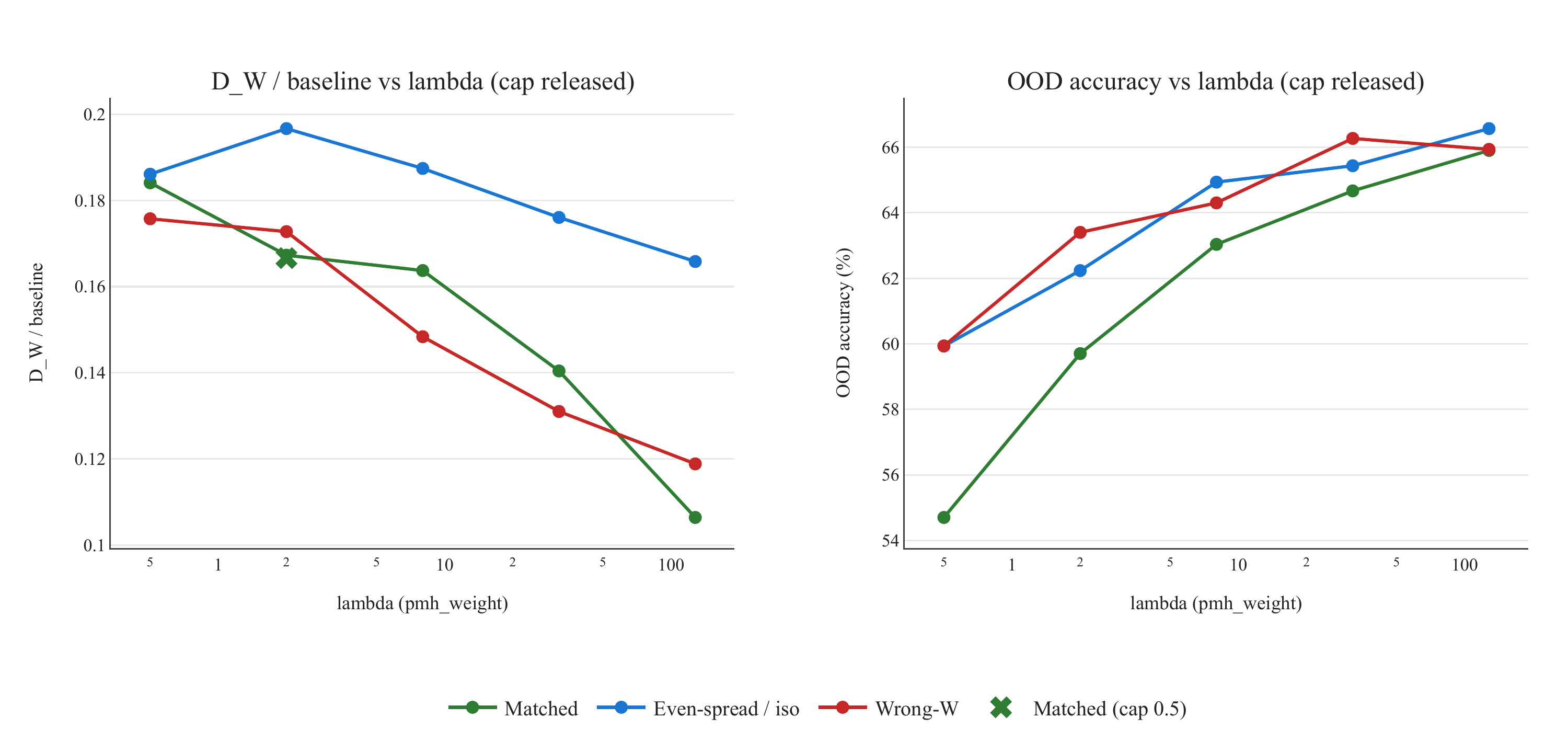}
\caption{\textbf{Soft weight is not oracle weight.}
As $\lambda$ grows, matched drift pulls below even-spread, but wrong stays competitive
through $\lambda{=}128$; recipe $\lambda{=}2$ never reaches the hard-projection regime.}
\label{fig:soft-lambda-sweep}
\end{figure}

\paragraph{Projector-soft probe (Fashion-MNIST).}
\label{sec:projector-soft}
See the protocol card in \S\ref{app:projector-soft} (column finite-difference vs.\
feature-MSE on the same Fashion spike).

\subsection{Real directional proxies}
\label{sec:emp-orient-boundary}
\label{sec:T2A}
\label{sec:emp-test2}
\label{sec:empirical-passes}
\label{sec:T4}

\paragraph{ImageNet orientation near-null.}
On gated ILSVRC (and ImageNet-100), matched-vs-wrong orientation stress is near-null
($\Delta{=}{+}0.45$\,pp at $\sigma{=}0.10$ on gated ILSVRC; public-100 $\Delta{=}{-}0.3$\,pp).
Direction-matching does not help when the estimated anisotropy is weak.

\paragraph{ImageNet-C scale check (T2A).}
When $\Sigmatask\propto I$, matched and even-spread coincide.
ImageNet-C severity~$3$ improves by $+4.3$\,pp with clean preserved
($1.2\times$ task-only epochs)---a scale screen, not a coverage headline.

\paragraph{Cityscapes rare-5 proxy (T4B).}
On Cityscapes rare-5, multi-scale soft lift vs.\ even-spread is directional but underpowered
($n{=}3$, Welch $p{\approx}0.07$; Figure~\ref{fig:p2-right-object}).
The aggregate improves while motorcycle IoU does not
($9.0\pm8.6$ vs.\ B0 $10.9\pm2.6$).
Treat this as a weak proxy signal, not evidence that coverage worked.
Office-31: see \S\ref{app:T1} (weak-eigengap failure mode).

\begin{figure}[htbp]
\centering
\SubmissionFig{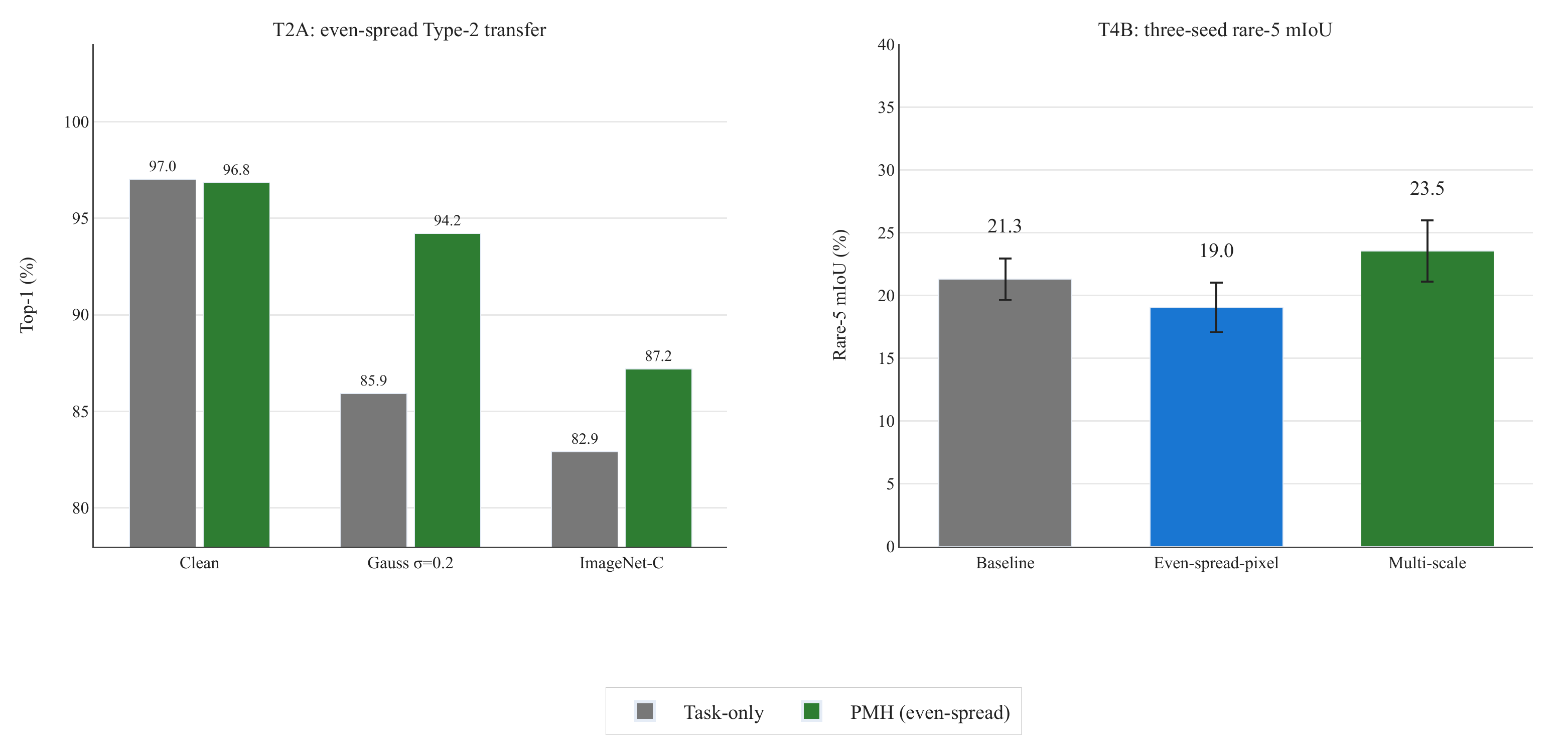}
\caption{\textbf{The penalised object matters.}
Right: T4B rare-5 mIoU (mean$\pm$SD; underpowered proxy).
Left: even-spread Type-2 control when $\Sigmatask\propto I$.}
\label{fig:p2-right-object}
\end{figure}

\subsection{Full Tier-3 competitor table}
\label{app:tier3-full}

\begin{table}[htbp]
  \centering
  \caption{Tier~3 (full, with protocol footnotes): matched Jacobian penalty vs.\ field
  alternatives on task metrics.
  Verdict is on the stated task axis only.
  The two T7B/PGD-AT rows are one comparison on two axes
  (Corollary~\ref{cor:nfl-noise}).
  Main text Table~\ref{tab:tier3-competitors} carries the same five rows without footnotes.}
  \label{tab:tier3-competitors-full}
  \small
  \setlength{\tabcolsep}{3pt}
  \begin{tabular}{@{}lp{2.1cm}p{2.4cm}p{2.6cm}l@{}}
    \toprule
    Block & Competitor & Matched (task) & Competitor (task) & Verdict \\
    \midrule
    T7B & PGD-AT & clean $81.0$; $\sigma{=}0.1$ $73.8$ &
      clean $64.7$; $\sigma{=}0.1$ $62.2$ &
      \textbf{win} (joint clean$+$generic; PGD pays ${-}13$\,pp clean) \\
    T7B & PGD-AT & PGD@4 $16.6$; drift $0.93$ &
      PGD@4 $44.6$; drift $0.18$ &
      lose (registered PGD; ${\approx}3.4\times$ wall-clock) \\
    T2B & VAT & worst-shift $69.2$; clean $89.1$ &
      worst-shift $81.7$; clean $92.6$ &
      lose (non-matched head-to-head) \\
    T7A & iso & syc.\ $13.5$; MC1 $0.55$ &
      syc.\ $5.8$; MC1 $0.65$ &
      \textbf{lose} (pilot) \\
    T6A & CE re-run & WER $14.63$ & WER $13.91$ &
      lose (task, WER); win (geom.\ \TDI{}) \\
    \bottomrule
  \end{tabular}\\[0.35em]
  \raggedright\scriptsize
  T2B protocol: same ResNet-18 / 30 epochs / Adam $10^{-4}$ / batch~$32$ / seed~$42$;
  VAT Miyato-style ($\varepsilon{=}2.0$, $\xi{=}10^{-6}$, weight~$1.0$, $1$ power iter) vs.\
  E1 even-spread multi-scale feature-MSE $+$ pixel Gaussian/intensity.
  E1 still leads L4 drift ($9.7$ vs.\ VAT $13.4$).
  T6A: matched alone cuts temporal \TDI{} ($0.38$ vs.\ accent-adapted $1.10$).
\end{table}

\subsection{Geometry probe (TDI) detail}
\label{app:tdi-detail}
\label{def:tdi}

When the \emph{deployment} law is roughly isotropic ($\Sigmatask\approx\sigma^2 I$) we use trajectory
\TDI{} (lower is better).
When an estimated subspace $\hat W$ is available we use $D_N/D_S$.
For $\delta\sim\mathcal{N}(0,\sigma^2 I)$ and layer maps $\phi^{(1:\ell)}$,
\begin{equation}
\mathrm{TDI}(\phi,\sigma)
\;:=\;
\frac{1}{L}\sum_{\ell=1}^{L}
\frac{\E_{x,\delta}\!\left[\big\|\phi^{(1:\ell)}(x+\delta)-\phi^{(1:\ell)}(x)\big\|_2^2\right]}
     {\E_{x}\!\left[\big\|\phi^{(1:\ell)}(x)\big\|_2^2\right]}.
\label{eq:tdi-trajectory}
\end{equation}
On T2A, \TDI{}@$\sigma{=}0.10$ falls $58\%$ under even-spread \PMH{} while $\|J\|_F$ falls
only $9.1\%$.

\subsection{Why methods work (visual)}
\label{app:why-methods-fig}

\begin{figure}[htbp]
\centering
\SubmissionFig{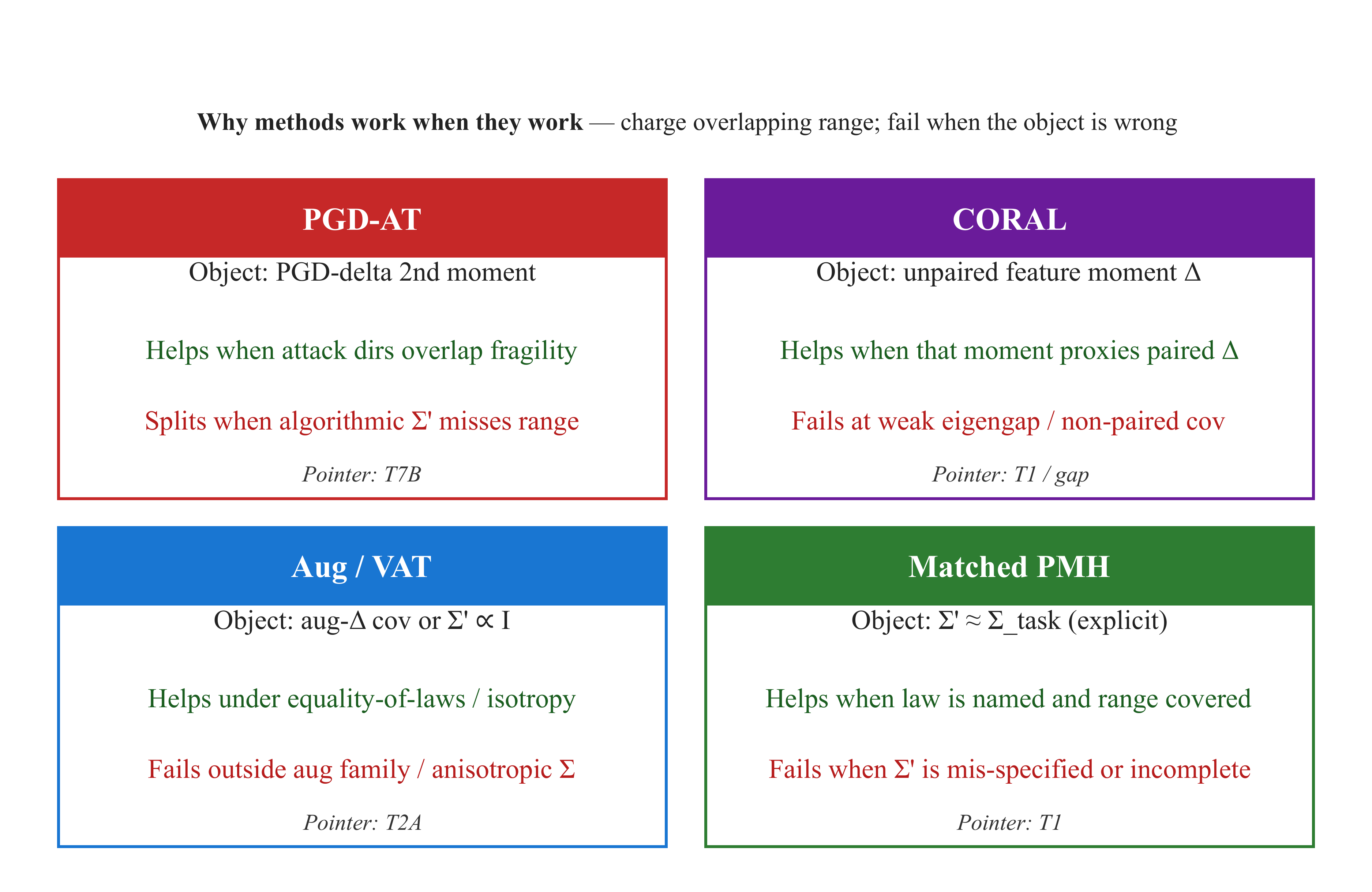}
\caption{\textbf{Why methods work when they work.}
Familiar methods penalise related second-moment objects; lasting help requires covering
the directions that actually move at deployment.}
\label{fig:why-methods-work}
\end{figure}

%% file: appendix/proofs.tex
\section{Supporting proofs}
\label{app:proofs-online}

Skip this section unless you need the foundation calculus.
Core theorems and license statements live in the main manuscript appendix
(Appendix~\ref{app:proofs}; Appendix~\ref{app:conditional-licenses}).
Protocols and frozen numbers are earlier: \S\ref{app:tasks-full}--\S\ref{app:moved-from-main}.
What follows is supporting material only: foundation lemmas, secondary calculus, and
the D1/D2/D7 coupling catalogue---not required for the paper's empirical claims.

\paragraph{Contents.}
\begin{enumerate}[leftmargin=*, itemsep=2pt]
\item Foundation (\ref{app:foundation}): blind spot, even-spread, PGD limits, Theorem~\ref{thm:p1-1}.
\item Secondary calculus (\ref{app:secondary-calculus}): allocation and mismatch asymptotics
  behind Corollary~\ref{cor:rate-separation} (skippable for the main spine).
\item D1/D2/D7 and constructions D3--D6 (\ref{app:D-lemmas}): candidate couplings under named
  assumptions.
\end{enumerate}

\input{appendix/foundation_proofs}
\input{appendix/secondary_calculus}

\subsection{D1, D2, D7 and constructions D3--D6}
\label{app:D-lemmas}

The rate lemmas below supply candidate displacement laws for the experimental families.
Families $A_3$--$A_6$ are summarised as constructions in Table~\ref{tab:D3-D6}.
\input{appendix/d_lemmas}

%% file: appendix/foundation_proofs.tex
\subsection{Geometric foundation: blind spot, isotropy, PGD, and training cap}
\label{app:foundation}

Supporting geometry for the task-only floor and related limits.
Theorem~\ref{thm:p1-1} is proved here; Proposition~\ref{prop:pmh-cap} is in
Appendix~\ref{app:proofs}.
The remarks that follow are useful context, not required for the Core Theory claims.

\begin{remark}[Gaussian predictive coordinate with label-preserving deployment corruption]
\label{rem:gaussian-blindspot}
$S\sim\mathcal{N}(0,I_{d_s})$, $N\sim\mathcal{N}(0,I_{d_n})$, $S\perp N$,
$Y=\langle w_s,S\rangle+\rho\langle w_n,N\rangle+\varepsilon$ with
$\varepsilon\sim\mathcal{N}(0,\sigma_\varepsilon^2)$,
$\|w_s\|_2=\|w_n\|_2=1$, $\rho>0$, and training input $X=(S,N)$.
Deployment applies an independent post-measurement corruption
$X^+=(S,N+\Delta)$ with $\Delta\sim\mathcal{N}(0,\sigma^2I_{d_n})$ while holding
$Y^+=Y$.  Thus $N$ is a predictive training coordinate and $\Delta$ is a distinct,
structurally label-preserving deployment displacement.  Write
$J_\phi=[J_{\phi,S}\mid J_{\phi,N}]$.
\end{remark}

\begin{remark}[Factorisation scale]
\label{rem:factorisation-scale}
See Proposition~\ref{prop:gauge} in the main text: the predictor $f=h\circ\phi$ alone does not
identify encoder Jacobian energy under the rescaling $\phi\mapsto c\phi$, $h\mapsto h/c$.
\end{remark}

\begin{lemma}[Sub-block inequality]
\label{lem:subblock}
For $A\in\mathbb{R}^{m\times d}$ and unit $v\in\mathbb{R}^d$: $\|Av\|_2^2\leq\|A\|_F^2$.
For $A=[A_1\mid A_2]$: $\|A_2 v\|_2^2\leq\|A_2\|_F^2\leq\|A\|_F^2$.
\end{lemma}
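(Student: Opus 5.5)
The statement to prove is Lemma~\ref{lem:subblock}: for $A\in\mathbb{R}^{m\times d}$ and unit $v$, $\|Av\|_2^2\le\|A\|_F^2$; and for a column split $A=[A_1\mid A_2]$, $\|A_2v\|_2^2\le\|A_2\|_F^2\le\|A\|_F^2$. The proof is a direct Cauchy--Schwarz argument, row by row, followed by a block decomposition of the Frobenius norm.

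\textbf{Step 1: the single-matrix bound.} Write the rows of $A$ as $a_1^\top,\dots,a_m^\top$, so that $(Av)_i=\langle a_i,v\rangle$. Cauchy--Schwarz gives
\[
\langle a_i,v\rangle^2\le\|a_i\|_2^2\,\|v\|_2^2=\|a_i\|_2^2 .
\]
Summing over $i$ yields $\|Av\|_2^2\le\sum_i\|a_i\|_2^2=\|A\|_F^2$. Equivalently, $\|Av\|_2\le\|A\|_{\mathrm{op}}\le\|A\|_F$, because $\|A\|_{\mathrm{op}}^2$ is the largest eigenvalue of $A^\top A$ while $\|A\|_F^2=\Tr(A^\top A)$ is the sum of its nonnegative eigenvalues.

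\textbf{Step 2: the block statement.} Here $v$ must be a unit vector in the column dimension of $A_2$; I read the statement with $v$ in that dimension. Applying Step~1 to $A_2$ gives $\|A_2v\|_2^2\le\|A_2\|_F^2$. For the second inequality, the Frobenius norm is a sum of squared entries, and the entries of $A$ are exactly the disjoint union of the entries of $A_1$ and $A_2$. Hence
\[
\|A\|_F^2=\|A_1\|_F^2+\|A_2\|_F^2\ge\|A_2\|_F^2 .
\]

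\textbf{Main obstacle.} There is no real difficulty; the only point requiring care is the dimension convention in Step~2. If instead $v\in\mathbb{R}^d$ is meant, I would interpret $A_2v$ as $A_2$ applied to the corresponding sub-block of $v$, whose norm is at most $1$, and the same Cauchy--Schwarz bound still applies.
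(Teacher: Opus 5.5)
Your proof is correct and is essentially the paper's argument. The paper applies Cauchy--Schwarz to the column expansion $Av=\sum_j v_j A e_j$ with $\sum_j v_j^2=1$, while you apply it row by row; both give $\|Av\|_2^2\le\|A\|_F^2$, and your explicit block identity $\|A\|_F^2=\|A_1\|_F^2+\|A_2\|_F^2$, with $v$ read in the column dimension of $A_2$ (as in its use with $J_{\phi,N}w_n$ in the proof of Theorem~\ref{thm:p1-1}), fills in what the paper leaves implicit.
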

\begin{proof}
Cauchy--Schwarz on $Av=\sum_j v_j A e_j$ with $\sum_j v_j^2=1$.
\end{proof}

\paragraph{Linearised vs.\ exact drift.}
Identical to Lemma~\ref{lem:lindrift} in the main manuscript appendix
(Appendix~\ref{app:proofs}); not restated here.

\begin{lemma}[The Bayes regressor encodes the predictive direction]
\label{lem:erm-encoding}
Under Remark~\ref{rem:gaussian-blindspot}, the population-MSE Bayes regressor
$f^*=h\circ\phi$ satisfies
$\partial_{w_n} f^*(x)=\rho$ almost everywhere for
$\partial_{w_n}:=\langle w_n,\nabla_N\rangle$.
Hence $J_{\phi,N}$ is non-zero on a set of positive measure.
\end{lemma}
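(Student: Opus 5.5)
The plan is to compute the Bayes regressor in closed form, read off its directional derivative, and then transfer non-vanishing to the encoder using the factorisation $f^*=h\circ\phi$ together with the $L$-Lipschitz head. I will avoid any chain rule through $h$, since $h$ need not be differentiable at points of $\phi(\mathcal{X})$.

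First, under Remark~\ref{rem:gaussian-blindspot} the population-MSE minimiser is $f^*(x)=\E[Y\mid X=x]$. Since $X=(S,N)$ is observed exactly and $\varepsilon$ is independent and zero-mean, this gives
\[
f^*(s,n)=\langle w_s,s\rangle+\rho\langle w_n,n\rangle
\]
for $P_X$-almost every $(s,n)$. Because $P_X=\mathcal{N}(0,I_{d_s+d_n})$ has an everywhere-positive density, $P_X$-null sets and Lebesgue-null sets coincide. Any version of $f^*$ therefore agrees almost everywhere with this affine function, and I work with the affine (smooth) version. Differentiating along $(0,w_n)$ gives $\partial_{w_n}f^*(x)=\rho\|w_n\|_2^2=\rho$ everywhere, and hence almost everywhere for any version.

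Second, to pass to the encoder, I would use the Lipschitz head rather than differentiate $h$. Write $u=(0,w_n)$. For every $x$ and every $t\neq 0$,
\[
\rho|t|=|f^*(x+tu)-f^*(x)|=|h(\phi(x+tu))-h(\phi(x))|\le L\,\|\phi(x+tu)-\phi(x)\|_2 .
\]
At every $x$ where $\phi$ is differentiable (almost every $x$ by the standing assumption in \S\ref{sec:setup}), divide by $|t|$ and let $t\to0$. This yields $\|J_{\phi,N}(x)w_n\|_2\ge\rho/L>0$, so $J_{\phi,N}(x)\neq0$ on a full-measure set, which in particular has positive measure. As a by-product, Lemma~\ref{lem:subblock} upgrades this to $\|J_{\phi,N}(x)\|_F^2\ge\rho^2/L^2$; that is the input needed later for Theorem~\ref{thm:p1-1}.

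The main obstacle is the bookkeeping of ``almost everywhere''. The identity $f^*=h\circ\phi$ is assumed to hold for the chosen factorisation, but the Bayes regressor is only defined up to null sets. If $h\circ\phi$ equals the affine function only almost everywhere, the pointwise difference quotient above is not immediately justified.

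To handle this, I would first note that $h\circ\phi$ is continuous wherever $\phi$ is, assuming $\phi$ is continuous, as it is for differentiable-a.e.\ encoders used in practice. Two continuous functions that agree almost everywhere agree everywhere, so the identity can be taken pointwise. If continuity of $\phi$ is not available, the fallback is to apply the inequality only along lines $x+tu$ that lie in the full-measure agreement set for almost every $t$, which Fubini guarantees for almost every base point. I would then take the limit along a sequence $t_k\to0$ drawn from that set.
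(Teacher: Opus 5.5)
Your proof is correct. Your first step is the paper's entire proof: the paper writes $f^*(x)=\langle w_s,s\rangle+\rho\langle w_n,n\rangle$ and differentiates.

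The paper gives no separate argument for the ``hence'' clause. It leaves it to the chain rule $\partial_{w_n}f^*(x)=\nabla h(\phi(x))^\top J_{\phi,N}(x)w_n$, which it writes out only later, in the proof of Theorem~\ref{thm:p1-1}. There it obtains $\rho\le L\,\E\|J_{\phi,N}w_n\|_2$ and then applies Jensen. That route is shorter, but it tacitly assumes $h$ is differentiable at the points $\phi(x)$. Rademacher's theorem does not give this when $\phi(\mathcal{X})$ is Lebesgue-null in $\mathcal{Z}$. For example, $\phi(x)=(f^*(x),0)$ and $h(z)=z_1+|z_2|$ give $h\circ\phi=f^*$, yet $h$ is nowhere differentiable on $\phi(\mathcal{X})$.

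Your Lipschitz difference-quotient argument needs no differentiability of $h$, and it gives more. You get the pointwise bound $\|J_{\phi,N}(x)w_n\|_2\ge\rho/L$ at almost every $x$. So $J_{\phi,N}\neq0$ almost everywhere, not merely on a positive-measure set. Theorem~\ref{thm:p1-1}'s bound $\E\|J_{\phi,N}\|_F^2\ge\rho^2/L^2$ then follows by integrating Lemma~\ref{lem:subblock} pointwise, with no Jensen step.

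The price is the null-set bookkeeping, which you handle correctly:
\begin{itemize}
\item For continuous $\phi$, the identity $h\circ\phi=f^*$ holds everywhere, since a Lebesgue-null set has empty interior.
\item In the Fubini fallback, add that the base point $x$ itself must lie in the agreement set and in the differentiability set of $\phi$. Both sets have full measure.
\item Drop ``for any version'' in your first step. A version altered on a dense null set need not be differentiable anywhere; the derivative claim concerns the affine version, or $h\circ\phi$ once the identity holds pointwise.
\end{itemize}
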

\begin{proof}
Bayes predictor $f^*(x)=\langle w_s,s\rangle+\rho\langle w_n,n\rangle$.
Direct differentiation gives $\partial_{w_n}f^*=\rho$.
\end{proof}

\paragraph{Proof of Theorem~\ref{thm:p1-1}.}
By Lemma~\ref{lem:erm-encoding} and the chain rule,
$\rho=\E[\nabla_\phi h^\top J_{\phi,N} w_n]\leq L\,\E[\|J_{\phi,N} w_n\|_2]$.
Jensen gives $\E[\|J_{\phi,N} w_n\|_2^2]\geq\rho^2/L^2$.
Lemma~\ref{lem:subblock} yields
$\E[\|J_{\phi,N}\|_F^2]\geq\E[\|J_{\phi,N} w_n\|_2^2]\geq\rho^2/L^2$, hence
$\tilde D_Q(\phi,\sigma)=\sigma^2\E[\|J_{\phi,N}\|_F^2]\geq\sigma^2\rho^2/L^2$.
$\square$

\paragraph{Proof of the directional-encoding remark.}
By the chain rule,
$|\E[\partial_u f^*(X)]|
\leq L\,\E\|J_{\phi^*}(X)u\|_2$.
The assumed lower bound and Jensen's inequality give
$\E\|J_{\phi^*}(X)u\|_2^2\geq c_u^2/L^2$.
$\square$

\paragraph{Proof of the isotropy-uniqueness remark.}
For $\Cov(\delta)=\Sigma_\delta$,
$\E_\delta[\|J_\phi\delta\|_2^2]=\Tr(J_\phi^\top J_\phi\Sigma_\delta)$.
Equality with $\sigma^2\Tr(J_\phi^\top J_\phi)$ for all $J_\phi$ holds iff $\Sigma_\delta=\sigma^2 I$
(test all one-row matrices $J=x^\top$, which gives
$x^\top\Sigma_\delta x=\sigma^2\|x\|_2^2$ for every $x$).
$\square$

\paragraph{Proof of PGD-subspace incompleteness (Corollary~\ref{cor:nfl-noise} specialisation).}
Let $S$ be the strict controlled subspace and choose a unit $q\in S^\perp$.  For any scalar $a$, the
one-row Jacobian $J=a q^\top$ satisfies $Js=0$ for every $s\in S$, while
$\|J\|_F=|a|$ can be arbitrarily large.  Thus control on $S$ alone cannot imply isotropic Frobenius
control.  PGD directions may span such a strict subspace; whether they do is algorithm- and
data-dependent.
$\square$

\paragraph{Proof of Proposition~\ref{prop:pmh-cap}.}
See the main manuscript appendix (Appendix~\ref{app:proofs}).
$\square$

%% file: appendix/secondary_calculus.tex
\subsection{Secondary calculus: allocation, projectors, local costs}
\label{app:secondary-calculus}

\paragraph{Supplementary formalism.}
The remarks below are bookkeeping inside an already covered range, plus the algebra behind
main-text Corollary~\ref{cor:rate-separation}.
If you only need the full-support vs.\ missing-support split, take that corollary as stated
and skip this subsection.

\begin{remark}[Trace-constrained allocation in a shared eigenbasis]
\label{rem:allocation}
\label{prop:allocation}
Suppose $A$ and $\Sigmatask$ commute, with shared-basis eigenvalues $\mu_i$ and $\tau_i$,
and write $a_i=\tau_i\tilde v_i^2$.
Let $\lambda>0$ and $B>0$.
If at least one $a_i>0$, then under $\mu_i\geq0$ and $\sum_i\mu_i=B$, the minimiser of
$\sum_i a_i/(1+2\lambda\mu_i)^2$ is
\[
\mu_i^*=\frac{(c\,a_i^{1/3}-1)_+}{2\lambda},
\qquad
\sum_i(c\,a_i^{1/3}-1)_+=2\lambda B,
\]
with the unique $c>0$ satisfying the budget equation.
If all $a_i=0$, every feasible allocation is optimal.
KKT stationarity on $f(\mu)=\sum_i a_i(1+2\lambda\mu_i)^{-2}$ with trace multiplier $\nu$ yields
$1+2\lambda\mu_i=(4\lambda a_i/\nu)^{1/3}$ on active coordinates; inactive coordinates satisfy
$c a_i^{1/3}\leq 1$ with $c=(4\lambda/\nu)^{1/3}$.
\end{remark}

\begin{figure}[!htbp]
\centering
\SubmissionFig[width=0.82\linewidth]{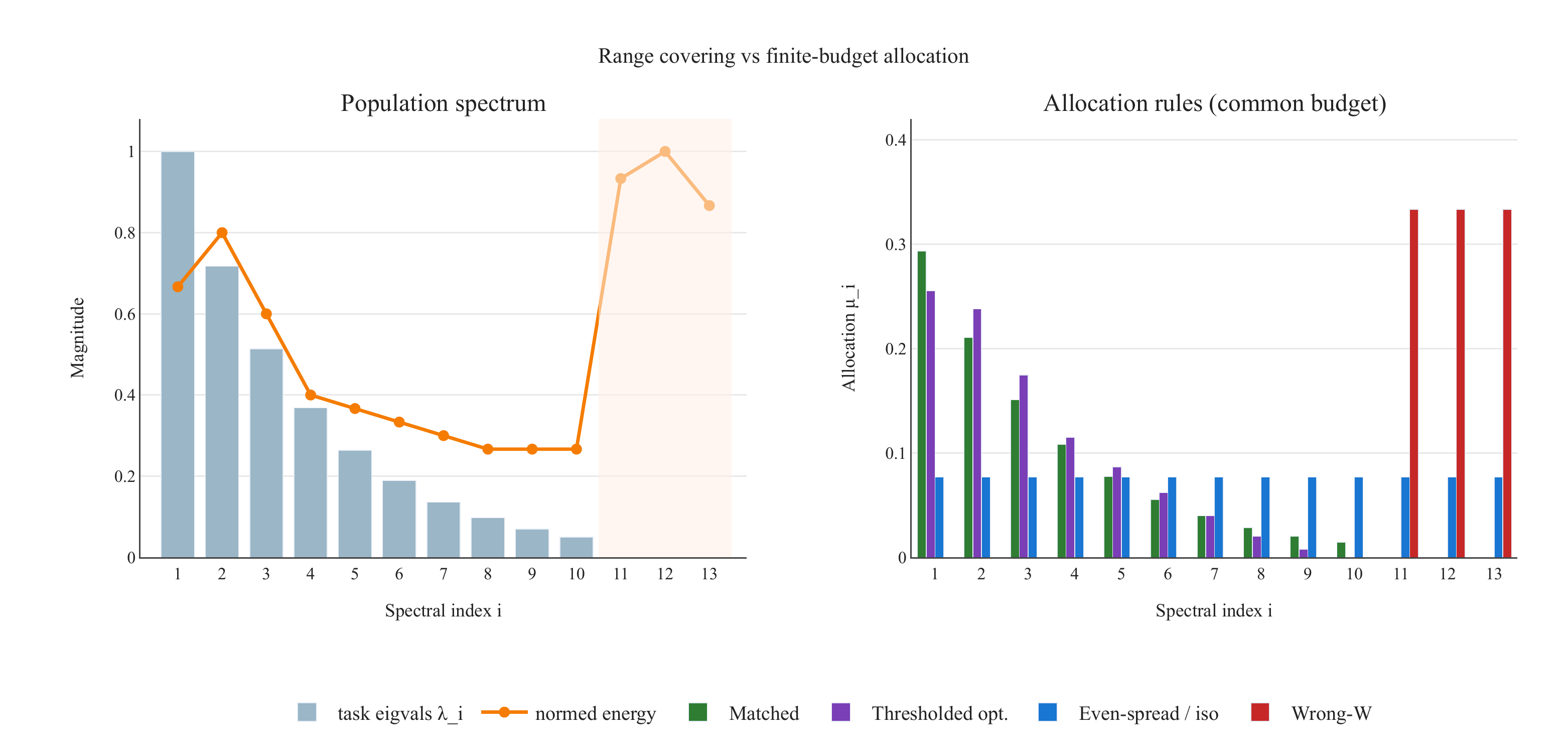}
\caption{\textbf{Remark~\ref{rem:allocation} (secondary).}
Proportional $\Sigma'\propto\Sigmatask$ covers the range but need not be trace-optimal when
task weights are uneven across eigenmodes.}
\label{fig:waterfilling}
\end{figure}

\begin{remark}[Mismatch asymptotics in a shared eigenbasis]
\label{rem:mismatch-asymp}
\label{thm:B}
Under Remark~\ref{rem:allocation} with $B>0$ and at least one $a_i>0$, write
$S=\sum_{i:a_i>0}a_i^{1/3}$ and $m=|\{i:a_i>0\}|$.
As $\lambda\to\infty$, for all large $\lambda$,
$f_\lambda(\mu^*)=S^3/(2\lambda B+m)^2$.
For any fixed feasible allocation with $\mu_i>0$ whenever $a_i>0$,
\[
f_\lambda(\mu)-f_\lambda(\mu^*)
=\frac{1}{4\lambda^2}
\Bigl[\sum_{i:a_i>0}\frac{a_i}{\mu_i^2}-\frac{S^3}{B^2}\Bigr]
+O(\lambda^{-3}),
\]
hence $f_\lambda(\mu)=\frac{1}{4\lambda^2}\sum_{a_i>0}a_i/\mu_i^2+O(\lambda^{-3})$
(Type~A / full support: every such penalty is $\Theta(\lambda^{-2})$, and ratios of two
Type~A penalties converge to a finite positive constant).
If $\mu_j=0$ for some $a_j>0$, then $f_\lambda(\mu)\geq a_j$ (Type~B / missing support:
Theorem~\ref{thm:game}'s $O(1)$ floor).
Large-$\lambda$ KKT activates all $a_i>0$, giving $c=(2\lambda B+m)/S$; expand each summand as
$a_i/(4\lambda^2\mu_i^2)+O(\lambda^{-3})$.
Main-text Corollary~\ref{cor:rate-separation} records the Type~A / Type~B split.
\end{remark}

\begin{lemma}[Mean of a random Stiefel projector]
\label{lem:C}
If $U\in\mathbb{R}^{d_x\times r}$ is uniform on the Stiefel manifold, then $\E_U[UU^\top]=(r/d_x)\,I$.
For $0<r<d_x$,
$\|UU^\top-(r/d_x)I\|_{\mathrm{op}}=\max\{1-r/d_x,r/d_x\}$;
a single random projector does not concentrate in operator norm.
\end{lemma}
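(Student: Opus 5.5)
The plan has two independent parts: an invariance argument for the mean, and a deterministic spectral computation for the operator-norm deviation.

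\emph{The mean.} Set $M:=\E_U[UU^\top]$; it is finite because every entry of $UU^\top$ is bounded by $1$. Uniformity on the Stiefel manifold means the law of $U$ is invariant under left multiplication by any orthogonal $Q\in O(d_x)$. Hence $QU$ has the same law as $U$, and
\[
QMQ^\top=\E_U[(QU)(QU)^\top]=M \quad\text{for every } Q\in O(d_x).
\]
To conclude that $M$ is scalar without invoking Schur's lemma, I would test two families of orthogonal matrices.
\begin{itemize}
\item Reflections $Q=I-2e_ie_i^\top$: conjugating by these flips the signs of the off-diagonal entries in row and column $i$. Invariance then forces $M_{ij}=-M_{ij}=0$ for $j\neq i$, so $M$ is diagonal.
\item Transposition permutation matrices: conjugating by these swaps diagonal entries, so invariance forces all diagonal entries to be equal.
\end{itemize}
Thus $M=\alpha I$ for some scalar $\alpha$. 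To find $\alpha$, take traces: $\Tr(UU^\top)=\Tr(U^\top U)=\Tr(I_r)=r$ almost surely, so $d_x\alpha=r$ and $M=(r/d_x)I$.

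\emph{The operator-norm deviation.} This part is deterministic, holding for every realisation of $U$. Since $U^\top U=I_r$, the matrix $P=UU^\top$ is an orthogonal projector of rank $r$. Its eigenvalues are $1$ with multiplicity $r$ and $0$ with multiplicity $d_x-r$. Subtracting $(r/d_x)I$ shifts every eigenvalue by the same amount, so $P-(r/d_x)I$ is symmetric with eigenvalues $1-r/d_x$ on $\rangeop(U)$ and $-r/d_x$ on its orthogonal complement. When $0<r<d_x$, both eigenspaces are nontrivial. The operator norm of a symmetric matrix is its largest absolute eigenvalue, which gives
\[
\bigl\|UU^\top-(r/d_x)I\bigr\|_{\mathrm{op}}=\max\{1-r/d_x,\;r/d_x\}.
\]

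\emph{Non-concentration.} Because this value is the same for every realisation, the deviation of a single projector from its mean never shrinks with $d_x$. In particular it is at least $1/2$, since $\max\{t,1-t\}\ge 1/2$ for $t=r/d_x$. This is exactly the statement that a single random projector does not concentrate in operator norm.

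\emph{Main obstacle.} The only step needing care is justifying that conjugation-invariance forces $M$ to be scalar. The explicit reflection-and-permutation test avoids any appeal to representation theory. Everything else is direct spectral bookkeeping.
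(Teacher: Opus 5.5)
Your proposal is correct and follows the paper's own sketch: orthogonal invariance forces $\E_U[UU^\top]$ to be a scalar multiple of $I$, the trace identity $\Tr(UU^\top)=r$ fixes the constant at $r/d_x$, and the spectrum of the rank-$r$ projector shifted by $(r/d_x)I$ gives the operator norm. Your reflection-and-transposition check simply spells out the ``invariance implies scalar'' step that the paper states without detail.
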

\paragraph{Proof sketch.}
Orthogonal equivariance forces $\E[UU^\top]=cI$; tracing gives $c=r/d_x$.  Eigenvalues of
$UU^\top-(r/d_x)I$ are $1-r/d_x$ ($r$ times) and $-r/d_x$ ($d_x-r$ times). \(\square\)

\begin{remark}[Local smooth task-risk cost]
\label{rem:local-smooth-cost}
\label{cor:E}
Let $R,P$ be $C^3$ near $\theta_0$, with $\nabla R(\theta_0)=0$ and $H=\nabla^2R(\theta_0)\succ0$.
If $g=\nabla P(\theta_0)$ and the local minimiser $\theta_\lambda$ of $R+\lambda P$ converges to
$\theta_0$ as $\lambda\downarrow0$, then
$R(\theta_\lambda)-R(\theta_0)=\tfrac12\lambda^2 g^\top H^{-1}g+O(\lambda^3)$.
Implicit-function theorem: $\theta_\lambda=\theta_0-\lambda H^{-1}g+O(\lambda^2)$; Taylor-expand $R$.
\end{remark}

\begin{remark}[Conditional directional cost]
\label{rem:directional-cost}
\label{cor:Estar}
As $\lambda\downarrow0$, if $\theta_\lambda=\theta_0+\lambda d+o(\lambda)$ and $R$ is Hadamard
directionally differentiable with $R'(\theta_0;d)>0$, then
$R(\theta_\lambda)-R(\theta_0)=\lambda R'(\theta_0;d)+o(\lambda)$.
\end{remark}

\subsection{Near-affine envelope}
\label{app:local-soundness}

\begin{remark}[Near-affine pointer]
The even-spread positive-$\varepsilon_{\mathrm{jac}}$ envelope is
Proposition~\ref{prop:lazy-even-spread}; see also Remark~\ref{rem:near-affine}
(Appendix~\ref{app:conditional-licenses}). No additional expansion is given here.
\end{remark}

%% file: appendix/d_lemmas.tex
\paragraph{Coupling catalogue.}
The lemmas below give candidate recovery rates for subspace, isotropic, and
adversarial-delta estimands; Table~\ref{tab:D3-D6} summarises constructions for the
remaining families.

\paragraph{D1: spiked-subspace recovery.}
\begin{lemma}[Recovery of a spiked nuisance subspace]
\label{lem:D1}
Let $1\leq r<d_x$ and let $Z_1,\ldots,Z_N$ be i.i.d.\ centred vectors with covariance
$C=W\Lambda W^\top+\sigma^2I$, where $W\in\mathbb{R}^{d_x\times r}$ has orthonormal columns and
$\Lambda\succ0$.  Assume the covariance-relative sub-Gaussian bound
$\|\langle Z_i,u\rangle\|_{\psi_2}\leq K(u^\top Cu)^{1/2}$ for every unit $u$.
If $\gamma=\lambda_r(C)-\lambda_{r+1}(C)>0$, $\widehat C$ is the centred sample
covariance, and $\widehat W$ contains its top $r$ eigenvectors, then on
$\|\widehat C-C\|_{\mathrm{op}}\leq\gamma/2$,
\[
\|\Pi_{\widehat W}-\Pi_W\|_{\mathrm{op}}
\leq\frac{2\|\widehat C-C\|_{\mathrm{op}}}{\gamma},
\qquad
\|\Pi_{\widehat W}-\Pi_W\|_F
\leq\frac{2\sqrt{2r}\|\widehat C-C\|_{\mathrm{op}}}{\gamma}.
\]
Writing $r_{\mathrm{eff}}(C):=\Tr(C)/\|C\|_{\mathrm{op}}$,
\[
\|\widehat C-C\|_{\mathrm{op}}
=O_P\!\left(K^2\|C\|_{\mathrm{op}}
\left[\sqrt{\frac{r_{\mathrm{eff}}(C)}{N}}
+\frac{r_{\mathrm{eff}}(C)}{N}\right]\right).
\]
\end{lemma}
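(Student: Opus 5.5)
The proof splits into a deterministic perturbation step and a probabilistic concentration step. The two subspace bounds come from the Davis--Kahan $\sin\Theta$ theorem applied to the pair $(C,\widehat C)$. The rate for $\|\widehat C-C\|_{\mathrm{op}}$ comes from a standard covariance concentration result for sub-Gaussian vectors, stated in terms of effective rank (Koltchinskii--Lounici).

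\textbf{Step 1 (eigengap survives perturbation).} Work on the event $E:=\{\|\widehat C-C\|_{\mathrm{op}}\le\gamma/2\}$. Weyl's inequality gives $|\lambda_j(\widehat C)-\lambda_j(C)|\le\gamma/2$ for every $j$. Hence $\lambda_r(\widehat C)\ge\lambda_r(C)-\gamma/2$, while $\lambda_{r+1}(\widehat C)\le\lambda_{r+1}(C)+\gamma/2$. The separation between the top-$r$ spectrum of $\widehat C$ and the bottom spectrum of $C$ is therefore at least $\gamma/2$, so the top-$r$ eigenspace $\widehat W$ is well defined.

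\textbf{Step 2 (subspace distance).} Note first that under the spiked model, $\lambda_{r+1}(C)=\sigma^2$ and $\gamma=\lambda_{\min}(\Lambda)$, so the top-$r$ eigenspace of $C$ is exactly $\mathrm{span}(W)$. Apply Davis--Kahan in the form $\|\sin\Theta\|_{\mathrm{op}}\le\|(\widehat C-C)W\|_{\mathrm{op}}/\delta$, with separation $\delta\ge\gamma/2$ from Step 1. This gives $\|\sin\Theta\|_{\mathrm{op}}\le2\|\widehat C-C\|_{\mathrm{op}}/\gamma$. Since $\|\Pi_{\widehat W}-\Pi_W\|_{\mathrm{op}}=\|\sin\Theta\|_{\mathrm{op}}$, the operator-norm bound follows.

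For the Frobenius bound, use $\|\Pi_{\widehat W}-\Pi_W\|_F=\sqrt2\,\|\sin\Theta\|_F$, together with the fact that $\sin\Theta$ is $r\times r$, so $\|\sin\Theta\|_F\le\sqrt r\,\|\sin\Theta\|_{\mathrm{op}}$. Combining these gives $2\sqrt{2r}\,\|\widehat C-C\|_{\mathrm{op}}/\gamma$.

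Alternatively, the Yu--Wang--Samworth variant yields the stated constants directly, without relying on the event $E$ except to ensure that $\widehat W$ is well defined.

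\textbf{Step 3 (concentration).} Write $\widehat C=\frac1N\sum_i(Z_i-\bar Z)(Z_i-\bar Z)^\top$. Decompose it as the uncentred sample covariance minus $\bar Z\bar Z^\top$.

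For the uncentred part, the covariance-relative $\psi_2$ hypothesis is exactly the hypothesis of Koltchinskii--Lounici (Theorem 9 there, or Vershynin's Theorem 9.2.4). It gives $\E\|\frac1N\sum Z_iZ_i^\top-C\|_{\mathrm{op}}\lesssim K^2\|C\|_{\mathrm{op}}\bigl(\sqrt{r_{\mathrm{eff}}/N}+r_{\mathrm{eff}}/N\bigr)$. Markov's inequality converts this to $O_P$.

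For the centring correction, $\|\bar Z\bar Z^\top\|_{\mathrm{op}}=\|\bar Z\|_2^2$. Since $\E\|\bar Z\|_2^2=\Tr(C)/N=\|C\|_{\mathrm{op}}\,r_{\mathrm{eff}}/N$, this term is absorbed into the second summand.

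\textbf{Main obstacle.} The delicate point is Step 3. One must verify that the covariance-relative sub-Gaussian condition is genuinely what the effective-rank bound needs. A plain $\psi_2$ bound $K$ would instead give the rate $K^2\sqrt{d_x/N}$. I would cite the theorem rather than re-derive it via generic chaining.

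A minor point is that the $O_P$ statement and the deterministic bound combine only on $E$. The event $E$ has probability tending to $1$ once $N\gg r_{\mathrm{eff}}K^4\|C\|_{\mathrm{op}}^2/\gamma^2$, which I would note as the implicit sample-size regime.
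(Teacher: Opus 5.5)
Your argument is correct. The paper states Lemma~\ref{lem:D1} without any proof; it sits in the coupling catalogue as a standard fact. The argument it leaves implicit is exactly yours: the Weyl/Davis--Kahan step, followed by the effective-rank covariance bound for covariance-relative sub-Gaussian vectors (Koltchinskii--Lounici / Vershynin) with a centring correction.

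None of the following affects the conclusion, but three points need tightening.

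\begin{itemize}
\item On $\|\widehat C-C\|_{\mathrm{op}}\le\gamma/2$, Weyl only gives $\lambda_r(\widehat C)-\lambda_{r+1}(\widehat C)\ge\gamma-2\|\widehat C-C\|_{\mathrm{op}}\ge0$. So $\widehat W$ can be non-unique at the boundary, and the cross-separation you computed does not imply well-definedness. This is harmless, because the $\sin\Theta$ bound holds for every choice of top-$r$ eigenvectors.
\item The separation you computed in Step~1 (top of $\widehat C$ against bottom of $C$) is the one paired with $\|(\widehat C-C)\widehat W\|_{\mathrm{op}}$. The form you quote, with $W$ on the right, instead needs $\lambda_r(C)-\lambda_{r+1}(\widehat C)\ge\gamma/2$. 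Your second Weyl inequality supplies this as well.
\item Absorbing $\|\bar Z\|_2^2=O_P(\|C\|_{\mathrm{op}}r_{\mathrm{eff}}(C)/N)$ into the $K^2$-weighted term uses $K\ge1$. This is automatic, since $\|\xi\|_{L^2}\le\|\xi\|_{\psi_2}$ under the usual normalisation of $\psi_2$.
\end{itemize}

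Your Yu--Wang--Samworth remark can be pushed further: the two projector bounds hold with no event at all. Off the event, the right-hand sides exceed $1$ and $\sqrt{2r}$, which are the trivial maxima of $\|\Pi_{\widehat W}-\Pi_W\|_{\mathrm{op}}$ and $\|\Pi_{\widehat W}-\Pi_W\|_F$.
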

The singular-vector implementation must use the left singular vectors of the
$d_x\times N$ centred data matrix.  The limiting projector is $WW^\top$, not $WW^\top/r$.

\newtheorem{Dlem}{Lemma}
\renewcommand{\theDlem}{D\arabic{Dlem}}

\paragraph{D2: isotropic covariance.}
\setcounter{Dlem}{1}
\begin{Dlem}[Isotropic covariance estimation]
\label{lem:D2}
If $\sigma>0$, $d\geq1$, and
$n_i\stackrel{\mathrm{iid}}{\sim}\mathcal N(0,\sigma^2I_d)$, define
$\widehat\sigma^2=(Nd)^{-1}\sum_{i=1}^N\|n_i\|_2^2$.  Then
$Nd\,\widehat\sigma^2/\sigma^2\sim\chi^2_{Nd}$ and
$|\widehat\sigma^2-\sigma^2|=O_P(\sigma^2/\sqrt{Nd})$.
\end{Dlem}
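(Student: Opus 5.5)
The plan is to reduce everything to a sum of independent squared standard normals and then read off both claims from elementary chi-square facts.

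\textbf{Standardise.} Write each $n_i=\sigma g_i$ with $g_i\stackrel{\mathrm{iid}}{\sim}\mathcal N(0,I_d)$. This is legitimate because $\sigma>0$ and $\mathcal N(0,\sigma^2I_d)$ is the law of $\sigma$ times a standard Gaussian vector. Then
\[
Nd\,\widehat\sigma^2/\sigma^2=\sum_{i=1}^N\|g_i\|_2^2=\sum_{i=1}^N\sum_{j=1}^d g_{ij}^2 .
\]
The $Nd$ scalars $g_{ij}$ are i.i.d.\ $\mathcal N(0,1)$: coordinates of a standard Gaussian vector are independent because the covariance is the identity, and the vectors $g_i$ are independent across $i$. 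By definition of the chi-square law, the double sum is $\chi^2_{Nd}$, which gives the distributional claim with no further work.

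\textbf{Rate.} I will use the first two moments of $\chi^2_{k}$, namely mean $k$ and variance $2k$. These follow from $\E g^2=1$ and $\E g^4=3$ for a standard normal $g$, so $\mathrm{Var}(g^2)=2$, together with independence. Hence
\[
\E\widehat\sigma^2=\sigma^2,\qquad \mathrm{Var}(\widehat\sigma^2)=\frac{\sigma^4}{(Nd)^2}\cdot 2Nd=\frac{2\sigma^4}{Nd}.
\]
Chebyshev's inequality then gives, for any $t>0$,
\[
\Pr\bigl(|\widehat\sigma^2-\sigma^2|>t\,\sigma^2/\sqrt{Nd}\bigr)\le 2/t^2 .
\]
Choosing $t$ large makes this probability uniformly small, which is exactly the statement $|\widehat\sigma^2-\sigma^2|=O_P(\sigma^2/\sqrt{Nd})$. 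The bound is uniform in $N,d,\sigma$, so it remains valid when $d$ grows with $N$.

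\textbf{Main obstacle.} There is no substantive obstacle here. The only point needing care is interpreting $O_P$ so that it is uniform in the product $Nd$ rather than in $N$ alone, and Chebyshev delivers exactly that. If sharper exponential tails were wanted, I would instead invoke the Laurent--Massart chi-square bound, but the lemma as stated does not require it.
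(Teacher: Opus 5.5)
Your proof is correct: standardising to $Nd$ i.i.d.\ squared standard normals gives the $\chi^2_{Nd}$ law, and Chebyshev with $\mathrm{Var}(\widehat\sigma^2)=2\sigma^4/(Nd)$ gives the $O_P(\sigma^2/\sqrt{Nd})$ rate, uniformly in $N$, $d$ and $\sigma$. The paper states this lemma without a written proof, and your argument is the standard elementary verification it relies on implicitly.
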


\paragraph{D3--D6: coupling constructions (unnumbered).}
\label{lem:D3}\label{lem:D4}\label{lem:D5}\label{lem:D6}
Families $A_3$--$A_6$ supply \emph{named} candidate second moments under the couplings below.
They are constructions / modelling assumptions for the supporting screens---not load-bearing
rate propositions for estimators that were not run as stated.
Empirical blocks record the proxy actually used (Table~\ref{tab:app-block-index}).

\begin{table}[!htbp]
\centering
\small
\caption{Assumptions and candidate constructions for $A_3$--$A_6$
(not numbered rate claims).}
\label{tab:D3-D6}
\begin{tabular}{@{}clp{0.72\linewidth}@{}}
\toprule
ID & $A_k$ & Coupling / candidate $\Sigma_{\mathrm{task}}$ \\
\midrule
D3 & $A_3$ &
Centred augmentation displacement $\beta$ with $M=\E[\beta\beta^\top]$;
sample (or exact finite-list) second moment.
Rank-$r$ PSD truncation and Davis--Kahan control need an eigengap; weak gaps
(e.g.\ T3A spectrum ratio ${\approx}1.08$) predict unreliable subspace match. \\
D4 & $A_4$ &
Specified label-equivalent pairing $\pi$ with $\Delta=X_T-X_S$ and
$M_\pi=\E_\pi[\Delta\Delta^\top]\neq0$.
Equality of marginal conditionals does not supply $\pi$.
Flagship T4 uses a representation-Gram \emph{proxy}, not this paired-moment estimator. \\
D5 & $A_5$ &
Coordinate support $n=J_nu$ a.s.\ yields $\Sigma_{\mathrm{task}}=J_n\Cov(u)J_n^\top$;
block-restricted sample covariance under finite moments.
``Nuisance'' still requires a separate label-preserving coupling. \\
D6 & $A_6$ &
If deployment perturbations are \emph{assumed} to share the law of centred
within-segment increments $u_t$, then $\Sigma_{\mathrm{task}}=\E[u_tu_t^\top]$.
Label constancy makes increments candidates only; T6 uses residual/scatter proxies. \\
\bottomrule
\end{tabular}
\end{table}

\paragraph{Sketch (D3 truncation; optional).}
If $\|\widehat M-M\|_{\mathrm{op}}=\epsilon_N$, the best rank-$r$ PSD truncation obeys
$\|\mathcal T_r(\widehat M)-M\|_{\mathrm{op}}\leq\lambda_{r+1}(M)+2\epsilon_N$;
an eigengap larger than $2\epsilon_N$ additionally yields Davis--Kahan subspace control.
This is bookkeeping for the construction, not a claim that the T3 arms ran a proved-rate
estimator.

\paragraph{D7: learned perturbation Gram.}
\setcounter{Dlem}{6}
\begin{Dlem}[Estimation of an algorithm-induced moment]
\label{lem:D7}
Let $d_x\geq2$ and let $\delta_i^\star$ be i.i.d.\ draws from the law defining
$M=\E[\delta^\star\delta^{\star\top}]$, with $\|\delta_i^\star\|\leq R$.  Suppose
$\|\widehat\delta_i-\delta_i^\star\|\leq\eta$.  With
$\widehat M=N^{-1}\sum_i\widehat\delta_i\widehat\delta_i^\top$,
\[
\|\widehat M-M\|_{\mathrm{op}}
\leq
\left\|N^{-1}\sum_i\delta_i^\star\delta_i^{\star\top}-M\right\|_{\mathrm{op}}
+2R\eta+\eta^2.
\]
The first term is controlled by a standard matrix-Bernstein bound; its exact rate depends on the
variance proxy as well as $R$, $N$, and $d_x$.
\end{Dlem}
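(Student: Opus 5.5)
The statement is Lemma D7: an operator-norm bound on the estimation error of an algorithm-induced moment. The plan is a triangle inequality through the oracle empirical moment $M^\star_N:=N^{-1}\sum_i\delta_i^\star\delta_i^{\star\top}$:
\[
\|\widehat M-M\|_{\mathrm{op}}\le\|M^\star_N-M\|_{\mathrm{op}}+\|\widehat M-M^\star_N\|_{\mathrm{op}}.
\]
The first term is exactly the sampling term in the statement. The only work is to bound the second, deterministic perturbation term by $2R\eta+\eta^2$.

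For that term I will bound each summand separately and then average. Write $e_i:=\widehat\delta_i-\delta_i^\star$, so that $\|e_i\|\le\eta$. Expanding the outer product gives
\[
\widehat\delta_i\widehat\delta_i^\top-\delta_i^\star\delta_i^{\star\top}
=\delta_i^\star e_i^\top+e_i\delta_i^{\star\top}+e_ie_i^\top .
\]
A rank-one matrix $ab^\top$ has operator norm $\|a\|\,\|b\|$. Hence the two cross terms each contribute at most $R\eta$, and the quadratic term contributes at most $\eta^2$. The operator norm is convex, so averaging over $i$ preserves the bound $2R\eta+\eta^2$.

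This argument does not use independence or the assumption $d_x\ge2$. Those hypotheses only matter for the matrix-Bernstein remark that follows the bound.

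For that remark, the summands $X_i=N^{-1}(\delta_i^\star\delta_i^{\star\top}-M)$ are i.i.d., centred, and satisfy $\|X_i\|_{\mathrm{op}}\le 2R^2/N$. The variance proxy is bounded by $\|\E[\|\delta^\star\|^2\delta^\star\delta^{\star\top}]\|_{\mathrm{op}}/N\le R^2\|M\|_{\mathrm{op}}/N$. Matrix Bernstein then yields a deviation of order
\[
\sqrt{R^2\|M\|_{\mathrm{op}}\log(2d_x)/N}+R^2\log(2d_x)/N .
\]
This is consistent with the statement's claim that the rate depends on the variance proxy as well as on $R$, $N$ and $d_x$.

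I do not expect a genuine obstacle. The one point needing care is that the bound $\|\widehat\delta_i\|\le R+\eta$ must not be used in the cross terms, since that would inflate the constant. Splitting the expansion symmetrically, with $\delta_i^\star$ always bounded by $R$, avoids this and gives exactly $2R\eta+\eta^2$.
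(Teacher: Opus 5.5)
Your proof is correct and is essentially the argument the paper leaves implicit (it states Lemma~D7 without a written proof). The displayed bound is exactly what you get from the triangle inequality through $N^{-1}\sum_i\delta_i^\star\delta_i^{\star\top}$ together with the rank-one expansion $\delta_i^\star e_i^\top+e_i\delta_i^{\star\top}+e_ie_i^\top$, and your matrix-Bernstein instantiation (range $O(R^2/N)$, variance proxy at most $R^2\|M\|_{\mathrm{op}}/N$) is a valid reading of the statement's closing remark.
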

This estimates the perturbation law induced by the algorithm.  It estimates deployment displacement
only if equality of those laws is separately assumed; PGD deltas are not automatically
label-preserving.